\icmltitlerunning{Revisiting Peng's Q($\lambda$) for Modern Reinforcement Learning}
\newtheorem{theorem}{Theorem}
\newtheorem{corollary}{Corollary}[theorem]
\newtheorem{lemma}[theorem]{Lemma}
\newtheorem{proposition}[theorem]{Proposition}
\newtheorem{assumption}[theorem]{Assumption}
\newtheorem{remark}{Remark}
\newcommand{\E}{\mathbb{E}}
\let\log\relax
\newcommand{\log}{\ln}
\newcommand{\ps}[1]{\Delta_{#1}}
\newcommand{\mapset}[2]{#2^{#1}}  % set of mappings from #1 to #2.
\newcommand{\const}{\mathbf{1}}  % constant function
\newcommand{\deltagreedy}[2]{\set{G}_{#1} (#2)}
\newcommand{\greedy}[1]{\set{G}(#1)}
\newcommand{\set}[1]{\mathbf{#1}}
\let\S\relax
\newcommand{\S}{\set{X}}
\newcommand{\A}{\set{A}}
\newcommand{\SA}{\S \times \A}
\newcommand{\Q}{\R^{\S \times \A}}
\newcommand{\V}{\R^{\S}}
\newcommand{\iplr}[1]{\pp{\gI - \gamma \lambda \gP^{#1}}}
\newcommand{\inviplr}[1]{\iplr{#1}^{-1}}
\newcommand{\abs}[1]{\mleft| #1 \mright|}
\newcommand{\pp}[1]{\mleft( #1 \mright)}
\newcommand{\bb}[1]{\mleft[ #1 \mright]}
\newcommand{\brc}[1]{\mleft\{ #1 \mright\}}
\newcommand{\lp}[2]{\mleft\| #1 \mright\|_{#2}}
\newcommand{\linf}[1]{\lp{#1}{\infty}}
\newcommand{\mb}{\middle \vert}
\def\eqref#1{equation~\ref{#1}}
\def\1{\bm{1}}
\DeclareMathAlphabet{\mathsfit}{\encodingdefault}{\sfdefault}{m}{sl}
\SetMathAlphabet{\mathsfit}{bold}{\encodingdefault}{\sfdefault}{bx}{n}
\def\gA{{\mathcal{A}}}
\def\gI{{\mathcal{I}}}
\def\gM{{\mathcal{M}}}
\def\gN{{\mathcal{N}}}
\def\gO{{\mathcal{O}}}
\def\gP{{\mathcal{P}}}
\def\gR{{\mathcal{R}}}
\def\gT{{\mathcal{T}}}
\newcommand{\R}{\mathbb{R}}
\DeclareMathOperator*{\argmax}{arg\,max}
\begin{document}

\twocolumn[
\icmltitle{Revisiting Peng's Q($\lambda$) for Modern Reinforcement Learning}

% It is OKAY to include author information, even for blind
% submissions: the style file will automatically remove it for you
% unless you've provided the [accepted] option to the icml2019
% package.

% List of affiliations: The first argument should be a (short)
% identifier you will use later to specify author affiliations
% Academic affiliations should list Department, University, City, Region, Country
% Industry affiliations should list Company, City, Region, Country

% You can specify symbols, otherwise they are numbered in order.
% Ideally, you should not use this facility. Affiliations will be numbered
% in order of appearance and this is the preferred way.
\icmlsetsymbol{equal}{*}

\begin{icmlauthorlist}
\icmlauthor{Tadashi Kozuno}{ir,equal}
\icmlauthor{Yunhao Tang}{cu,equal}
\icmlauthor{Mark Rowland}{deepmind}
\icmlauthor{R{\'e}mi Munos}{deepmindparis}
\icmlauthor{Steven Kapturowski}{deepmind}
\icmlauthor{Will Dabney}{deepmind}
\icmlauthor{Michal Valko}{deepmindparis}
\icmlauthor{David Abel}{deepmind}
\end{icmlauthorlist}

\icmlaffiliation{ir}{Independent Researcher, Okayama, Japan}
\icmlaffiliation{cu}{Columbia University, NY, USA}
\icmlaffiliation{deepmind}{DeepMind, London, UK}
\icmlaffiliation{deepmindparis}{DeepMind, Paris, France}

\icmlcorrespondingauthor{Tadashi Kozuno}{tadashi.kozuno@gmail.com}
\icmlcorrespondingauthor{Yunhao Tang}{yt2541@columbia.edu}

% You may provide any keywords that you
% find helpful for describing your paper; these are used to populate
% the "keywords" metadata in the PDF but will not be shown in the document
\icmlkeywords{Reinforcement Learning, Deep Reinforcement Learning, Peng's Q, ICML}

\vskip 0.3in
]

% this must go after the closing bracket ] following \twocolumn[ ...

% This command actually creates the footnote in the first column
% listing the affiliations and the copyright notice.
% The command takes one argument, which is text to display at the start of the footnote.
% The \icmlEqualContribution command is standard text for equal contribution.
% Remove it (just {}) if you do not need this facility.

%\printAffiliationsAndNotice{}  % leave blank if no need to mention equal contribution
\printAffiliationsAndNotice{\icmlEqualContribution} % otherwise use the standard text.

\begin{abstract}
	Off-policy multi-step reinforcement learning algorithms consist of conservative and non-conservative algorithms: the former actively cut traces, whereas the latter do not. Recently, \citet{munos2016safe} proved the convergence of conservative algorithms to an optimal Q-function. In contrast, non-conservative algorithms are thought to be unsafe and have a limited or no theoretical guarantee. Nonetheless, recent studies have shown that non-conservative algorithms empirically outperform conservative ones. Motivated by the empirical results and the lack of theory, we carry out theoretical analyses of Peng's Q($\lambda$), a representative example of non-conservative algorithms. We prove that \emph{it also converges to an optimal policy} provided that the behavior policy slowly tracks a greedy policy in a way similar to conservative policy iteration. Such a result has been conjectured to be true but has not been proven. We also experiment with Peng's Q($\lambda$) in complex continuous control tasks, confirming that Peng's Q($\lambda$) often outperforms conservative algorithms despite its simplicity. These results indicate that Peng's Q($\lambda$), which was thought to be unsafe, is a theoretically-sound and practically effective algorithm.
\end{abstract}

\section{Introduction}

Q-learning is a canonical algorithm in reinforcement learning (RL) \citep{watkins1989thesis}. It is a single-step algorithm, in that it only uses individual transitions to update value estimates. Many \emph{multi-step} generalisations of Q-learning have been proposed, which allow temporally-extended trajectories to be used in the updating of values \citep{bertsekas1996lambda_pi,watkins1989thesis,peng1994pengq_icml,peng1996pengq_journal,precup2000eligibility,harutyunyan_QLambda_2016,munos2016safe,rowland2020apaptive}, potentially leading to more efficient credit assignment. Indeed, multi-step algorithms have often been observed to outperform single-step algorithms for control in a variety of RL tasks \citep{mousavi2017applying,harb2017investigating,hessel2017rainbow,barth2018distributed,kapturowski2018recurrent,daley2019reconciling}.

However, using multi-step algorithms for RL comes with both theoretical and practical difficulties. The discrepancy between the policy that generated the data to be learnt from (the \emph{behavior policy}) and the policy being learnt about (the \emph{target policy}) can lead to complex, non-convergent behavior in these algorithms, and so must be considered carefully. There are two main approaches to deal with this discrepancy (cf. Table~\ref{table:list of algorithms}). \emph{Conservative methods} ensure convergence is guaranteed no matter what behavior policy is used, typically by truncating the trajectories used for learning. By contrast, \emph{non-conservative methods} typically do not truncate trajectories, and as a result do not come with generic convergence guarantees. Nevertheless, non-conservative methods have consistently been found to outperform conservative methods in practical large-scale applications.
Thus, there is a clear gap in our understanding about non-conservative methods; why do they so work well in practice, but lack the guarantees of their conservative counterparts?

\begin{table*}[t!]
    \centering
    \caption{List of off-policy multi-step algorithms for control. Harutyunyan's Q($\lambda$), Tree-backup, Watkins' Q($\lambda$), and Peng's Q($\lambda$) are abbreviated as HQL, TBL, WQL, and PQL, respectively (cf.\,Section~\ref{section:off-policy multi-step operators} for details of the algorithms). Conservative column indicates if an algorithm is conservative or not (cf.\,Section~\ref{section:conservative and non-conservative algorithms}). Convergence column indicates the convergence of algorithms to any fixed point, whereas Convergence to $Q^*$ column indicates the convergence of algorithms to the optimal Q-function $Q^*$. \textcolor{red}{\checkmark} indicates new results in the present paper. PQL converges to a biased fixed-point when the behavior policy is fixed. It converges to $Q^*$ when a behavior policy is updated appropriately. (An exact condition is given in Section~\ref{section:theoretical analysis}.) \newline}
    \begin{sc}
    \begin{tabular}{l|c|c|c}\toprule[1.5pt]
        \bf Algorithm & \bf Conservative & \bf Convergence & \bf Convergence to $Q^*$ \\\midrule
        $\alpha$-trace \citep{rowland2020apaptive} & No & ? & ?
        \\
        C-trace \citep{rowland2020apaptive} & No & ? & ?
        \\
        HQL \citep{harutyunyan_QLambda_2016} & No & \checkmark (with small $\lambda$) & \checkmark (with small $\lambda$)
        \\
        Retrace \citep{munos2016safe} & Yes & \checkmark & \checkmark
        \\
        TBL \citep{precup2000eligibility} & Yes & \checkmark & \checkmark
        \\ 
        Uncorrected $n$-step Return & No & ? & ?
        \\
        WQL \citep{watkins1989thesis} & Yes & \checkmark & \checkmark
        \\
        PQL \cite{peng1994pengq_icml} & No & \textcolor{red}{\checkmark} (biased) & \textcolor{red}{\checkmark} (cf. caption)
        \\
        \bottomrule[1.46pt]
    \end{tabular}
    \end{sc}
    \label{table:list of algorithms}
\end{table*}

In this paper, we address this question by studying a representative non-conservative algorithm, Peng's Q($\lambda$) \citep[PQL]{peng1994pengq_icml,peng1996pengq_journal}, in more realistic learning settings. Our results show that while PQL does not learn optimal policies under arbitrary behavior policies, a convergence guarantee can be recovered if the behavior policy tracks the target policy, as is often the case in practice. This represents a closing of the gap between the strong empirical performance of non-conservative methods and their previous lack of theoretical guarantees. 

More concretely, our primary theoretical contributions bring new understanding to PQL, and are summarized as follows:
\begin{itemize}[leftmargin=0.4cm,topsep=0pt,itemsep=0pt]
    \item A proof that PQL with a {\em fixed} behavior policy converges to a "biased" (i.e.,~different from $Q^*$) fixed-point.
    \item Analysis of the quality of the resulting policy.
    \item Convergence of PQL to an optimal policy when using appropriate behavior policy updates.
    \item Error propagation analysis when using approximations.
\end{itemize}
In addition to these theoretical insights, we validate the empirical performance of PQL through extensive experiments.
Our focus is on continuous control tasks, where one encounters many technical challenges that do not exist in discrete control tasks (cf. Section~\ref{sec:deeprl}). They are also accessible to a wider range of readers. We show that PQL can be easily extended to popular off-policy actor-critic algorithms such as DDPG, TD3 and SAC \citep{lillicrap2015continuous,fujimoto2018addressing,haarnoja2018soft}. Over a large subset of tasks, PQL consistently outperforms other conservative and non-conservative baseline alternatives.

\section{Notation and Definitions}
\label{section:notation}

For a finite set $\set{A}$ and an arbitrary set $\set{B}$, we let $\ps{\set{A}}$ and $\mapset{\set{A}}{\set{B}}$ be the probability simplex over $\set{A}$ and the set of all mappings from $\set{A}$ to $\set{B}$, respectively.

\paragraph{Markov Decision Processes (MDP).}
We consider an MDP defined by a tuple $\langle \S, \A, \gP, \gP_0, \gR, \gamma \rangle$, where $\S$ is the finite state space, $\A$ the finite action space, $\gP: \SA \rightarrow \ps{\S}$
the state transition probability kernel, $\gP_0 \in \ps{\S}$ the initial state distribution, $\gR$ the (conditional) reward distribution,
and $\gamma \in [0, 1)$ the discount factor \cite{puterman1994mdp}. We let $r \in \Q$ be a reward function defined by $r(x, a) := \int r' \gR(\mathrm{d}r' | x, a)$.

\paragraph{On the Finiteness of the State and Action Spaces.}
While we assume both $\S$ and $\A$ to be finite, most of theoretical results in the paper hold in continuous state spaces with appropriate measure-theoretic considerations.
The finiteness assumption on the action space is necessary to guarantee the existence of the optimal policy \citep{puterman1994mdp}. In Appendix~\ref{appendix:extension to continuous action space}, we discuss assumptions necessary to extend our theoretical results to continuous action spaces.

\paragraph{Policy and Value Functions.}
Suppose a policy $\pi: \S \rightarrow \ps{\A}$. We consider the standard RL setup where an agent interacts with an environment, generating a sequence of state-action-reward tuples $(X_t, A_t, R_t)_{t \geq 0}$ with $A_t$ being an action sampled from some policy; throughout, we denote random variables by upper cases. Define~$G=\sum_{t=0}^\infty \gamma^t R_t$ as the cumulative return.
The state-value and Q-functions are defined by $V^{\pi} (x) := \E \bb{ G \mb X_0=x, \pi }$ and $Q^{\pi} (x, a) := \E \bb{ G \mb X_0=x, A_0=a, \pi }$, respectively, where the conditioning by $\pi$ means $A_t \sim \pi \pp{\cdot \mb X_t}$.

\paragraph{Evaluation and Control.}
Two key tasks in RL are evaluation and control. The problem of evaluation is to learn the Q-function of a fixed policy. The aim in the control setting is to learn an optimal policy $\pi_*$ defined as to satisfy $V^{\pi_*} := V^* \geq V^{\pi},\forall \pi$ (the inequality is point-wise, i.e., $V^* (x) \geq V^{\pi} (x)$ for all $x \in \S$). Similarly to $V^*$, we let $Q^*$ denote the optimal Q-function $Q^{\pi_*}$. As a greedy policy with respect to $Q^*$ is optimal, it suffices to learn $Q^*$. In this paper, we are particularly interested in the off-policy control setting, where an agent collects data with a behavior policy $\mu$, which is not necessarily the agent's current policy $\pi$. On-policy settings are a special case where $\pi=\mu$.

\section{Multi-step RL Algorithms and Operators}
\label{section:operators}

Operators play a crucial role in RL since all value-based RL algorithms (exactly or approximately) update a Q-function based on the recursion $Q_{k+1} := \gO_k Q_k$, where $\gO_k: \Q \rightarrow \Q$ is an operator that characterizes each algorithm. In this section, we review multi-step RL algorithms and their operators.

\paragraph{Basic Operators.} Assume we have a fixed policy $\pi$. With an abuse of notations, we define operators $\pi: \Q \rightarrow \V$ and $\gP: \V \rightarrow \Q$ by
\begin{gather*}
    \pp{\pi Q} (x) := \textstyle\sum_{a \in \A} \pi \pp{ a \mb x } Q \pp{x, a} \text{, and}
    \\
    \pp{\gP V} (x, a) := \textstyle\sum_{y \in \S} \gP \pp{y \mb x, a}  V \pp{y}
\end{gather*}
for any $Q \in \Q$ and $V \in \V$, respectively (hereafter, we omit "for any..." in definitions of operators for brevity). We define their composite $\gP^\pi := \gP \pi$. As a result, the Bellman operator $\gT^{\pi}: \Q \rightarrow \Q$ is defined by $\gT^{\pi} Q := r + \gamma \gP^\pi Q$.
For a function $Q \in \Q$, we let $\greedy{Q}$ be the set of all greedy policies\footnote{Note that there may be multiple greedy policies due to ties.} with respect to $Q$. The Bellman optimality operator $\gT$ is defined by $\gT Q = \gT^{\pi_Q} Q$ with $\pi_Q \in \greedy{Q}$\footnote{Note that this definition is independent of the choice of $\pi_Q$.}.
Q-learning approximates the value iteration (VI) updates $Q_{k+1} := \gT Q_k$. 

\subsection{On-policy Multi-step Operators for Control}

We first introduce on-policy multi-step operators for control.

\paragraph{Modified Policy Iteration (MPI).} MPI uses the recursion $Q_{k+1} := \gT_n^{\pi_k} Q_k$ for Q-function updates \citep{puterman1978mpi}, where $\pi_k \in \greedy{Q_k}$. The $n$-step return operator $\gT_n^\pi: \Q \rightarrow \Q$ is defined by $\gT_n^\pi Q := \pp{\gT^\pi}^n Q$.

\paragraph{$\lambda$-Policy Iteration ($\lambda$-PI).} $\lambda$-PI uses the recursion $Q_{k+1} := \gT_\lambda^{\pi_k} Q_k$ for Q-function updates \cite{bertsekas1996lambda_pi}, where $\pi_k \in \greedy{Q_k}$. The $\lambda$-return operator $\gT_\lambda^\pi: \Q \rightarrow \Q$ is defined as
\begin{align*}
    \gT_\lambda^\pi Q
    &:= (1-\lambda) \textstyle\sum_{n=1}^\infty \lambda^{n-1} \gT_n^\pi Q
    \\
    &= Q + \pp{\gI - \gamma \lambda \gP^\pi }^{-1} \pp{ \gT^\pi Q - Q },
\end{align*}
where $\pp{\gI - \gamma \lambda \gP^\pi }^{-1} := \sum_{t=0}^\infty \pp{ \gamma \lambda \gP^\pi }^t$, and $\lambda \in [0, 1]$.

\subsection{Off-policy Multi-step Operators for Control}
\label{section:off-policy multi-step operators}

Next, we explain off-policy multi-step operators for control. We note that on-policy algorithms in the last subsection can be converted to off-policy versions by using importance sampling \citep{precup2000eligibility,casella2002statistical}.

\paragraph{Uncorrected $n$-step Return.} For a sequence of behavior policies $(\mu)_{k\geq0}$, the uncorrected $n$-step return algorithm uses the recursion $Q_{k+1} := \gN_n^{\mu_k, \pi_k} Q_k$ for Q-function updates \citep{hessel2017rainbow,kapturowski2018recurrent}, where $\pi_k \in \greedy{Q_k}$. Here, the uncorrected $n$-step return operator $\gN_n^{\mu, \pi}$ is defined for any policies $\pi$ and $\mu$ by
\begin{align*}
    \gN_n^{\mu,\pi} Q := \pp{\gT^\mu}^{n-1} \gT^\pi Q.
\end{align*}

\paragraph{Peng's Q($\lambda$) (PQL)}

For a sequence of behavior policies $(\mu)_{k\geq0}$, PQL uses the recursion $Q_{k+1} := \gN_\lambda^{\mu_k, \pi_k} Q_k$ for Q-function updates \citep{peng1994pengq_icml,peng1996pengq_journal}, where $\pi_k \in \greedy{Q_k}$. Here, the PQL operator $\gN_\lambda^{\mu, \pi}$ is defined for any policies $\pi$ and $\mu$ by
\begin{align}\label{eq:PQL operator}
    \gN_\lambda^{\mu,\pi} Q := (1-\lambda) \textstyle\sum_{n=1}^\infty \lambda^{n-1} \gN_n^{\mu, \pi} Q\,,
\end{align}
where $\lambda \in [0, 1]$. Note that \emph{PQL is a generalization of $\lambda$-PI} because it reduces to $\lambda$-PI when $\mu_k = \pi_k$. In other words, PQL is $\lambda$-PI with one additional degree of freedom in $\mu_k$.

\paragraph{General Retrace.}
We next introduce a general version of the Retrace operator \cite{munos2016safe}, from which other operators are obtained as special cases.

For a behavior policy $\mu$ and a target policy $\pi$, we let $\gP^{c \mu}: \Q \rightarrow \Q$ be an operator defined by
\begin{align*}
    \pp{ \gP^{c \mu} Q } (x, a) := \sum_{(y, b) \in \SA} \gP (y | x, a) c (y, b) \mu (b | y) Q (y, b),
\end{align*}
where $c$ is an arbitrary non-negative function over $\SA$ whose choice depends on an algorithm. Note that for any $n$, $( (\gP^{c \mu})^n Q ) (x, a)$ can be estimated off-policy with data collected under the behavior policy $\mu$.

A general Retrace operator $\gR_{\lambda}^{c \mu, \pi}: \Q \rightarrow \Q$ is obtained by replacing $\gP^{\pi}$ of $\pp{\gI - \gamma \lambda \gP^\pi }^{-1}$ in the $\lambda$-return operator $\gT_\lambda^\pi$ with $\gP^{c \mu}$. Concretely,
\begin{align*}
    \gR_{\lambda}^{c\mu, \pi} Q := Q + \pp{\gI - \gamma \lambda \gP^{c\mu} }^{-1} \pp{ \gT^\pi Q - Q}.
\end{align*}
The general Retrace algorithm updates its Q-function by $Q_{k+1} := \gR_{\lambda}^{c_k \mu_k, \pi_k} Q_k$, where $(c_k)_{k \geq 0}$ is a sequence of arbitrary non-negative functions over $\SA$, $(\mu_k)_{k \geq 0}$ is an arbitrary sequence of behavior policies, and $(\pi_k)_{k \geq 0}$ is a sequence of target policies that depends on an algorithm. Given the choices of $c_k$ and $\pi_k$ in Table~\ref{table:off-policy multi-step algorithms}, we recover a few known algorithms \citep{watkins1989thesis,peng1994pengq_icml,peng1996pengq_journal,precup2000eligibility,harutyunyan_QLambda_2016,munos2016safe,rowland2020apaptive}.

The general Retrace algorithm is off-policy as $(\gR_{\lambda}^{c_k \mu_k, \pi_k} Q_k) (x_0, a_0)$ can be estimated off-policy by the following estimator given a trajectory $(x_t, a_t, r_t)_{t\geq0}$ collected under $\mu_k$:
\begin{align}
    Q_k (x_0, a_0) + \sum_{t=0}^\infty \Big( \prod_{u=1}^t c (x_u, a_u)\Big) \gamma^t \lambda^t \delta_t,\label{eq:retrace estimator}
\end{align}
where $\prod_{u=1}^0 c (x_u, a_u) := 1$, and $\delta_t$ is the TD error $r_t + \gamma (\pi_k Q_k) (x_{t+1}) - Q_k (x_t, a_t)$ at time step $t$.

\begin{table}[t!]
    \centering
    \caption{Choices of $c_k$ and $\pi_k$ in off-policy multi-step operators for control. See Section~\ref{section:off-policy multi-step operators} for details. The same abbreviations as those in Table~\ref{table:list of algorithms} are used. For brevity, we defined $\pi_{Q_k} \in \greedy{Q_k}$. We denote $\pi_k (a|x) / \mu_k (a|x)$ by $\rho_k (x, a)$ and $(1-\alpha) + \alpha \pi_{Q_k} (a|x) / \mu_k (a|x)$ by $\widetilde{\rho}_k (x, a)$. $\alpha$-trace and C-trace look the same in the table, but C-trace adaptively changes $\alpha$ so that the trace length matches to a target trace length. \newline}
    \begin{sc}
    \begin{tabular}{c|c|c}\toprule[1.5pt]
        \bf Algorithm & \bf $c_k$ & \bf $\pi_k$ \\\midrule
        $\alpha$-trace & $\min \brc{1, \widetilde{\rho}_k}$ & $\alpha \pi_{Q_k} + (1-\alpha) \mu_k$
        \\
        C-trace & $\min \brc{1, \widetilde{\rho}_k}$ & $\alpha \pi_{Q_k} + (1-\alpha) \mu_k$
        \\
        HQL & $1$ & $\pi_{Q_k}$
        \\ 
        Retrace & $\min \brc{1, \rho_k}$ & Any
        \\
        TBL & $\pi_k$ & Any
        \\
        WQL & $\min \brc{1, \rho_k}$ & $\pi_{Q_k}$
        \\
        PQL & $1$ & $\lambda \pi_{Q_k} + (1-\lambda) \mu_k$
        \\ \bottomrule[1.46pt]
    \end{tabular}
    \end{sc}
    \label{table:off-policy multi-step algorithms}
\end{table}

\section{Conservative and Non-conservative Multi-step RL Algorithms}
\label{section:conservative and non-conservative algorithms}

\citet{munos2016safe} showed that the following conditions suffice for the convergence of the general Retrace to $Q^*$:
\begin{enumerate}[leftmargin=0.5cm,itemsep=0pt,topsep=0pt]
    \item $c_k (x, a) \in [0, \pi_k(a|x) / \mu_k (a|x)]$ for any $k$ and $(x, a) \in \SA$. \item $\pi_k$ satisfies some greediness condition, such as $\varepsilon$-greediness with decreasing $\varepsilon$ as $k$ increases; cf.\,\citet{munos2016safe} for further details.
\end{enumerate}
We call algorithms that satisfy the first condition  \emph{conservative} algorithms for reasons to be explained below. Otherwise, we call the algorithms \emph{non-conservative}. See Table~\ref{table:list of algorithms} for the classification of algorithms. The uncorrected $n$-step return algorithm can also be viewed as a non-conservative algorithm with \emph{non-Markovian traces} that depend also on the past.

\paragraph{Conservativeness, Theoretical Guarantees, and Empirical Performance of Algorithms.} Recall that in the general Retrace update estimator (\ref{eq:retrace estimator}), the effect of the TD error $\delta_t$ is attenuated by $\prod_{u=1}^t c (x_u, a_u)$ in addition to $\gamma^t \lambda^t$. Hence, from the backward view \citep{sutton_RLIntro_1998}, the first condition intuitively requires that \emph{the trace must be cut} if a sub-trajectory $(x_0, a_0, \ldots, x_t, a_t)$ is unlikely under $\pi_k$ relative to $\mu_k$. As a result, conservative algorithms only carry out \emph{safe} updates to Q-functions.

As shown in \citep{munos2016safe}, such conservative updates enable a convergence guarantee of general conservative algorithms. However, \citet{rowland2020apaptive} observed that it often results in frequent trace cuts, and conservative algorithms usually benefit less from multi-step updates.

In contrast, non-conservative algorithms accumulate TD errors without carefully cutting traces. As a result, non-conservative algorithms might perform poorly. As we show later (Proposition~\ref{proposition:HQL oscillation}), it is the case at least for Harutyunyan's Q($\lambda$) (\citet{harutyunyan_QLambda_2016}, HQL), an instance of non-conservative algorithms, when a behavior policy is fixed. 
Nonetheless, non-conservative algorithms are known to perform well in practice \cite{hessel2017rainbow,kapturowski2018recurrent,daley2019reconciling}. To understand its reason, it is important to characterize what kind of updates to the behavior policy entail the convergence of the overall algorithm. In the following sections, we take a step forward along this direction. We establish the convergence guarantee of PQL under two setups: \textbf{(1)} when the behavior policy is fixed; \textbf{(2)} when the behavior policy is updated in an appropriate way.

\section{Theoretical Analysis of Peng's Q($\lambda$)}
\label{section:theoretical analysis}

In this section, we analyze Peng's Q($\lambda$). We start with the \emph{exact case} where there is no update errors in value functions. Later, we will consider the \emph{approximate case} when accounting for update errors. The following lemma is particularly useful in theoretical analyses as well as practical implementations.
\begin{lemma}[\citealp{harutyunyan_QLambda_2016}]\label{lemma:different forms of PQL}
The PQL operator can be rewritten in the following forms:
\begin{align*}
    \gN_{\lambda}^{\mu, \pi} Q
    &= Q + \pp{\gI - \gamma \lambda \gP^{\mu} }^{-1} \pp{ \gT^{\lambda \mu + (1-\lambda) \pi} Q - Q}
    \\
    &= \inviplr{\mu} \pp{ r + \gamma (1-\lambda) \gP^{\pi} Q }\, .
\end{align*}
\end{lemma}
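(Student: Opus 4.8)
The plan is to prove the second identity first, by directly resumming the defining series $\gN_\lambda^{\mu,\pi}Q = (1-\lambda)\sum_{n=1}^\infty \lambda^{n-1}\gN_n^{\mu,\pi}Q$, and then to deduce the first identity from the second by a short algebraic manipulation. First I would exploit the affine structure of the one-step operator $\gT^\mu Q = r + \gamma\gP^\mu Q$. Setting $A := \gamma\gP^\mu$ and $w := \gT^\pi Q = r + \gamma\gP^\pi Q$, a one-line induction on $n$ shows $\gN_n^{\mu,\pi}Q = \pp{\gT^\mu}^{n-1}w = \sum_{t=0}^{n-2}A^t r + A^{n-1}w$ for every $n\ge1$, with the empty sum read as $0$ at $n=1$. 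Substituting this into the series splits $\gN_\lambda^{\mu,\pi}Q$ into two pieces: $(1-\lambda)\sum_{n\ge1}\lambda^{n-1}A^{n-1}w$ and $(1-\lambda)\sum_{n\ge1}\lambda^{n-1}\sum_{t=0}^{n-2}A^t r$.

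Before rearranging I would record the convergence fact that legitimizes the manipulations. Since $\gP^\mu$ is an average over actions of a stochastic kernel we have $\linf{\gP^\mu}\le1$, and because $\gamma\lambda\le\gamma<1$ the Neumann series $\inviplr{\mu} = \sum_{t\ge0}\pp{\gamma\lambda\gP^\mu}^t$ converges absolutely in operator norm. This is the same series that defines $\inviplr{\mu}$ in the $\lambda$-return operator, and its absolute convergence is exactly what justifies interchanging the order of summation below. The first piece is then a geometric series equal to $(1-\lambda)\inviplr{\mu}w$. For the second piece I would swap the $n$- and $t$-sums, so that for a fixed $t$ the admissible indices are $n\ge t+2$; the inner sum $\sum_{n\ge t+2}\lambda^{n-1} = \lambda^{t+1}/(1-\lambda)$ collapses the double sum to $\lambda\sum_{t\ge0}\pp{\lambda A}^t r = \lambda\inviplr{\mu}r$.

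Adding the two pieces and using $\lambda r + (1-\lambda)w = r + \gamma(1-\lambda)\gP^\pi Q$ yields $\gN_\lambda^{\mu,\pi}Q = \inviplr{\mu}\pp{r + \gamma(1-\lambda)\gP^\pi Q}$, which is the second identity. To recover the first, I would use linearity of the policy-application map to expand $\gT^{\lambda\mu+(1-\lambda)\pi}Q = r + \gamma\lambda\gP^\mu Q + \gamma(1-\lambda)\gP^\pi Q$, together with the trivial rewriting $Q = \inviplr{\mu}\,\iplr{\mu}Q = \inviplr{\mu}\pp{Q - \gamma\lambda\gP^\mu Q}$. Substituting both into $Q + \inviplr{\mu}\pp{\gT^{\lambda\mu+(1-\lambda)\pi}Q - Q}$ makes the $\pm\gamma\lambda\gP^\mu Q$ and $\pm Q$ terms cancel inside the bracket, again leaving $\inviplr{\mu}\pp{r + \gamma(1-\lambda)\gP^\pi Q}$, so the two forms agree.

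The main obstacle is purely the bookkeeping in the double sum: correctly handling the empty sum at $n=1$ and justifying the interchange of the $n$- and $t$-summations. Both are controlled by the contraction estimate $\gamma\lambda<1$ established above, after which the remaining algebra (geometric sums and the cancellation producing the first identity) is routine.
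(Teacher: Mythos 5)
Your proposal is correct and follows essentially the same route as the paper's proof: expand each $n$-step term of the defining series into its reward and bootstrap components, resum to obtain $\gN_\lambda^{\mu,\pi}Q = \inviplr{\mu}\pp{r + \gamma(1-\lambda)\gP^\pi Q}$, and then recover the first form by the cancellation $Q = \inviplr{\mu}\iplr{\mu}Q$. The only difference is cosmetic bookkeeping: you handle the double sum over rewards by interchanging the order of summation and evaluating a geometric series, whereas the paper subtracts the $\lambda$-shifted series (a telescoping argument), and both yield $\lambda$ times the Neumann series applied to $r$.
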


\begin{proof}
This is proven in \citep{harutyunyan_QLambda_2016}, but we provided a proof in Appendix~\ref{appendix:proof of different forms of PQL} for completeness.
\end{proof}

\subsection{Exact Case with a Fixed Behavior Policy}
\label{section:analysis of PQL with a fixed behavior policy}

We now analyze PQL with a fixed behavior policy $\mu$. While the behavior policy is not fixed in a practical situation, the analysis shows a trade-off between bias and convergence rate. This trade-off is analogous to the bias-contraction-rate trade-off of off-policy multi-step algorithms for policy evaluation \citep{rowland2020apaptive} and sheds some light on important properties of PQL.

Concretely, we analyze the following algorithm:
\begin{align}
    \pi_k \in \greedy{Q_k} \text{ and } Q_{k+1} := \gN_{\lambda}^{\mu, \pi_k} Q_k. \label{eq:PQL with fixed behavior}
\end{align}
\citet{harutyunyan_QLambda_2016} has proven that a fixed point of the PQL operator coincides with the unique fixed point of $\lambda \gT^\mu + (1 - \lambda) \gT$, which is guaranteed to exist since $\lambda \gT^\mu + (1 - \lambda) \gT$ is a contraction with modulus $\gamma$ under $L^\infty$-norm (see Appendix~\ref{appendix:preliminaries for theory} for details about the contraction and other notions).

The existence of a fixed point does not imply the convergence of PQL, and we need to show that the distance between $Q_k$ and the fixed point is decreasing. With the following theorem, we show that PQL does converge.
\begin{theorem}\label{theorem:PQL convergence with fixed mu}
Let $\pi_\dagger$ be a policy such that $Q^{\lambda\mu + (1-\lambda)\pi_\dagger} \geq Q^{\lambda\mu + (1-\lambda) \pi}$ for any policy $\pi$, where the inequality is point-wise. 
Then, $\pi_\dagger \in \greedy{Q^{\lambda\mu + (1-\lambda)\pi_\dagger}}$, and $Q_k$ of PQL (\ref{eq:PQL with fixed behavior}) uniformly converges to $Q^{\lambda\mu + (1-\lambda)\pi_\dagger}$ with the rate $\beta^k$, where $\beta := \gamma (1-\lambda) / (1-\gamma\lambda)$.
\end{theorem}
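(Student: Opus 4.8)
The plan is to reduce the iteration~\eqref{eq:PQL with fixed behavior} to value iteration in an auxiliary, genuinely discounted MDP, so that all three assertions become textbook facts about that MDP. Write $\sigma_\pi := \lambda\mu + (1-\lambda)\pi$ for brevity. Since $\gP^{\sigma_\pi} = \lambda\gP^\mu + (1-\lambda)\gP^\pi$, the Bellman equation $Q^{\sigma_\pi} = r + \gamma\gP^{\sigma_\pi} Q^{\sigma_\pi}$ rearranges to $\iplr{\mu} Q^{\sigma_\pi} = r + \gamma(1-\lambda)\gP^\pi Q^{\sigma_\pi}$, that is,
\[
    Q^{\sigma_\pi} = \widehat r + \beta\,\overline{\gP}^\pi Q^{\sigma_\pi},
    \qquad
    \widehat r := \inviplr{\mu} r,
    \quad
    \overline{\gP}^\pi := (1-\gamma\lambda)\,\inviplr{\mu}\gP^\pi,
\]
which is exactly the fixed-point equation of the second form in Lemma~\ref{lemma:different forms of PQL}.

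First I would check that each $\overline{\gP}^\pi = \overline{\gP}\pi$, with $\overline{\gP} := (1-\gamma\lambda)\inviplr{\mu}\gP$, is a stochastic matrix, so that $\overline{\gP}$ is a bona fide transition kernel from $\SA$ to $\S$. Nonnegativity follows from the Neumann series $\inviplr{\mu} = \sum_{t\ge0}(\gamma\lambda\gP^\mu)^t$; for the row sums, $\gP^\mu\const = \const$ gives $\inviplr{\mu}\const = (1-\gamma\lambda)^{-1}\const$, and since $\gP^\pi\const = \const$ we obtain $\overline{\gP}^\pi\const = \const$. Because also $\beta = \gamma(1-\lambda)/(1-\gamma\lambda) \in [0,1)$ (the bound $\beta<1$ is just $\gamma<1$), the tuple $\widehat M := \langle \S, \A, \overline{\gP}, \widehat r, \beta \rangle$ is a well-defined discounted MDP. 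Comparing Bellman equations, the display above states precisely that $Q^{\sigma_\pi}$ is the Q-function of policy $\pi$ in $\widehat M$; by uniqueness of the fixed point of a $\beta$-contraction, the family $\{Q^{\sigma_\pi}\}_\pi$ coincides with the family of policy Q-functions of $\widehat M$.

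Given this identification, the first two assertions are immediate from classical MDP theory applied to $\widehat M$ \citep{puterman1994mdp}: finiteness of $\A$ guarantees a policy $\pi_\dagger$ that is optimal in all states simultaneously, so its Q-function dominates every other pointwise, which under the identification reads $Q^{\sigma_{\pi_\dagger}} \ge Q^{\sigma_\pi}$ for all $\pi$. As an optimal policy of $\widehat M$ is greedy with respect to its own optimal Q-function, and that function equals $Q^{\sigma_{\pi_\dagger}}$, we get $\pi_\dagger \in \greedy{Q^{\sigma_{\pi_\dagger}}}$.

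Finally, for convergence I would rewrite a PQL step through Lemma~\ref{lemma:different forms of PQL} as $Q_{k+1} = \widehat r + \beta\,\overline{\gP}^{\pi_k} Q_k$. Since $\pi_k \in \greedy{Q_k}$ and $\overline{\gP}$ has nonnegative entries, $\overline{\gP}^{\pi_k} Q_k = \overline{\gP}(\pi_k Q_k)$ attains the pointwise maximum over policies, so the update is exactly $Q_{k+1} = \widehat{\gT} Q_k$ with $\widehat{\gT}$ the Bellman optimality operator of $\widehat M$. That operator is a $\beta$-contraction in $\linf{\cdot}$ (stochasticity of $\overline{\gP}^\pi$ and nonexpansiveness of the maximum), whose unique fixed point is $\widehat Q^\ast = Q^{\sigma_{\pi_\dagger}}$, so value iteration yields $\linf{Q_k - Q^{\sigma_{\pi_\dagger}}} \le \beta^k \linf{Q_0 - Q^{\sigma_{\pi_\dagger}}}$, i.e.\ uniform convergence at rate $\beta^k$. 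The only genuine work is the kernel verification of the second paragraph; once $\overline{\gP}$ is known to be stochastic with effective discount $\beta$, every remaining step transfers a classical dynamic-programming result to $\widehat M$.
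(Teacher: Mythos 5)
Your proof is correct, and it takes a genuinely different route from the paper's. The paper argues directly on the operator level: using Lemma~\ref{lemma:different forms of PQL} it shows that the greedy PQL operator $\gN_{\lambda}^{\mu}$ (defined by $\gN_{\lambda}^{\mu} Q := \gN_{\lambda}^{\mu,\pi_Q} Q$, $\pi_Q \in \greedy{Q}$) satisfies $\gN_{\lambda}^{\mu} Q - \gN_{\lambda}^{\mu} Q' = (1-\lambda)\inviplr{\mu}\pp{\gT Q - \gT Q'}$, deduces the $\beta$-contraction from monotonicity of $\inviplr{\mu}$ and $\inviplr{\mu}\const = \const/(1-\gamma\lambda)$, invokes Banach's fixed-point theorem, and then identifies the fixed point in a second stage: it is the fixed point of $\lambda\gT^{\mu} + (1-\lambda)\gT$ (Harutyunyan et al.), hence of $\gT^{\lambda\mu+(1-\lambda)\pi_{\mathrm{fixed}}}$, hence equal to $Q^{\lambda\mu+(1-\lambda)\pi_{\mathrm{fixed}}}$, and a separate monotonicity induction shows $Q_{\mathrm{fixed}} \geq Q^{\lambda\mu+(1-\lambda)\pi}$ for all $\pi$, so $\pi_{\mathrm{fixed}}$ has the defining property of $\pi_\dagger$. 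You instead package the same algebra (your kernel verification is exactly the paper's monotonicity-plus-row-sum computation) into an equivalence of problems: PQL with fixed $\mu$ \emph{is} value iteration on the auxiliary MDP $\widehat M = \langle \S, \A, (1-\gamma\lambda)\inviplr{\mu}\gP, \inviplr{\mu}r, \beta\rangle$, and the mixture Q-functions $Q^{\lambda\mu+(1-\lambda)\pi}$ are precisely the policy Q-functions of $\widehat M$. This buys several things at once: the existence of $\pi_\dagger$ (which the theorem statement implicitly presupposes and the paper only recovers implicitly through $\pi_{\mathrm{fixed}}$), its greediness, and the rate $\beta^k$ all become textbook facts about finite discounted MDPs, and the construction makes the bias transparent --- the fixed point is optimal for $\widehat M$, not for the original MDP. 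What the paper's more pedestrian route buys in exchange is reusability: the explicit operator bounds (monotonicity of $\gA_k = \gamma(1-\lambda)\gP^{\pi_k}\inviplr{\mu}$, etc.) are exactly what get recycled in the error-propagation analyses of Theorems~\ref{theorem:error propagation of PQL with a fixed behavior policy} and~\ref{theorem:error propagation of PQL with behavior policy updates}, where errors $\varepsilon_k$ and a drifting behavior policy break the clean reduction to a single auxiliary MDP. One cosmetic point: since the theorem hands you $\pi_\dagger$ as a hypothesis, you should phrase the middle step as "the hypothesis $Q^{\lambda\mu+(1-\lambda)\pi_\dagger} \geq Q^{\lambda\mu+(1-\lambda)\pi}$ translates under the identification to $\widehat Q^{\pi_\dagger} \geq \widehat Q^{\pi}$ for all $\pi$, so $\pi_\dagger$ is optimal in $\widehat M$" rather than re-deriving existence; the substance is unaffected.
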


\begin{proof}
See Appendix~\ref{appendix:proof of PQL convergence with fixed mu}.
\end{proof}

We build intuitions about the bias-convergence-rate trade-off implied in Theorem~\ref{theorem:PQL convergence with fixed mu}.
When $\lambda$ increases, the fixed point is $Q^{\lambda \mu + (1-\lambda)\pi_\dagger}$, whose bias against $Q^*$ arguably increases; at the same time, the contraction rate $\beta$ decreases, so that the contraction is faster.

\begin{remark}
In Section~7.6 of \citep{sutton_RLIntro_1998}, it is conjectured that PQL with a fixed policy would converge to a hybrid of $Q^\mu$ and $Q^*$. Theorem~\ref{theorem:PQL convergence with fixed mu} gives an answer to this conjecture and shows that \citet{sutton_RLIntro_1998}'s conjecture is not necessarily true. Rather, the theorem shows that PQL converges to the Q-function of the best policy among policies of the form $\lambda\mu + (1-\lambda)\pi$.
\end{remark}

\subsection{Approximate Case with a Fixed Behavior Policy}
\label{sec:approximate-fixed-mu}

In practice, value-update errors are inevitable due to e.g., finite-sample estimations and function approximation errors. In this subsection, we provide the error propagation analysis of PQL with a fixed behavior policy. As we will see, the analysis depicts a trade-off between fixed point bias and error tolerance.

We analyze the following algorithm:
\begin{align*}
    \pi_k \in \greedy{Q_k}
    \text{ and }
    Q_{k+1} := \gN_{\lambda}^{\mu, \pi_k} Q_k + \varepsilon_k\, ,
\end{align*}
where $\varepsilon_k \in \Q$ denotes the value-update error at iteration $k$. For simplicity, we use $\rho_k := \lambda \mu + (1-\lambda) \pi_k$ and $\rho_\dagger:= \lambda \mu + (1-\lambda) \pi_\dagger$ in this subsection.

In Section~\ref{section:analysis of PQL with a fixed behavior policy}, we showed $\lim_{k\rightarrow\infty} Q_k = Q^{\lambda \mu + (1-\lambda) \pi_\dagger}$ when $\varepsilon_k (x, a) = 0$ at every $(x, a) \in \SA$, and $\pi_\dagger \in \greedy{Q^{\lambda \mu + (1-\lambda) \pi_\dagger}}$. Therefore, $\pi_k$ is an approximation to $\pi_\dagger$, and thus it is natural to define $V^{\rho_\dagger} - V^{\rho_k}$ as the loss of using $\pi_k$ rather than $\pi_\dagger$. The following theorem provides an upper bound for the loss.
\begin{theorem}\label{theorem:error propagation of PQL with a fixed behavior policy}
For any $K$, the following holds:
\begin{align*}
  \linf{V^{\rho_\dagger} - V^{\rho_K}} \leq O(\beta^K) + \frac{2}{1-\gamma} \sum_{k=0}^{K-1} \beta^{K-k-1} \linf{\varepsilon_k}\, ,
\end{align*}
where $\linf{\cdot}$ is the $L_\infty$-norm defined for any real-valued function $f$ by $\linf{f} := \max_v \abs{ f(v) }$.
\end{theorem}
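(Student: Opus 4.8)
The plan is to split the bound into two independent pieces: first control the $L_\infty$ distance between the iterate $Q_K$ and the target fixed point $\widehat{Q} := Q^{\rho_\dagger}$, and then convert this value-function error into the policy loss $V^{\rho_\dagger} - V^{\rho_K}$ through a performance-difference argument. By Theorem~\ref{theorem:PQL convergence with fixed mu}, $\widehat{Q} = Q^{\rho_\dagger}$ is exactly the fixed point reached in the exact case and $\pi_\dagger \in \greedy{\widehat{Q}}$; I also recall from Lemma~\ref{lemma:different forms of PQL} that $\gN_{\lambda}^{\mu, \pi} Q = \inviplr{\mu}\pp{r + \gamma(1-\lambda)\gP^\pi Q}$.

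First I would define the greedy PQL operator $\gH Q := \gN_{\lambda}^{\mu, \pi_Q} Q$ with $\pi_Q \in \greedy{Q}$, so that the studied recursion is exactly $Q_{k+1} = \gH Q_k + \varepsilon_k$. Using the second form of Lemma~\ref{lemma:different forms of PQL} together with the bound $\linf{\inviplr{\mu} g} \leq \linf{g}/(1-\gamma\lambda)$ (since $\gP^\mu$ is stochastic) and the fact that the map $Q \mapsto \gP^{\pi_Q} Q$ with $\pi_Q \in \greedy{Q}$ is an $L_\infty$ non-expansion, I would show that $\gH$ is a $\beta$-contraction with $\beta = \gamma(1-\lambda)/(1-\gamma\lambda)$, whose fixed point is $\widehat{Q}$ (the first form of the lemma confirms this, since the fixed point of $\gN_{\lambda}^{\mu, \pi_\dagger}$ is $Q^{\lambda\mu+(1-\lambda)\pi_\dagger}=\widehat{Q}$). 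Unrolling $Q_{k+1} - \widehat{Q} = \gH Q_k - \gH \widehat{Q} + \varepsilon_k$ then gives the standard approximate-value-iteration estimate
\begin{align*}
  \linf{Q_K - \widehat{Q}} \leq \beta^K \linf{Q_0 - \widehat{Q}} + \sum_{k=0}^{K-1}\beta^{K-k-1}\linf{\varepsilon_k}.
\end{align*}

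The second and more delicate step converts $\linf{Q_K - \widehat{Q}}$ into the policy loss. I would invoke the performance-difference identity $V^{\rho_\dagger} - V^{\rho_K} = \pp{\gI - \gamma \gP_{\rho_K}}^{-1}\pp{\rho_\dagger - \rho_K}\widehat{Q}$, where $\gP_{\rho_K} := \rho_K \gP$ is the state-to-state transition operator induced by $\rho_K$ and $\pp{\rho_\dagger - \rho_K}\widehat{Q}$ denotes $x \mapsto \sum_a (\rho_\dagger - \rho_K)(a\mb x)\widehat{Q}(x,a)$. Crucially, this expansion is taken around $\widehat{Q} = Q^{\rho_\dagger}$, the quantity I control, rather than around $Q^{\rho_K}$. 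Since $\rho_\dagger - \rho_K = (1-\lambda)(\pi_\dagger - \pi_K)$, I would bound $(\pi_\dagger - \pi_K)\widehat{Q}$ using the two greedy inequalities $(\pi_\dagger - \pi_K)Q_K \leq 0$ (because $\pi_K \in \greedy{Q_K}$) and $(\pi_\dagger - \pi_K)\widehat{Q} \geq 0$ (because $\pi_\dagger \in \greedy{\widehat{Q}}$): writing $(\pi_\dagger - \pi_K)\widehat{Q} = (\pi_\dagger - \pi_K)Q_K + (\pi_\dagger - \pi_K)(\widehat{Q} - Q_K)$ and estimating the last term by $2\linf{\widehat{Q} - Q_K}$ yields the pointwise sandwich $0 \leq (\pi_\dagger - \pi_K)\widehat{Q} \leq 2\linf{Q_K - \widehat{Q}}\,\const$. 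Applying the monotone operator $\pp{\gI - \gamma \gP_{\rho_K}}^{-1}$, which sends $\const$ to $\const/(1-\gamma)$, then gives
\begin{align*}
  \linf{V^{\rho_\dagger} - V^{\rho_K}} \leq \frac{2(1-\lambda)}{1-\gamma}\linf{Q_K - \widehat{Q}} \leq \frac{2}{1-\gamma}\linf{Q_K - \widehat{Q}}.
\end{align*}

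Combining the two displays completes the proof, the $\beta^K\linf{Q_0-\widehat{Q}}$ term producing the $O(\beta^K)$ contribution. The hard part will be the second step: one must select the correct performance-difference identity (expanding around $Q^{\rho_\dagger}$ so the controlled quantity $\widehat{Q}$ appears) and then exploit the interplay of the two greedy relations, one for $Q_K$ and one for $\widehat{Q}$, to sandwich $(\pi_\dagger - \pi_K)\widehat{Q}$ from both sides; the factor $2$ and the $1/(1-\gamma)$ in the final bound both originate here. By contrast, the contraction estimate in the first step is routine given Lemma~\ref{lemma:different forms of PQL}, and the only care needed is to verify that $\widehat{Q}$ is the fixed point of $\gH$.
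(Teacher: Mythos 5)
Your proposal is correct, but it takes a genuinely different route from the paper's. The paper first decomposes the loss as $V^{\rho_\dagger} - V^{\rho_K} \leq \rho_\dagger d_K + \rho_K \pp{\gI - \gamma \gP^{\rho_K}}^{-1} b_K$ with $d_K := Q^{\rho_\dagger} - Q_K$ and $b_K := Q_K - \gT^{\rho_K} Q_K$, and then controls each piece by a separate \emph{one-sided, componentwise} operator recursion: $d_K \leq \gA^\dagger d_{K-1} - \varepsilon_{K-1}$ and $b_K \leq \gA_{K-1} b_{K-1} + \pp{\gI - \gamma \gP^{\rho_{K-1}}}\varepsilon_{K-1}$, where $\gA^\dagger := \gamma(1-\lambda)\inviplr{\mu}\gP^{\pi_\dagger}$ and $\gA_k := \gamma(1-\lambda)\gP^{\pi_k}\inviplr{\mu}$; the constant $2/(1-\gamma)$ arises there as $1 + (1+\gamma)/(1-\gamma)$. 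You instead prove a \emph{two-sided} sup-norm contraction estimate for the greedy PQL operator, $\linf{Q_K - Q^{\rho_\dagger}} \leq \beta^K \linf{Q_0 - Q^{\rho_\dagger}} + \sum_{k=0}^{K-1}\beta^{K-k-1}\linf{\varepsilon_k}$, and then convert it into a policy loss via the performance-difference identity expanded around $Q^{\rho_\dagger}$, together with the sandwich $0 \leq \pp{\pi_\dagger - \pi_K}Q^{\rho_\dagger} \leq 2\linf{Q_K - Q^{\rho_\dagger}}\const$ obtained by playing the two greediness relations ($\pi_K$ for $Q_K$, $\pi_\dagger$ for $Q^{\rho_\dagger}$) against each other. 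Both arguments are sound: your performance-difference identity checks out because $\rho_K Q^{\rho_\dagger}$ is exactly $\gT^{\rho_K}$ applied to $V^{\rho_\dagger}$, and your version even yields a marginally sharper bound, carrying an extra factor of $(1-\lambda)$ in front of the error sum. What your route buys is modularity and brevity: step 1 is the textbook approximate-fixed-point estimate (essentially re-using the contraction already established for Theorem~\ref{theorem:PQL convergence with fixed mu}), and step 2 cleanly isolates the greedy-policy argument. What the paper's route buys is reusability: the same $b_k$/$d_k$ componentwise recursions (with modified definitions) are the backbone of the proof of Theorem~\ref{theorem:error propagation of PQL with behavior policy updates}, where the mixture behavior policy $\mu_k = \alpha\pi_k + (1-\alpha)\mu_{k-1}$ breaks the single-contraction-operator structure your argument relies on, and the pointwise (rather than sup-norm) bounds leave room for weighted-norm or concentrability-style refinements.
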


\begin{proof}
See Appendix~\ref{appendix:proof of PQL err prop with fixed behavior}.
\end{proof}

As we have already explained the bias-convergence-rate trade-off, for now we ignore the $O(\beta^K)$ term and focus on the error term. For simplicity, we assume $\linf{\varepsilon_k} = \varepsilon$ for every $k$. Then,
\begin{align*}
  \frac{2}{1-\gamma} \sum_{k=0}^{K-1} \beta^{K-k-1} \linf{\varepsilon_k}
  = O\pp{\frac{1-\gamma\lambda}{(1-\gamma)^2} \varepsilon}\, ,
\end{align*}
In contrast, an analogous result of $\lambda$-PI is $O(\varepsilon / (1-\gamma)^2)$ \citep{scherrer2013lpi}. When $\lambda=0$, these results coincide, which is expected since both $\lambda$-PI and PQL degenerate to value iteration. When $\lambda=1$, PQL's error dependency is $O(\varepsilon / (1-\gamma))$, which is significantly better than $O(\varepsilon / (1-\gamma)^2)$. However in this case, PQL is completely biased and converges to $Q^\mu$. At intermediate values of $\lambda$, PQL achieves a trade-off between error tolerance with bias by changing $\lambda$.

\subsection{Approximate Case with Behavior Policy Updates}

Previously, we have analyzed PQL with a fixed behavior policy. However, in practice, the behavior policy is updated along with the target policy. Besides, value-update errors are inevitable in complex tasks. As a result, PQL may behave quite differently in a practical scenario. This motivates our analysis for the following algorithm:\footnote{This algorithm updates the behavior policy after each application of the PQL operator. In Appendix~\ref{appendix:pi-like pql}, we analyze a case where the behavior policy is updated after multiple applications of the PQL operator.}
\begin{gather}
    Q_{k+1} := \gN_{\lambda}^{\mu_k, \pi_k} Q_k + \varepsilon_k\label{eq:approximate PQL with behavior policy updates}
    \\
    \mu_k := \alpha \pi_k + (1-\alpha) \mu_{k-1},\nonumber
\end{gather}
where $\pi_k \in \greedy{Q_k}$, and $\alpha \in [1-\lambda, 1]$. Note that when $\alpha=1$, this algorithm reduces to $\lambda$-PI as a special case. Though this behavior policy update closely resembles to that of conservative policy iteration \citep{kakade2002cpi}, here we require $\alpha\geq 1-\lambda$.

This algorithm has the following performance guarantee.
\begin{theorem}\label{theorem:error propagation of PQL with behavior policy updates}
For any $K$, the following holds:
\begin{align*}
  \linf{V^* - V^{\pi_K}} \leq O(\zeta^K) + \frac{2}{1-\gamma} \sum_{l=0}^{K-1} \zeta^{K-l-1} \linf{\varepsilon_l}\,,
\end{align*}
where $\zeta := 1-\alpha + \alpha \gamma$. Hence, PQL with behavior policy updates converges to the optimal policy with the rate $\zeta^K$.
\end{theorem}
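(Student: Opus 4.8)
The plan is to track the sub-optimality of the Q-iterates in $L^\infty$-norm and to convert to the value loss $V^* - V^{\pi_K}$ only at the very end, which is where the prefactor $\frac{2}{1-\gamma}$ will enter, through the standard bound $\linf{V^* - V^{\pi_K}} \le \frac{2}{1-\gamma}\linf{Q^* - Q_K}$ valid whenever $\pi_K \in \greedy{Q_K}$. So the core of the argument is a one-step inequality showing that $\linf{Q^* - Q_k}$ contracts by the factor $\zeta = 1 - \alpha + \alpha\gamma$ up to an additive $\linf{\varepsilon_k}$ term. Throughout I would use the second form of Lemma~\ref{lemma:different forms of PQL}, writing the exact update as $\inviplr{\mu_k}\pp{r + \gamma(1-\lambda)\gP^{\pi_k} Q_k}$, and exploit that $\inviplr{\mu_k} = \sum_{t\ge0}\pp{\gamma\lambda\gP^{\mu_k}}^t$ is a positive operator with $\linf{\cdot}$-gain $1/(1-\gamma\lambda)$.

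First I would derive the key one-step inequality in the exact case. Writing $Q^* = \inviplr{\mu_k}\pp{r + \gamma\gP^{\pi_*}Q^* - \gamma\lambda\gP^{\mu_k}Q^*}$ and subtracting, the difference $Q^* - Q_{k+1}$ equals $\inviplr{\mu_k}$ applied to a bracket that I bound using the greediness of $\pi_k$ with respect to $Q_k$ (i.e. $\gP^{\pi_*}Q_k \le \gP^{\pi_k}Q_k$) together with the behavior-update identity $\pi_k Q^* - \mu_k Q^* = (1-\alpha)\pp{\pi_k Q^* - \mu_{k-1}Q^*}$. This yields
\begin{align*}
  Q^* - Q_{k+1} \le \inviplr{\mu_k}\bb{ \gamma\gP^{\pi_*}(Q^*-Q_k) + \gamma\lambda\gP^{\pi_k}(Q_k - Q^*) + \gamma\lambda(1-\alpha)\gP\pp{\pi_k Q^* - \mu_{k-1}Q^*} }.
\end{align*}
The crucial structural gain is that the behavior-mismatch term carries the factor $(1-\alpha)$, which is exactly what the soft update $\mu_k = \alpha\pi_k + (1-\alpha)\mu_{k-1}$ buys us; when $\alpha=1$ it vanishes and the inequality collapses to the familiar $\lambda$-PI recursion.

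Next I would close the recursion. The first term contributes $\gamma\linf{Q^* - Q_k}$; the second is governed by the overshoot $\pp{Q_k - Q^*}_+$, for which the analogous computation (now using $\gP^{\pi_k}Q_k \ge \gP^{\pi_*}Q_k$ in the other direction) gives a bound contracting at the faster rate $\beta = \gamma(1-\lambda)/(1-\gamma\lambda)\le\gamma\le\zeta$; and the third is controlled by the behavior policy's sub-optimality $\linf{V^* - \mu_{k-1}Q^*}$, which itself obeys a conservative-policy-iteration-style recursion obtained from $V^* - \mu_k Q^* = \alpha\pp{V^* - \pi_k Q^*} + (1-\alpha)\pp{V^* - \mu_{k-1}Q^*}$ together with the near-greedy bound $V^* - \pi_k Q^* \le 2\linf{Q^* - Q_k}\const$. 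Assembling these into a coupled linear system in $\linf{Q^* - Q_k}$ and $\linf{V^* - \mu_{k-1}Q^*}$, and exhibiting a Lyapunov combination of the two that contracts at rate $\zeta$, is what forces the hypothesis $\alpha \ge 1-\lambda$: it is precisely the regime in which the $\gamma\lambda(1-\alpha)$-weighted mismatch is dominated by the improvement and the system's effective modulus equals $1 - \alpha(1-\gamma) = \zeta$. Adding the errors (each $\varepsilon_l$ enters $Q_{l+1}$ additively and is then propagated by the contraction), unrolling from $l=0$ to $K$, and applying the Q-to-V conversion then gives the stated geometric sum with the $\frac{2}{1-\gamma}$ prefactor.

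The hard part will be this last step: turning the one-step inequality into a single clean contraction at exactly $\zeta$. The difficulty is that the raw recursion couples three quantities (undershoot, overshoot, and behavior mismatch), each with its own $\lambda$- and $\alpha$-dependent coefficient and an awkward $1/(1-\gamma\lambda)$ gain from $\inviplr{\mu_k}$, so naive triangle-inequality bounds — for instance, bounding $V^* - \pi_k Q^*$ crudely by $2\linf{Q^*-Q_k}$ \emph{inside} the main recursion — are too lossy and do not even recover the rate $\gamma$ in the $\lambda$-PI case $\alpha=1$. The art is to keep $\gP^{\pi_k}$ acting on $Q_k - Q^*$ rather than discarding it, to treat overshoot and undershoot as separate sequences contracting at different rates, and to find the right weighted combination whose contraction factor is $\zeta$ exactly when $\alpha \ge 1-\lambda$; verifying that the error terms propagate with the same factor is then routine.
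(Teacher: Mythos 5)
Your one-step inequality is correct, and so are the two auxiliary facts you invoke (the conversion $\linf{V^* - V^{\pi_K}} \le \frac{2}{1-\gamma}\linf{Q^* - Q_K}$ for $\pi_K \in \greedy{Q_K}$, and the factor $(1-\alpha)$ on the behavior mismatch). The gap is exactly in the step you defer as ``the hard part'': no weighted combination of undershoot, overshoot and mismatch can close your recursion at rate $\zeta$, because your decomposition forces the undershoot to pass through $\inviplr{\mu_k}$. Concretely, writing $u_k := \linf{(Q^* - Q_k)_+}$, $o_k := \linf{(Q_k - Q^*)_+}$, $m_k := \linf{V^* - \mu_k Q^*}$, the best bounds extractable from your inequality (exact case) are
\begin{align*}
  u_{k+1} \leq \frac{\gamma}{1-\gamma\lambda} u_k + \frac{\gamma\lambda}{1-\gamma\lambda} o_k + \frac{\gamma\lambda(1-\alpha)}{1-\gamma\lambda} m_{k-1}\,, \qquad
  o_{k+1} \leq \beta o_k\,, \qquad
  m_k \leq \alpha\pp{u_k + o_k} + (1-\alpha) m_{k-1}\,,
\end{align*}
and the self-coefficient $\gamma/(1-\gamma\lambda)$ on $u_k$ exceeds $1$ whenever $\gamma(1+\lambda) > 1$ (e.g.\ $\gamma = 0.99$, $\lambda = 0.9$). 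Hence any Lyapunov combination $u_k + c_1 o_k + c_2 m_{k-1}$ with $c_1, c_2 \geq 0$ has one-step gain at least $\gamma/(1-\gamma\lambda) > 1 \geq \zeta$ on the $u$-component: the derived system is not contractive for \emph{any} $\alpha$, so your claim that $\alpha \geq 1-\lambda$ is what makes it contract at $\zeta$ cannot be substantiated (indeed your one-step inequality never uses $\alpha \geq 1-\lambda$ at all). Even in the benign regime $\gamma(1+\lambda) < 1$, taking $\alpha = 1$ — where the mismatch term vanishes and you must recover $\lambda$-PI's rate $\zeta = \gamma$ — still leaves modulus $\gamma/(1-\gamma\lambda) > \gamma$. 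Your proposed cure, ``keep $\gP^{\pi_k}$ acting on $Q_k - Q^*$,'' does not work: the bracket contains $\gamma\gP^{\pi_*}(Q^*-Q_k)$ and $-\gamma\lambda\gP^{\pi_k}(Q^*-Q_k)$, which act through \emph{different} policies with opposite signs, so they cannot be merged into a single monotone operator of gain $\gamma(1-\lambda)$ unless $\pi_k = \pi_*$; splitting by sign puts you right back in the system above.

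The paper's proof avoids this trap by never letting $Q^* - Q_k$ pass through $\inviplr{\mu_k}$. Setting $\rho_k := \lambda\mu_k + (1-\lambda)\pi_k$ and $b_k := Q_k - \gT^{\rho_k} Q_k$, it uses the identity $Q_{k+1} - \varepsilon_k = \gT^{\rho_k} Q_k - \gamma\lambda\gP^{\mu_k}\inviplr{\mu_k} b_k$, so that the entire $1/(1-\gamma\lambda)$ amplification lands on the Bellman residual $b_k$ alone. That residual obeys an \emph{autonomous} recursion (Lemma~\ref{lemma:bk upper bound with behavior updates}) $b_k \leq \gA_{k-1} b_{k-1} + \pp{\gI - \gamma\gP^{\rho_{k-1}}}\varepsilon_{k-1}$ with $\gA_k := \gamma(1-\lambda)\gP^{\pi_k}\inviplr{\mu_k}$, contracting at rate $\beta \leq \gamma \leq \zeta$; this is where $\alpha \geq 1-\lambda$ and the greediness of $\pi_k$ are actually consumed, via $\gT^{\mu_k} Q_k \geq r + \gamma\lambda\gP^{\mu_{k-1}} Q_k + \gamma(1-\lambda)\gP^{\pi_{k-1}} Q_k$. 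The main error $Q^* - Q_k$ then propagates through $(1-\alpha)\gI + \alpha\gamma\gP^{\pi_*}$ — gain exactly $\zeta$, no $1/(1-\gamma\lambda)$ factor — with the already-controlled residual bounds and $\linf{\varepsilon_l}$ as forcing terms, and a geometric-series computation collapses the double sum; the final conversion is done via $V^* - V^{\pi_K} \leq \pi_* \pp{Q^* - Q_K} + \pi_K \pp{\gI - \gamma\gP^{\pi_K}}^{-1} b_K$ rather than your generic bound. If you want to salvage your write-up, the missing idea to import is precisely this isolation of the $\inviplr{\mu_k}$ gain onto the residual $b_k$.
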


\begin{proof}
See Appendix~\ref{Appendix:proof of PQL err prop with behavior updates}.
\end{proof}

The first term on the right hand side shows the convergence of PQL with behavior policy updates in an exact case, i.e., $\linf{\varepsilon_k}=0$ for any $k$. It states that the fastest convergence rate is $\gamma^K$ (achieved when $\alpha=1$), which is the same as the convergence rate of VI \citep{munos2005avi}, policy iteration \citep{munos2003api}, MPI \citep{scherrer2012ampi,scherrer2015ampi}, and $\lambda$-PI \citep{scherrer2013lpi}. When $\alpha \neq 1$, the convergence rate coincides with that of conservative policy iteration \citep{scherrer2014cpi}. However we are not aware of a similar result of conservative $\lambda$-PI, which would be an analogue of PQL considered here. Theorem~\ref{theorem:error propagation of PQL with behavior policy updates} also provides the error dependency of PQL (the second term on the right hand side). It coincides with the previous result of the above algorithms when $\alpha=1$, as one would expect, since PQL with $\alpha=1$ is precisely $\lambda$-PI. Nonetheless PQL allows some degree of off-policiness when $\alpha \neq 1$.

\subsection{Oscillatory Behavior of HQL}

In this section, we have proven the convergence of exact PQL (i.e., no value-update errors). However, the following proposition shows that exact HQL, an instance of non-conservative algorithms, does not converge in an MDP when the behavior policy is fixed. Nonetheless, in the same MDP, setting the behavior policy $\mu_k$ to a greedy policy $\pi_k \in \greedy{Q_k}$ guarantees the convergence.
\begin{proposition}\label{proposition:HQL oscillation}
There is an MDP such that when exact HQL is run with a fixed policy $\mu_k = \mu$ for all $k$, $\lambda = 1$, and $Q_0 = Q^\mu$, HQL's Q-function $Q_k$ oscillates between two functions, and its greedy policy $\pi_k$ oscillate between optimal and sub-optimal policies. Contrarily, if $\mu_k \in \greedy{Q_k}$, HQL converges to an optimal policy.
\end{proposition}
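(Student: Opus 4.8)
The two assertions rest on different mechanisms, so I would prove them separately after recording two algebraic facts about the exact HQL update. Since HQL is the general Retrace operator with $c\equiv 1$, we have $\gP^{c\mu}=\gP^\mu$, and at $\lambda=1$ with greedy target $\pi_Q\in\greedy{Q}$ the update can be written, using $\pp{\gI-\gamma\gP^\mu}^{-1}r=Q^\mu$, as
\[
  \gH Q := Q + \pp{\gI-\gamma\gP^\mu}^{-1}\pp{\gT^{\pi_Q}Q-Q}
         = Q^\mu + \gamma\pp{\gI-\gamma\gP^\mu}^{-1}\pp{\gP^{\pi_Q}-\gP^\mu}Q .
\]
Because $\pp{\gP^{\pi_Q}-\gP^\mu}Q=\gP\pp{\max_a Q(\cdot,a)-\mu Q}\ge 0$ and the resolvent is monotone, \emph{every} iterate obeys $Q_{k+1}\ge Q^\mu$; moreover $\gH Q=Q^\mu$ holds \emph{iff} $\pp{\gP^{\pi_Q}-\gP^\mu}Q=0$, i.e.\ iff $\mu$ is value-greedy for $Q$ (meaning $\mu Q=\max_a Q(\cdot,a)$ pointwise). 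Separately, taking the behavior policy equal to the greedy target, $\mu=\pi_Q$, makes the resolvent telescope: $\gR_1^{\pi_Q,\pi_Q}Q=\pp{\gI-\gamma\gP^{\pi_Q}}^{-1}r=Q^{\pi_Q}$.

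For the oscillation I would exhibit the following two-state MDP with $\gamma=2/3$. At state $1$, action $a$ moves deterministically to the absorbing state $2$ with reward $3$, while action $b$ self-loops at state $1$ with reward $0$; state $2$ is absorbing with reward $0$; and the fixed behavior policy is $\mu(a\mid 1)=1/4$, $\mu(b\mid 1)=3/4$. Direct substitution gives $Q^\mu=\pp{Q^\mu(1,a),Q^\mu(1,b),Q^\mu(2)}=(3,1,0)$, so the unique greedy policy at $Q_0=Q^\mu$ picks $a$. One application of $\gH$ leaves the $a$-entry and the state-$2$ entry unchanged (action $a$ exits to the dead state) and raises the $b$-entry by the discounted improvement propagated around the $b$ self-loop, yielding $Q_1=(3,3,0)$, a tie at state $1$; we then \emph{choose} the greedy policy $\pi_1$ to pick $b$, a genuinely different greedy policy. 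Since $\mu$ is supported on $\{a,b\}$, which are exactly the maximizers of $Q_1$ at state $1$, $\mu$ is value-greedy for $Q_1$, so by the criterion above $\gH Q_1=Q^\mu=Q_0$. Hence $Q_k$ cycles $(3,1,0)\to(3,3,0)\to(3,1,0)\to\cdots$ with period two.

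It remains to label the two greedy policies and to settle the converse claim. In this MDP the always-$a$ policy has value $3$ at state $1$ while the always-$b$ policy has value $0$, and $Q^*(1,a)=3>Q^*(1,b)=2$, so $a$ is optimal and $b$ strictly suboptimal; thus $\pi_k$ alternates between an optimal and a suboptimal policy, as claimed. For the converse, when $\mu_k\in\greedy{Q_k}$ I take the behavior policy to coincide with the target greedy policy, $\mu_k=\pi_k$, and the telescoping identity gives $Q_{k+1}=\gR_1^{\pi_k,\pi_k}Q_k=Q^{\pi_k}$ with $\pi_k\in\greedy{Q_k}$. This is exactly policy iteration, which in a finite MDP reaches $Q^*$ in finitely many steps; applied here it jumps to the optimal $a$ immediately.

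The main obstacle is the oscillation construction, which must simultaneously (i) keep $\pi_0$ strictly greedy at $Q^\mu$, (ii) make a single HQL step land $Q_1$ exactly on a tie at the decision state, and (iii) guarantee $\mu$ is value-greedy for $Q_1$ at \emph{every} state so that the second step returns precisely to $Q^\mu$. The monotonicity $\gH Q\ge Q^\mu$ shows these are not accidental but forced by the update, which is what makes the engineered tie the natural device: it is the only way the iterate can descend back to $Q^\mu$ from above. The remaining work is the calibration that produces the tie — for the chosen rewards this reduces to the clean condition $\gamma\,\mu(b\mid 1)=1/2$ — together with verifying the optimal/suboptimal labeling; both are routine once the gadget above is fixed.
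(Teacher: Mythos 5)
Your proof is correct and does establish the proposition, but by a genuinely different construction than the paper's, and the differences are worth spelling out. You first prove structural facts about the $\lambda=1$ HQL update --- the closed form $\gH Q = Q^\mu + \gamma\pp{\gI-\gamma\gP^\mu}^{-1}\pp{\gP^{\pi_Q}-\gP^\mu}Q$, hence $\gH Q \geq Q^\mu$ for every $Q$, with equality exactly when $\mu Q = \max_a Q(\cdot,a)$ on reachable states, plus the collapse $\gH Q = Q^{\pi_Q}$ when $\mu = \pi_Q$ --- and then build a minimal two-state gadget around the equality criterion. The paper instead uses a chain MDP (actions $go$/$exit$, $\mu$ always chooses $go$, any $\gamma > 1/2$): its cycle is between $Q^\mu$, whose unique greedy action is $exit$ (optimal), and the function obtained from $Q^\mu$ by adding $2\gamma/(1-\gamma)^2$ to every $(x,go)$ entry, whose unique greedy action is $go$ (suboptimal). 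Your version has two real advantages: your MDP is finite, so it conforms to the paper's standing finite-state assumption (the paper's own chain does not), and your run starts exactly at $Q_0 = Q^\mu$ as the proposition stipulates (the paper starts from a perturbed $Q_0$ and only reaches $Q^\mu$ after one step). Your treatment of the second claim agrees with the paper's in substance: behavior equal to the greedy target collapses HQL into policy iteration; the paper phrases this as $\lambda$-PI and cites its convergence, while you derive it directly from the resolvent identity.

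The trade-off, which you should state explicitly, is that your policy oscillation is fragile in two ways the paper's is not. First, at $Q_1=(3,3,0)$ both actions are greedy, and the suboptimal policy $b$ appears only because you adversarially break the tie toward $b$; under a benign selection rule (always prefer $a$, or retain the previous greedy action) the policy sequence is constantly optimal even though $Q_k$ still cycles. In the paper's example the greedy policy is unique at every iterate, so the alternation between optimal and \emph{strictly} suboptimal policies is forced under every tie-breaking rule. Second, your calibration $\gamma\,\mu(b\mid 1)=1/2$ is knife-edge, and necessarily so: by your own equality criterion, with a full-support stochastic $\mu$, returning exactly to $Q^\mu$ forces a tie at the decision state. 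The paper sidesteps this by taking $\mu$ deterministic, so the equality criterion is met by making the behavior action the strict argmax of the perturbed iterate, and the construction works on an open range of parameters. Since the proposition is existential over MDPs and over runs of exact HQL (a run includes the choices $\pi_k \in \greedy{Q_k}$), your argument proves the claim as stated; but the phenomenon you exhibit is strictly more fragile than the one the paper exhibits, and that robustness is what the paper's bulkier example buys.
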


\begin{proof}
A proof of the first claim is given in Appendix~\ref{appendix:divergence of HQL}. The second claim immediately follows by noting that if $\mu_k = \pi_k \in \greedy{Q_k}$, HQL is $\lambda$-PI, which is known to converge \citep{bertsekas1996lambda_pi}.
\end{proof}

While this result is specialized to HQL, it sheds light on an important aspect of non-conservative algorithms in general:
\begin{center}
\textbf{While non-conservative algorithms may perform poorly when the behavior policy is fixed, they may converge to $Q^*$ when the behavior policy is updated.}
\end{center}
The above captures a critical aspect of how algorithms behave in practice, where the behavior policy is continuously updated.

\section{Deep RL Implementations}

We next show that Peng's Q($\lambda$) can be conveniently implemented with established off-policy deep RL algorithms. Our experiments focus on continuous control problems where the action space $\A = [-1,1]^m$. A primary motivation for considering continuous control benchmarks (e.g., \citep{brockman2016openai,tassa2018deepmind}) is that they are usually more accessible to a wider RL research community, compared to challenging discrete control benchmarks such as Atari games \citep{bellemare2013arcade}.

\subsection{Off-policy Actor-critic Algorithms}
Off-policy actor-critic algorithms maintain a policy $\pi_\theta(a|x)$ with parameter $\theta$ and a Q-function critic $Q_\phi(x,a)$ with parameter $\phi$. For the policy, a popular choice is the point mass distribution $\pi_\theta(a|x)= \delta(a-\pi_\theta(x))$, where $\pi_\theta(x)\in \mathbb{R}^{\A}$ \citep{lillicrap2015continuous,fujimoto2018addressing,barth2018distributed}. The algorithm collects data with an exploratory behavior policy $\mu$ and saves tuples $(x_t,a_t,r_t)$ into a replay buffer $\mathcal{D}$. At each training iteration, the critic $Q_\phi(x,a)$ is updated by minimizing squared errors against a Q-function target $\mathbb{E}_D\left\lbrack (Q_\phi(x,a)-Q_\text{target}(x,a))^2 \right\rbrack$. The policy is  updated via the deterministic policy gradient $\theta\leftarrow\theta + \alpha \mathbb{E}_\mu \left\lbrack \nabla_\theta  Q_\phi(x,\pi_\theta(x)) \right\rbrack$ \citep{silver2014deterministic}. See further details in Appendix~\ref{appendix:experiment}. 

\subsection{Implementations of Multi-step Operators}

While approximate estimates to $\mathcal{T}Q(x,a)$ are arguably the simplest to implement, it only myopically looks ahead for one step.
Usually, the learning can be significantly sped up when the targets are constructed with multi-step operators. (See, e.g, empirical examples in \citep{hessel2017rainbow,barth2018distributed,kapturowski2018recurrent} and theoretical insights in \citep{rowland2020apaptive}) For example, the uncorrected $n$-step operator is estimated as follows \citep{hessel2017rainbow}: given a $n$-step trajectory $(x_i,a_i,r_i)_{i=0}^n$, the target at $(x_0,a_0)$ is computed as $Q_\text{target}(x_0,a_0)=\sum_{i=0}^{n-1} \gamma^i r_0 + \gamma^n Q_{\phi^-}(x_n,\pi_{\theta^-}(x_n))$. Similar estimates could be derived for all multi-step operators introduced in Section~\ref{section:operators}, especially Peng's Q($\lambda$). We present full details in Appendix~\ref{appendix:experiment}.

\paragraph{Desirable empirical properties of Peng's Q($\lambda$).} The estimates of Peng's Q($\lambda$) do not require importance sampling ratios $\frac{\pi(a|x)}{\mu(a|x)}$. This is especially valuable for continuous control, where the policy could be deterministic, in which case algorithms such as Retrace \citep{munos2016safe} cuts traces immediately. Even when policies are stochastic and traces based on IS ratios are not cut immediately, prior work suggests that the trace cuts are usually pessimistic especially for high-dimensional action space (see, e.g., \citep{wang2016sample} for implementation techniques to mitigate the issue).

\section{Experiments}
\label{section:experiment}

To build better intuitions about  Peng's Q($\lambda$), we start with tabular examples in Section~\ref{sec:toy}. We will see that the empirical properties of Peng's Q($\lambda$) echo the theoretical analysis in previous sections. In Section~\ref{sec:deeprl}, we evaluate
Peng's Q($\lambda$) in the deep RL contexts. We combine Peng's Q($\lambda$) with baseline deep RL algorithms and compare its performance against alternative operators.

\subsection{A tabular example}
\label{sec:toy}
\begin{figure}[h]
    \centering
    \subfigure[Final performance ]{\includegraphics[keepaspectratio,width=.22\textwidth]{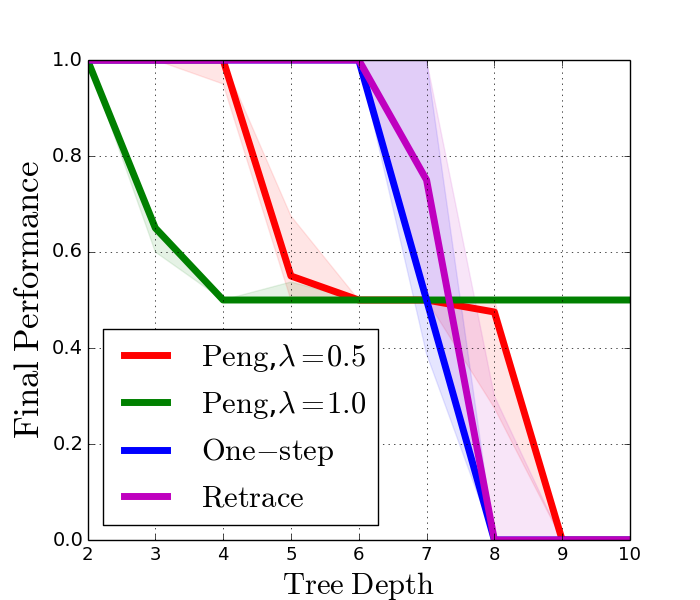}}
    \subfigure[Learning curves]{\includegraphics[keepaspectratio,width=.22\textwidth]{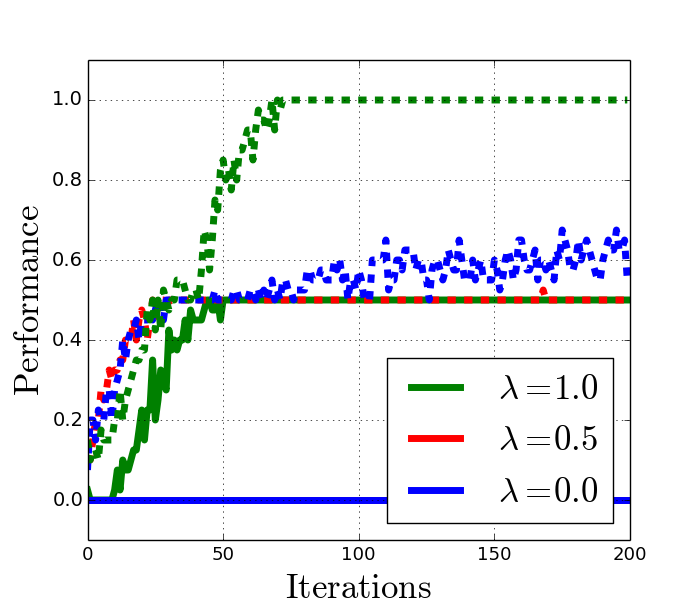}}
    \caption{Performance on tree MDPs. Figure(a) shows how performance changes as a function of three depth $D$; Figure(b) shows the learning curves of different operators. }
    \label{fig:tabular}
\end{figure}

\paragraph{Tree MDP.} We consider toy examples with a tree MDP of depth $D$. The MDPs are binary trees, with each node corresponding to a state. Starting from any non-leaf state, the two actions $a\in\{L,R\}$ transition the agent to one of its child nodes with probability one. Each episode lasts for $D$ steps and the agent always starts at the root node. The rewards are zero everywhere except $r=1$ at the leftmost leaf node and $r=0.5$ at the rightmost leaf node. The behavior policy $\mu$ is $\mu(L|x)=0.3,\mu(R|x)=0.7$ for all states $x$. 

Note that there is a sub-optimal policy of collecting $r=0.5$ at the rightmost leaf. The behavior policy is by design biased towards taking right moves, such that it is easy for the agent to learn the sub-optimal policy. The optimal policy is to take left moves and collect $r=1$. Throughout training, we optimize the target policy $\pi$ while fixing the behavior policy $\mu$. This echos the theoretical setup in Section~\ref{sec:approximate-fixed-mu}. See Appendix~\ref{appendix:experiment} for further details on the setup.

\paragraph{Results.} In Figure~\ref{fig:tabular}(a), we show the converged performance of different algorithms as a function of the MDP's tree depth $D$. When $D=2$, all algorithms achieve the optimal performance; when $\lambda=1$, as $D$ increases, the fixed point bias of Peng's Q($\lambda$) hurts the performance drastically. This is less severe for $\lambda=0.5$, whose performance decays less quickly. On the other hand, both Retrace and the one-step operator learn the optimal policy even for $D\leq 6$. However, when $D$ increases, it becomes difficult to sample the optimal trajectory, making it easy to get trapped with the sub-optimal policy. As such, the sparse rewards make it difficult to learn meaningful Q-functions, unless the return signals get propagated effectively (i.e,. do not cut traces). This is shown in Figure~\ref{fig:tabular}(a), where Peng's Q($\lambda$) with $\lambda=1$ is the only baseline that achieves the sub-optimal performance, while all other algorithms fail to learn anything. 

Similar observations are made in Figure~\ref{fig:tabular}(b), where we compare Peng's Q($\lambda$) for various $\lambda$ under $D=10$ (solid lines) and $D=5$ (dotted lines). Small $\lambda$ corresponds to less bias in the Q-function fixed points, and should asymptotically converge to higher performance; on the other hand, large $\lambda$ suffers sub-optimality when $D$ is small, but gains a substantial advantage when the $D$ is large.

\subsection{Deep RL experiments}
\label{sec:deeprl}

\paragraph{Evaluations.} We evaluate performance over environments with a number of different physics simulation backends, such as MuJoCo \citep{todorov2012mujoco} based DeepMind (DM) control suite \citep{tassa2018deepmind} and an open sourced simulator Bullet physics \citep{coumans2015bullet}. Due to space limit, below we only show results for DM control suite and provide a more complete set of evaluations in Appendix~\ref{appendix:experiment}. 

\paragraph{Baseline comparison.} We use TD3 \citep{fujimoto2018addressing} as the base algorithm. We compare with a few multi-step baselines: \textbf{(1)} one-step (also the base algorithm); \textbf{(2)} Uncorrected $n$-step with a fixed $n$; \textbf{(3)} Peng's Q($\lambda$) with a fixed $\lambda$; \textbf{(4)} Retrace and C-trace. Among all baselines, uncorrected $n$-step operator is the most commonly used non-conservative operator while Retrace is a representative conservative operator. See Appendix~\ref{appendix:experiment} for more details. All algorithms are trained with a fixed number of steps and results are averaged across $5$ random seeds. 

\paragraph{Standard benchmark results.} In the top row of Figure~\ref{fig:offpolicy}, we show evaluations on standard benchmarks. Across most tasks, Peng's Q($\lambda$) performs more stably than other baseline algorithms. We see that Peng's Q($\lambda$) learns generally as fast as other baselines, and in some cases significantly faster than others. 
Note that though Peng's Q($\lambda$) does not necessarily obtain the best learning performance \emph{per each task}, it consistently ranks as the top two algorithms (with ties). This is in contrast to baseline algorithms whose performance rank might vary drastically across tasks. For example, the one-step TD3 performs well in CheetahRun while performs poorly in WalkerWalk. Also, both Ctrace and Retrace generally significantly perform more poorly. We provide further analysis in Appendix~\ref{appendix:experiment}.

\begin{figure*}[h]
    \centering
    \includegraphics[keepaspectratio,width=1.\textwidth]{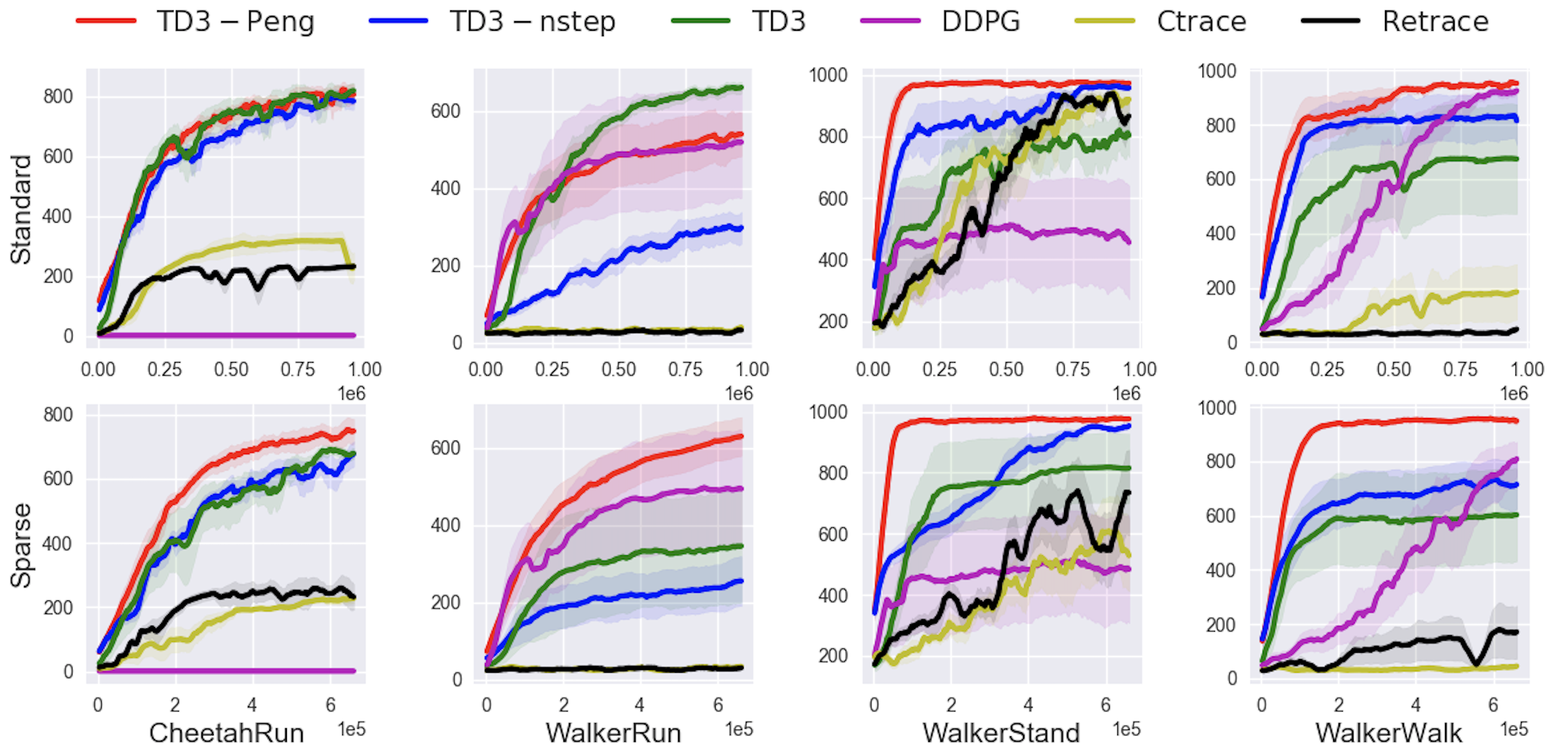}
    \caption{Evaluation of baseline algorithms over standard DM control domains.
    The first row shows results on standard benchmarks; the second row shows results on sparse reward variants of the benchmarks. Four task names are labeled at the bottom. In each plot, x-axis shows the number of training steps and y-axis shows the performance. In standard benchmarks, Peng's Q($\lambda$) generally performs more stably than other algorithms; in sparse reward benchmarks, Peng's Q($\lambda$) outperforms all other algorithms across all presented tasks.}
    \label{fig:offpolicy}
\end{figure*}

\iffalse{
\begin{figure}[h]
    \centering
    \subfigure[WalkerWalk(D)]{\includegraphics[keepaspectratio,width=.23\textwidth]{}}
    \subfigure[WalkerRun(D)]{\includegraphics[keepaspectratio,width=.23\textwidth]{}}
    \subfigure[WalkerStand(D)]{\includegraphics[keepaspectratio,width=.23\textwidth]{}}
    \subfigure[CheetahRun(D)]{\includegraphics[keepaspectratio,width=.23\textwidth]{}}
    \caption{Evaluation of baseline algorithms over standard DM control domains. Peng's Q($\lambda$) generally performs more stably than other algorithms.}
    \label{fig:offpolicy}
\end{figure}
}\fi

\paragraph{Sparse rewards results.} In the bottom row of Figure~\ref{fig:offpolicy}, we show evaluations on sparse reward variants of the benchmark tasks. See details on these environments in Appendix~\ref{appendix:experiment}. Sparse rewards are challenging for deep RL algorithms, as it is more difficult to numerically propagate learning signals across time steps. Accordingly, sparse rewards are natural benchmarks for operator-based algorithms. Across all tasks, Peng's Q($\lambda$) consistently outperforms other baselines. In a few cases, uncorrected $n$-step also outperforms the baseline TD3 -- we speculate that this is because the former propagates the learning signal more efficiently, which is critical for sparse rewards. Compared to uncorrected $n$-step, Peng's Q($\lambda$) seems to achieve a better trade-off between efficient propagation of learning signals  and fixed point biases, which leads to relatively stable and consistent performance gains across all selected benchmark tasks.

\iffalse{
\begin{figure}[h]
    \centering
    \subfigure[WalkerWalk(D)]{\includegraphics[keepaspectratio,width=.23\textwidth]{}}
    \subfigure[WalkerRun(D)]{\includegraphics[keepaspectratio,width=.23\textwidth]{}}
    \subfigure[WalkerStand(D)]{\includegraphics[keepaspectratio,width=.23\textwidth]{}}
    \subfigure[CheetahRun(D)]{\includegraphics[keepaspectratio,width=.23\textwidth]{}}
    \caption{Evaluation of baseline algorithms over sparse rewards variants of DM control domains. Peng's Q($\lambda$) outperforms other baselines across all tasks.}
    \label{fig:delay-offpolicy}
\end{figure}
}\fi

\subsection{Additional deep RL experiments}
\paragraph{Maximum-entropy RL.}
In Appendix~\ref{appendix:maxentrl}, we show how Peng's Q($\lambda$) can be extended to maximum-entropy RL \citep{ziebart2008maximum,fox2015taming,haarnoja2017reinforcement,haarnoja2018soft}. We combine multi-step operators with maximum-entropy deep RL algorithms such as SAC \citep{haarnoja2018soft} and show performance gains over benchmark tasks. See Appendix~\ref{appendix:experiment} for further details.

\paragraph{Ablation study on $\lambda$.} In Appendix~\ref{appendix:experiment}, we provide an ablation study on the effect of $\lambda$. We show that the performance of Peng's Q($\lambda$) depends on the choice of $\lambda$. Nevertheless, we find that a single $\lambda$ can usually lead to fairly uniform performance gains across a large number of benchmarks. 

\section{Conclusion}

In this paper, we have studied the non-conservative off-policy algorithm Peng's Q($\lambda$), and shown that while in the worst case its convergence guarantees are less strong than conservative algorithms such as Retrace, convergence guarantees to the optimal policy are recovered when the behavior policy closely tracks the target policy. This has important consequences for deep RL theory and practice, as this condition often holds when agents are trained through replay buffers, and serves to close the gap between the strong empirical performance observed with non-conservative algorithms in deep RL, and their previous lack of theory.

We expect this to have several important consequences for deep RL theory and practice. Firstly, these results make clear that the \emph{degree} of off-policyness is an important quantity that has real impact on the success of deep RL algorithms, and incorporating quantities related to this into the analysis of off-policy algorithms will be important for developing theoretical understanding of deep RL. Secondly, these findings add weight to growing empirical work highlighting that quantities such as replay buffer size and replay ratio are crucial to the success of deep RL agents \citep{zhang2017deeper,daley2019reconciling,fedus2020revisiting}, and deserve further attention.

We believe the analysis presented in this paper is an important step towards a deeper understanding of non-conservative methods, and there are several open questions suitable for future work. For example, the convergence guarantee in Theorem~\ref{theorem:error propagation of PQL with behavior policy updates} requires $\alpha \geq 1-\lambda$. However we conjecture that this assumption can be lifted. Besides, while we did not analyze the concentrability coefficients of PQL, \citet{scherrer2014cpi} reports that conservative policy iteration, which is analogous to PQL, has a better concentrability coefficients. Finally, careful error propagation analyses of gap-increasing algorithms \citep{azar2012dpp,kozuno2019cvi} and policy-update-regularized algorithms \citep{vieillard2020average} show a slow update of policies confer the stability against errors on algorithms. In PQL with behavior policy updates, we expect a similar result when $\alpha$ takes an intermediate value.

\section*{Acknowledgement}

TK was supported by JSPS KAKENHI Grant
Numbers 16H06563. TK thanks Prof.\,Kenji Doya, Dongqi Han, and Ho Ching Chiu at Okinawa Institute of Science and Technology (OIST) for their valuable comments. TK is also grateful to the research support of OIST to the Neural Computation Unit, where TK partially conducted this research. In particular, TK is thankful for OIST's Scientific Computation and Data Analysis section, which maintains a cluster we used for many of our experiments. YHT acknowledges the computational support from Google Cloud Platform.

\bibliographystyle{icml2020}
\bibliography{refs}

\onecolumn
\newpage
\appendix

\section{Preliminaries for Theoretical Analyses}
\label{appendix:preliminaries for theory}

In this appendix, we explain important notions we used in our theoretical analyses. 

\paragraph{Contraction and Monotonicity of Operators.}
An operator $\gO$ from a normed space $(\set{F}, \| \cdot \|)$ to another normed space $(\set{F}', |\cdot|)$ is said to be a contraction if there is a constant $c \in [0, 1)$ such that $\| \gO f - \gO g \| \leq c |f - g |$. This constant $c$ is sometimes called as modulus. For example, $\gT: (\Q, \linf{\cdot}) \rightarrow (\Q, \linf{\cdot})$ is a contraction with modulus $\gamma$. In the main text, we usually meant a contraction under $\linf{\cdot}$ and did not always mention which norm is considered.

A related notion is a non-expansion. If an operator $\gO$ satisfies only $\| \gO f - \gO g \| \leq |f - g|$, it is said to be a non-expansion. For example, $\gP$ is a non-expansion, as proven later.

Monotonicity is probably the most important property in our analyses. An operator $\gO$ is said to be monotone if $\gO f \geq \gO g$ for any $f$ and $g$ satisfying $f \geq g$. For example, $\gP$ is monotone: if $V \geq V'$ (point-wisely, i.e., $V (x) \geq V' (x)$ at every $x$), $\gP V - \gP V'$ holds too, as one can easily confirm from
\begin{align*}
    (\gP V - \gP V')(x, a) = \E \bb{ V(X_1) - V'(X_1) \mb X_0=x, A_0=a} \geq 0.
\end{align*}
Let $\const \in \set{F}$ be a constant function taking $1$ everywhere. If a linear operator $\gO: (\set{F}, \linf{\cdot}) \rightarrow (\set{F}', \linf{\cdot})$ is monotone and satisfies $\gO \const = c \const$ with a scalar $c$, we have $\linf{\gO f - \gO g} \leq c \linf{f - g}$. Indeed,
\begin{align*}
    \gO f - \gO g = \gO (f - g) \leq \gO \linf{ f - g } \const = c \linf{ f - g } \const
    \text{ and } \gO f - \gO g \geq - c \linf{ f - g } \const
\end{align*}
imply $\linf{\gO f - \gO g} \leq c \linf{f - g}$. Thus, $\gP$ is non-expansive as $\mathcal{P}\const = \const$. Note that $(1- \gamma \lambda) \inviplr{\pi}$ is also a non-expansive operator for any $\pi$, as one can easily confirm.

\section{On an Extension of Theoretical Results to Continuous Action Spaces}
\label{appendix:extension to continuous action space}

In this appendix, we explain how to extend our theoretical results to a case where both the state and action spaces are continuous. We mainly follow Appendix~B in \citep{puterman1994mdp}. We ask interested readers to refer to the textbook.

\paragraph{Notation.} Let $\set{S}$ and $\set{S}'$ be Polish spaces. We denote by $\set{B}(\set{S}; c)$ the set of all Borel-measurable functions from $\set{S}$ to a bounded closed interval $[-c, c]$, where $c \in [0, \infty)$; throughout this appendix, the Borel $\sigma$-algebra is always considered. We denote by $\set{P}(\set{S})$ the set of all Borel probability measures on $\set{S}$. We say that a real-valued function $f$ on $\set{S}$ is upper semicontinuous (usc) at a point $p^*$ if $\limsup_{n \rightarrow \infty} f(p_n) \leq f(p^*)$ for any sequence of points $(p_n)_{n\geq0}$ converging to $p^*$. We say that $f$ is usc if it is usc at any point. We denote by $\set{U}(\set{S}; c)$ the set of all usc functions from $\set{S}$ to a bounded closed interval $[-c, c]$, where $c \in [0, \infty)$. We say that a stochastic kernel $q: \set{S} \rightarrow \set{P}(\set{S}')$ is continuous if $\lim_{n \rightarrow \infty} \int f(p') q(dp'|p_n) = \int f(p') q(dp'|p)$ for any bounded continuous function $f$ and any sequence of points $(p_n)_{n\geq0}$ converging to $p$.

\paragraph{Main Discussion.} We impose the following assumption on MDPs. It is necessary to guarantee that all functions in the analyses are usc, as we shall explain soon.
\begin{assumption}\label{assumption:continuity}
The state and action spaces are compact subsets of finite-dimensional Euclidean spaces equipped with Borel $\sigma$-algebras. The reward function $r$ is an usc function bounded by $r_{max}$, and the state transition probability kernel $\gP$ is continuous.
\end{assumption}

We first explain that there exists an optimal policy that is a measurable function from the state space $\S$ to the action space $\A$. Let $V_{max}:=r_{max}/(1-\gamma)$. We denote by $\gM: \set{U}(\SA; V_{max}) \rightarrow \set{U}(\S; V_{max})$ the max operator defined by $(\gM Q)(x) := \max_{a \in \A} Q (x, a)$ for any $Q \in \set{U}(\SA; V_{max})$. Theorem~B.5 in \citet{puterman1994mdp} guarantees that $\gM Q$ is usc. Furthermore, Proposition~B.4 in \citet{puterman1994mdp} guarantees that $\gP \gM Q$ is usc. It is easy to confirm that both $\gM Q$ and $\gP \gM Q$ are bounded by $V_{max}$. Since a sum of usc functions is again usc \citep[Proposition~B.1.a]{puterman1994mdp}, $r + \gamma \gP \gM Q = \gT Q$ belongs to $\set{U}(\SA; V_{max})$. Suppose the recursion $Q_{k+1} := \gT Q_k$. Proposition~B.1.e in \citet{puterman1994mdp} guarantees that $\lim_{k \rightarrow \infty} Q_k = Q^*$ is usc. Proposition~B.4 in \citet{puterman1994mdp} guarantees that there exists a measurable function $\pi_*: \S \rightarrow \A$ such that $Q^* (x, \pi_*(x)) = \max_{a \in \A} Q^* (x, a)$. Accordingly, there exists an optimal policy that is a measurable function from $\S$ to $\A$.

From the above discussion, it is easy to confirm that all $Q_k$ in the exact version of PQL~(\ref{eq:PQL with fixed behavior}) belong to $\set{U}(\SA; V_{max})$ given that the behavior policy $\mu$ is continuous. Therefore, the proof of Theorem~\ref{theorem:PQL convergence with fixed mu} in Appendix~\ref{appendix:proof of PQL convergence with fixed mu} is valid under the assumption that $\mu$ is continuous. We note that it is a weak assumption because the behavior policy $\mu$ is often continuous in practice. Indeed, an action distribution $\mu (\cdot | x)$ is frequently a normal distribution whose mean and diagonal covariance matrix are continuous functions of a state $x$ expressed by, for example, neural networks. As a result, as long as all elements of the diagonal covariance matrix are bounded from below by some constant, the probability density function of $\mu (\cdot | x)$ is bounded. Therefore, the dominated convergence theorem can be used to show that $\mu$ is continuous. When there is an element of the diagonal covariance matrix converging to $0$, this argument does not hold. However, it is a pathological case that usual implementations, such as SpinningUp \citep{SpinningUp2018}, try to avoid by value clipping.

For other theoretical results, we need two additional assumptions: (i) all behavior policies $\mu$ and $\mu_k$ are continuous, and (ii) all error functions $\varepsilon_k$ belong to $\set{U}(\SA; V_{max})$. As for the assumption (i), it is a weak assumption as noted above. (See also the following paragraph on the relaxation of $\pi_k$'s exact greediness.) As for the assumption (ii), it is also a weak assumption: because $Q_{k+1}$ approximates $(\gN_{\lambda}^{\mu})^k Q_0 \in \set{U}(\SA; V_{max})$, there is no strong reason to use a function approximator that does not belong to $\set{U}(\SA; V_{max})$; using a function approximator belonging to $\set{U}(\SA; V_{max})$ guarantees that $\varepsilon_k = Q_{k+1} - \gN_{\lambda}^{\mu} Q_k$ belongs to $\set{U}(\SA; V_{max})$. Similar arguments can be made even when the behavior policy is updated, and we can conclude that these assumptions are weak.

We finally mention how to relax the exact greedy assumption that $\pi_k \in \greedy{Q_k}$. When the action space is continuous, it is not feasible to find an exact greedy policy even if $Q_k$ is continuous. In addition, it is often the case that a policy is expressed by a neural network. However, it is relatively straightforward to extend our theoretical analyses to a case where this exact greedy assumption is relaxed to a $\delta_k$-greedy assumption, that is, $\pi_k Q_k \geq \gM Q_k - \delta_k$, where $\delta_k \in \set{U}(\S; V_{max})$. A similar near-greedy condition is found in, for example, \citet{scherrer2014cpi}.

\section{A Proof of Lemma~\ref{lemma:different forms of PQL} (Different Forms of the PQL Operator)}
\label{appendix:proof of different forms of PQL}

In this appendix, we prove Lemma~\ref{lemma:different forms of PQL}, which provides the following forms of the PQL operator:
\begin{align*}
    \gN_{\lambda}^{\mu, \pi} Q
    &= Q + \pp{\gI - \gamma \lambda \gP^{\mu} }^{-1} \pp{ \gT^{\lambda \mu + (1-\lambda) \pi} Q - Q}
    \\
    &= \inviplr{\mu} \pp{ r + \gamma (1-\lambda) \gP^{\pi} Q }.
\end{align*}

We first recall the original PQL operator (\ref{eq:PQL operator}): $\gN^{\mu, \pi}_\lambda Q := (1-\lambda) \sum_{n=0}^\infty \lambda^n \pp{ \gT^\mu }^n \gT^\pi Q$. Note that each term in the sum can be rewritten as $\pp{ \gT^\mu }^n \gT^\pi Q = \sum_{m=0}^n \gamma^m \pp{ \gP^\mu }^m r + \gamma^{n+1} \pp{ \gP^\mu }^n \gP^\pi Q$. Therefore,
\begin{align*}
    (1-\lambda) \sum_{n=0}^\infty \lambda^n \pp{ \gT^\mu }^n \gT^\pi Q
    &= (1-\lambda) \sum_{n=0}^\infty \lambda^n \bb{ \sum_{m=0}^n \gamma^m \pp{ \gP^\mu }^m r + \gamma^{n+1} \pp{ \gP^\mu }^n \gP^\pi Q }
    \\
    &= (1-\lambda) \sum_{n=0}^\infty \lambda^n \sum_{m=0}^n \gamma^m \pp{ \gP^\mu }^m r + \sum_{n=0}^\infty \lambda^n \gamma^{n+1} (1-\lambda) \pp{ \gP^\mu }^n \gP^\pi Q.
\end{align*}
Note that
\begin{align*}
    (1-\lambda) \sum_{n=0}^\infty \lambda^n \sum_{m=0}^n \gamma^m \pp{ \gP^\mu }^m r
    &=\sum_{n=0}^\infty \lambda^n \sum_{m=0}^n \gamma^m \pp{ \gP^\mu }^m r - \sum_{n=0}^\infty \lambda^{n+1} \sum_{m=0}^n \gamma^m \pp{ \gP^\mu }^m r
    \\
    &= \sum_{n=0}^\infty \lambda^n \sum_{m=0}^n \gamma^m \pp{ \gP^\mu }^m r - \sum_{n=1}^\infty \lambda^n \sum_{m=0}^{n-1} \gamma^m \pp{ \gP^\mu }^m r
    \\
    &= \sum_{n=0}^\infty \lambda^n \gamma^n \pp{ \gP^\mu }^n r.
\end{align*}
Consequently,
\begin{align*}
    (1-\lambda) \sum_{n=0}^\infty \lambda^n \pp{ \gT^\mu }^n \gT Q
    = \sum_{n=0}^\infty \lambda^n \gamma^n \pp{ \gP^\mu }^n \pp{r + \gamma (1-\lambda) \gP^\pi Q}
    = \inviplr{\mu} \pp{r + \gamma (1-\lambda) \gP^\pi Q}.
\end{align*}
The right hand side can be rewritten as follows:
\begin{align*}
    \inviplr{\mu} \pp{r + \gamma (1-\lambda) \gP^\pi Q}
    &= \inviplr{\mu} \pp{ \lambda \gT^\mu Q + (1-\lambda) \gT^{\pi} Q - \lambda \gP^\mu Q}
    \\
    &= \inviplr{\mu} \pp{ \lambda \gT^\mu Q + (1-\lambda) \gT^{\pi} Q - Q + \iplr{\mu} Q}
    \\
    &= Q + \inviplr{\mu} \pp{ \lambda \gT^\mu Q + (1-\lambda) \gT^{\pi} Q - Q}.
\end{align*}
This concludes the proof.

\section{A Proof of Proposition~\ref{proposition:HQL oscillation} (HQL's Oscillation)}
\label{appendix:divergence of HQL}

In this appendix, we prove that under a certain circumstance, HQL oscillates. We prove it by using an example shown in Figure~\ref{fig:HQL divergence}. In this MDP, there are two types of states $\S_1 = \{x | x=1, 2, \ldots \}$ and $\S_2 = \{x' | x=1, 2, \ldots \}$. We denote a state in $\S_1$ by $x$ and a state in $\S_2$ by $x'$. There are two actions $go$ and $exit$. When an agent chooses $go$ at $x$, it moves to $x+1$ with a reward of $-1$. When an agent chooses $exit$ at $x$, it moves to $x'$ with a reward of $1$. At $x'$, any action results in a state transition to the same state $x'$ with a reward of $1$. Therefore, an agent must $exit$ from $x$ as soon as possible.

\begin{figure*}[h]
    \centering
    \includegraphics[keepaspectratio,width=0.6\textwidth]{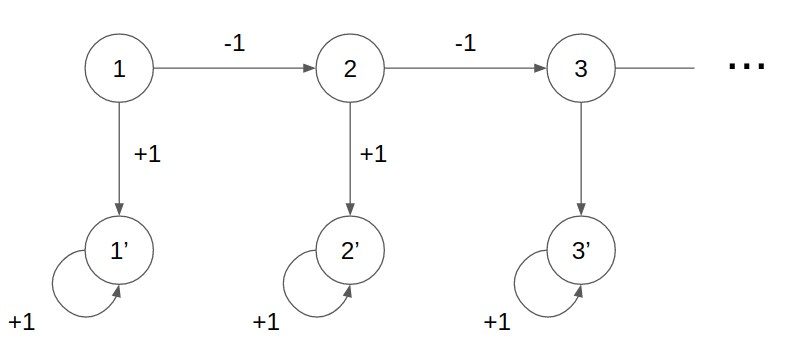}
    \caption{An MDP in which HQL may oscillate.}
    \label{fig:HQL divergence}
\end{figure*}

We assume that $\lambda=1$, $\gamma > 0.5$, $\mu$ chooses $go$ everywhere, $Q_0 (x, exit) = Q^\mu (x, exit) = 1 / \pp{1-\gamma}$, and $Q_0 (x, go) = Q^\mu (x, go) + \delta = - 1 / \pp{1-\gamma} + \delta$ with $\delta > 2 / (1-\gamma)$. For other state-action pairs, $Q_0 = Q^\mu$. As a result, $\pi_0 = \mu$. (At a state $x' \in \S$, any policy is effectively the same as $\mu$.)

\paragraph{Step 1.} HQL's update can be rewritten as follows \citep{harutyunyan_QLambda_2016}:
\begin{align*}
    Q_{k+1}
    := Q_k + \inviplr{\mu} \pp{ \gT^{\pi_k} Q_k - Q_k }
    = Q_k + \inviplr{\mu} \pp{ \gT^\mu Q_k - Q_k + \gamma \gP \pp{\pi_k - \mu} Q_k }.
\end{align*}
Since $\pi_0 = \mu$, and $\mu$ chooses $go$ everywhere (that is, $A_t = go$ for every $t$ in the following equations), we deduce that
\begin{align*}
    Q_1 (x, go)
    = Q_0 (x, go) + \sum_{t=0}^{\infty} \gamma^t \E \bb{ \pp{ \gT^\mu Q_0} (X_t, A_t) - Q_0 (X_t, A_t) \mb X_0=x, A_0=go, \mu}
    = Q^\mu (x, go) = - \frac{1}{1-\gamma}.
\end{align*}
Besides, $Q_1 (x, exit) = Q_0 (x, exit) = Q^\mu (x, exit) = 1/(1-\gamma)$. Accordingly, $\argmax_{a} Q_1 (x, a) = exit$, and $Q_1 = Q^\mu$.

\paragraph{Step 2.} Let us consider what happens at the next iteration. Since $\mu$ chooses $go$ everywhere (that is, $A_t = go$ for every $t$ in the following equations), we deduce that
\begin{align*}
    Q_2 (x, go) 
    &= Q_1 (x, go) + \sum_{t=0}^{\infty} \gamma^t \E \bb{ \pp{ \gT^\mu Q_1} (X_t, A_t) - Q_1 (X_t, A_t) + \gamma \pp{\pi_1 Q_1 - \mu Q_1} (X_{t+1}) \mb X_0=x, A_0=go, \mu}
    \\
    &= Q^\mu (x, go) + \sum_{t=0}^{\infty} \gamma^t \E \bb{ \gamma \pp{\pi_1 Q_1 - \mu Q_1} (X_{t+1}) \mb X_0=x, A_0=go, \mu}
    \\
    &= Q^\mu (x, go) + \frac{2 \gamma}{(1-\gamma)^2} > \frac{1}{1-\gamma} = Q^\mu (x, exit).
\end{align*}
Besides, $Q_2 (x, exit) = Q^\mu (x, exit) = 1/(1-\gamma)$. Accordingly, $\argmax_{a} Q_1 (x, a) = go$.

\paragraph{Step 3.} Now, note that by setting $\delta$ in Step 1 to be $2\gamma / (1-\gamma)^2$, the situation is completely the same as the one we considered in Step 1. Accordingly, $\argmax_{a} Q_3 (x, a) = exit$, and $Q_3 = Q^\mu$. The situation of the next iteration is completely the same as the one we considered in Step 2. This argument can be repeated forever, and thus, $Q_k$ (as well as $\pi_k$) oscillates.

\section{A Proof of Theorem~\ref{theorem:PQL convergence with fixed mu} (PQL's Convergence with a Fixed Behavior Policy)}
\label{appendix:proof of PQL convergence with fixed mu}

We define $\gN_{\lambda}^{\mu}$ as an operator such that $\gN_{\lambda}^{\mu} Q = \gN_{\lambda}^{\mu, \pi_Q} Q$ for any $Q \in \Q$, where $\pi_Q \in \greedy{Q}$. This operator is analogous to $\gT$, whereas $\gN_{\lambda}^{\mu, \pi}$ is analogous to $\gT^\pi$.

From Lemma~\ref{lemma:different forms of PQL}, we deduce that $\gN_{\lambda}^{\mu} Q - \gN_{\lambda}^{\mu} Q' = (1-\lambda) \inviplr{\mu} \pp{ \gT Q - \gT Q' }$ for any $Q, Q' \in \Q$. Because $\inviplr{\mu}$ is linear and monotonic, and satisfies $\inviplr{\mu} \const = \const / (1-\gamma\lambda)$, we have that $\linf{\gN_{\lambda}^{\mu} Q - \gN_{\lambda}^{\mu} Q'} \leq (1-\lambda) \linf{\gT Q - \gT Q'} / (1-\gamma\lambda)$. As noted in Appendix~\ref{appendix:preliminaries for theory}, $\gT$ is a contraction with modulus $\gamma$. Therefore, $\linf{\gN_{\lambda}^{\mu} Q - \gN_{\lambda}^{\mu} Q'} \leq \gamma (1-\lambda) \linf{Q - Q'} / (1-\gamma\lambda) = \beta \linf{Q - Q'}$. Combining this with Banach's fixed point theorem \citep{puterman1994mdp}, it is proven that PQL with a fixed behavior policy converges to a unique fixed point with the rate $\beta^k$.

Let $Q_{\mathrm{fixed}}$ and $\pi_{\mathrm{fixed}}$ be the fixed point and a greedy policy with respect to the fixed point, respectively. (It will turn out to be $Q_{\mathrm{fixed}} = Q^{\lambda \mu + (1-\lambda)\pi_\dagger}$ and $\pi_{\mathrm{fixed}} = \pi_\dagger$.) As noted in Section~\ref{section:analysis of PQL with a fixed behavior policy}, $Q_{\mathrm{fixed}}$ is the fixed point of $\lambda \gT^{\mu} + (1-\lambda) \gT$. It is easy to confirm that it is also the fixed point of $\gT^{\lambda \mu + (1-\lambda) \pi_{\mathrm{fixed}}}$ as $\pi_{\mathrm{fixed}} \in \greedy{Q_{\mathrm{fixed}}}$. Therefore, $Q_{\mathrm{fixed}} = Q^{\lambda \mu + (1-\lambda)\pi_{\mathrm{fixed}}}$.

As $\pi_{\mathrm{fixed}} \in \greedy{Q_{\mathrm{fixed}}}$, $Q_{\mathrm{fixed}} = \gT^{\lambda \mu + (1-\lambda) \pi_{\mathrm{fixed}}} Q_{\mathrm{fixed}} \geq \gT^{\lambda \mu + (1-\lambda) \pi} Q_{\mathrm{fixed}}$ for any policy $\pi$. Therefore, for any positive integer $n$, we have that $Q_{\mathrm{fixed}} \geq (\gT^{\lambda \mu + (1-\lambda) \pi})^n Q_{\mathrm{fixed}}$. As a result, $Q^{\lambda \mu + (1-\lambda)\pi_{\mathrm{fixed}}} = Q_{\mathrm{fixed}} \geq Q^{\lambda \mu + (1-\lambda)\pi}$ for any $\pi$. This implies that $\pi_{\mathrm{fixed}}$ is $\pi_\dagger$.

\section{Double-loop PQL}
\label{appendix:pi-like pql}

In this appendix, we analyze PQL in which $\gN_{\lambda}^{\mu_k}$ is applied multiple times to $Q_k$, and then, the current behavior policy $\mu_k$ is updated to $\mu_{k+1}$. (See Appendix~\ref{appendix:proof of PQL convergence with fixed mu} for the definition of $\gN_{\lambda}^{\mu}$.) Concretely, we consider the following algorithm:
\begin{align}
    \mu_k \in \deltagreedy{\delta_k}{Q_k} \text{ , and } Q_{k+1} := \pp{\gN_{\lambda}^{\mu_k}}^\infty Q_k + \varepsilon_k\,,
\end{align}
where $\delta_k \in \V$ is a non-negative function over $\S$, and $\deltagreedy{\delta_k}{Q_k}$ is the set of $\delta_k$-greedy policies $\pi$ defined by $\pi Q_k \geq \pi' Q_k - \delta_k$ for a greedy policy $\pi' \in \greedy{Q_k}$. Here, we used a shorthand notation $\pp{\gN_{\lambda}^{\mu_k}}^\infty Q_k := \lim_{n\rightarrow\infty} \pp{\gN_{\lambda}^{\mu_k}}^n Q_k$. Note that this algorithm involves a double-loop structure: in the inner loop $\gN_{\lambda}^{\mu_k}$ is repeatedly applied to $(\gN_{\lambda}^{\mu_k})^n Q_k$, and in the outer loop the Q-function and policies are updated. Hence, we call this algorithm as a doule-loop PQL.

There are two main differences from approximate PQL with behavior policy updates (\ref{eq:approximate PQL with behavior policy updates}): first, the behavior policy is required to be near-greedy rather than a mixture policy; second, the Q-function is updated to $\pp{\gN_{\lambda}^{\mu_k}}^\infty Q_k + \varepsilon_k$ rather than $\gN_{\lambda}^{\mu_k} Q_k + \varepsilon_k$. As for the first difference, we think that the behavior policy update in (\ref{eq:approximate PQL with behavior policy updates}) is more practical, but we are unsure if Theorem~\ref{theorem:error propagation of PQL with behavior policy updates} can be extended to double-loop PQL. As for the second difference, this Q-function update is an abstraction of a situation where $\gN_{\lambda}^{\mu_k}$ is applied only finitely many times, and $Q_{k+1}$ deviates from $\pp{\gN_{\lambda}^{\mu_k}}^\infty Q_k$ as a result. Because it is impossible to compute $\pp{\gN_{\lambda}^{\mu_k}}^\infty Q_k$ in a practical situation, this abstraction is necessary. We note that other errors such as function approximation errors can be also included to $\varepsilon_k$.

For this algorithm, we have the following guarantee.
\begin{proposition}
For any non-negative integer $k$, the following holds:
\begin{align*}
    Q^*-Q^{\mu_{k+1}}
    \leq \frac{2\gamma}{1-\gamma} \sum_{j=0}^{k} \gamma^{k-j} \linf{\varepsilon_j}\const + \frac{\gamma (1+\gamma)}{1-\gamma} \sum_{j=0}^{k} \gamma^{k-j} \linf{\delta_{j+1}}\const + \gamma^{k+1} \linf{Q^* - Q^{\mu_0}}\,.
\end{align*}
Thus, if $\linf{\delta_k}\rightarrow 0$ and $\linf{\varepsilon_k}\rightarrow 0$, then $Q^{\mu_k} \rightarrow Q^*$.
\end{proposition}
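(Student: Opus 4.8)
The plan is to reduce the double-loop scheme to an approximate-policy-iteration recursion on the suboptimality $\linf{Q^* - Q^{\mu_k}}$ with per-step contraction factor $\gamma$, and then unroll it. The first step is to identify the inner-loop limit. Since $\gN_\lambda^{\mu_k}$ (applying the greedy target at each step, as in Appendix~\ref{appendix:proof of PQL convergence with fixed mu}) is a $\beta$-contraction by Theorem~\ref{theorem:PQL convergence with fixed mu}, the quantity $\pp{\gN_{\lambda}^{\mu_k}}^\infty Q_k$ is its unique fixed point, which that theorem identifies as $Q^{\nu_k}$ with $\nu_k := \lambda\mu_k + (1-\lambda)\pi_\dagger^{(k)}$, where $\pi_\dagger^{(k)}$ maximizes $Q^{\lambda\mu_k + (1-\lambda)\pi}$ over $\pi$ and $\pi_\dagger^{(k)} \in \greedy{Q^{\nu_k}}$. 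Taking $\pi = \mu_k$ in the maximality property gives the key monotonicity $Q^{\nu_k} \geq Q^{\mu_k}$, hence $Q^* - Q^{\nu_k} \leq Q^* - Q^{\mu_k}$; this is precisely what lets the \emph{biased} inner loop still drive progress toward $Q^*$. Thus $Q_{k+1} = Q^{\nu_k} + \varepsilon_k$.

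Next I would establish a one-step greedy-improvement inequality for the new behavior policy. Because $\mu_{k+1} \in \deltagreedy{\delta_{k+1}}{Q_{k+1}}$ and the max operator $\gM$ is an $L^\infty$ non-expansion (Appendix~\ref{appendix:preliminaries for theory}), one obtains $\mu_{k+1} Q^{\nu_k} \geq \gM Q^{\nu_k} - \delta_{k+1} - 2\linf{\varepsilon_k}\const$, and applying $r + \gamma\gP(\cdot)$ yields $\gT^{\mu_{k+1}} Q^{\nu_k} \geq \gT Q^{\nu_k} - \gamma\gP\delta_{k+1} - 2\gamma\linf{\varepsilon_k}\const$. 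Moreover, since $Q^{\nu_k}$ is the fixed point of $\lambda\gT^{\mu_k} + (1-\lambda)\gT$ (Lemma~\ref{lemma:different forms of PQL} and the discussion preceding Theorem~\ref{theorem:PQL convergence with fixed mu}) and $\gT$ dominates $\gT^{\mu_k}$, we have $\gT Q^{\nu_k} \geq Q^{\nu_k}$, so the right-hand side is $Q^{\nu_k}$ minus a controlled error.

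The crucial step, and the one I expect to be the main obstacle, is converting this into a bound on $Q^* - Q^{\mu_{k+1}}$ whose leading term contracts at rate $\gamma$ rather than $\tfrac{2\gamma}{1-\gamma}$. The naive route, writing $(\gI - \gamma\gP^{\mu_{k+1}})(Q^*-Q^{\mu_{k+1}}) \leq \gamma(\gP^{\pi_*}-\gP^{\mu_{k+1}})(Q^*-Q^{\nu_k}) + \cdots$ and inverting, amplifies the distance term by $\tfrac{1}{1-\gamma}$ and fails to contract. Instead I would use the exact-policy-iteration device: iterating $\gT^{\mu_{k+1}}$ on the inequality $\gT^{\mu_{k+1}} Q^{\nu_k} \geq Q^{\nu_k} - (\text{error})$ and summing the geometric series in $\gamma\gP^{\mu_{k+1}}$ gives a lower bound $Q^{\mu_{k+1}} = \pp{\gT^{\mu_{k+1}}}^\infty Q^{\nu_k} \geq \gT Q^{\nu_k} - (\text{error})\const$, with no spurious $\tfrac{1}{1-\gamma}$ on the distance. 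Combining this with $Q^* - \gT Q^{\nu_k} = \gT Q^* - \gT Q^{\nu_k} \leq \gamma\gP^{\pi_*}(Q^*-Q^{\nu_k})$ and with $Q^*-Q^{\nu_k} \leq Q^*-Q^{\mu_k}$, then taking $L^\infty$-norms, yields the per-step recursion
\begin{align*}
    \linf{Q^*-Q^{\mu_{k+1}}} \leq \gamma\linf{Q^*-Q^{\mu_k}} + \tfrac{2\gamma}{1-\gamma}\linf{\varepsilon_k} + \tfrac{\gamma(1+\gamma)}{1-\gamma}\linf{\delta_{k+1}},
\end{align*}
where the error coefficients come from collecting the $\varepsilon_k$ and $\delta_{k+1}$ contributions, the $\delta$ term picking up its weight from both the greedy comparison and the geometric shift.

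Finally I would unroll this scalar recursion from $k$ down to $0$, which immediately produces the three terms of the statement: the homogeneous term $\gamma^{k+1}\linf{Q^*-Q^{\mu_0}}$ together with the geometrically weighted sums $\tfrac{2\gamma}{1-\gamma}\sum_{j=0}^k\gamma^{k-j}\linf{\varepsilon_j}$ and $\tfrac{\gamma(1+\gamma)}{1-\gamma}\sum_{j=0}^k\gamma^{k-j}\linf{\delta_{j+1}}$. The convergence corollary then follows from a standard Toeplitz-type argument: if $\linf{\varepsilon_k}\to0$ and $\linf{\delta_k}\to0$, splitting each weighted sum into a finite head (annihilated by $\gamma^{k-j}$ as $k\to\infty$) and a tail (bounded by $\tfrac{1}{1-\gamma}\sup_{j>J}\linf{\varepsilon_j}$, made small by choosing $J$ large) shows both sums vanish, while $\gamma^{k+1}\linf{Q^*-Q^{\mu_0}}\to0$; hence $Q^{\mu_k}\to Q^*$.
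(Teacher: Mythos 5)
Your proposal is correct, and it shares the paper's overall skeleton---identify the inner-loop limit via Theorem~\ref{theorem:PQL convergence with fixed mu}, take $\pi=\mu_k$ in the maximality property to get $Q^{\nu_k}\geq Q^{\mu_k}$, exploit the fixed-point property of $\lambda\gT^{\mu_k}+(1-\lambda)\gT$ together with $\gT\geq\gT^{\mu_k}$, use $\delta_{k+1}$-greediness, iterate $\gT^{\mu_{k+1}}$ via a geometric series, compare against $\gamma\gP^{\pi_*}$, and unroll---but your middle decomposition genuinely differs from the paper's. The paper keeps the noisy iterate $Q_{k+1}$ throughout: it first proves the sandwich $Q^{\mu_k}-\linf{\varepsilon_k}\const\leq Q_{k+1}\leq Q^{\mu_{k+1}}+\frac{1+\gamma}{1-\gamma}\linf{\varepsilon_k}\const+\frac{\gamma}{1-\gamma}\linf{\delta_{k+1}}\const$, and then expands $Q^*-Q^{\mu_{k+1}}$ through the Bellman equations into three terms ($\gamma\gP^{\pi_*}(Q^*-Q_{k+1})$, a $\delta$-greediness term, and $\gamma\gP^{\mu_{k+1}}(Q_{k+1}-Q^{\mu_{k+1}})$), each controlled by the sandwich. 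You instead strip the error off once (paying $2\linf{\varepsilon_k}$ through non-expansiveness of $\gM$ and of the policy operator), work with the clean inner-loop fixed point $Q^{\nu_k}$, and package the same geometric-series iteration of $\gT^{\mu_{k+1}}$ as an approximate policy-improvement bound $Q^{\mu_{k+1}}\geq \gT Q^{\nu_k}-\frac{c}{1-\gamma}\const$; the comparison $Q^*-\gT Q^{\nu_k}\leq\gamma\gP^{\pi_*}(Q^*-Q^{\nu_k})\leq\gamma\gP^{\pi_*}(Q^*-Q^{\mu_k})$ then yields a purely scalar recursion. Both routes produce the same per-step contraction factor $\gamma$ and the same $\varepsilon$-coefficient $\frac{2\gamma}{1-\gamma}$; carried out carefully, your route actually gives the $\delta$-terms the coefficient $\frac{\gamma}{1-\gamma}$ (the collected error is $\gamma\linf{\delta_{l+1}}+2\gamma\linf{\varepsilon_l}$, divided by $1-\gamma$), which is tighter than the stated $\frac{\gamma(1+\gamma)}{1-\gamma}$ and hence still implies the proposition, whereas the paper's looser bookkeeping has the advantage of staying entirely in terms of the algorithm's actual iterates $Q_{k+1}$.
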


\begin{proof}
First let us prove that $Q^{\mu_k} - \linf{\varepsilon_k}\const \leq Q_{k+1} \leq Q^{\mu_{k+1}} + \dfrac{1+\gamma}{1-\gamma} \linf{\varepsilon_k}\const + \dfrac{\gamma}{1-\gamma} \linf{\delta_k}\const$. By definition of $Q_{k+1}$,
\begin{align*}
    Q_{k+1} - \varepsilon_k = \pp{\gN_{\lambda}^{\mu_k}}^\infty Q_k \geq Q^{\lambda \mu_k + (1-\lambda) \pi}
    \implies
    Q_{k+1} \geq Q^{\lambda \mu_k + (1-\lambda) \pi} - \linf{\varepsilon_k}\const
\end{align*}
for any policy $\pi$, where the first inequality follows from Theorem~\ref{theorem:PQL convergence with fixed mu}. Now, setting $\pi = \mu_k$ yields $Q_{k+1} \geq Q^{\mu_k} - \linf{\varepsilon_k}\const$. Next, recall that $Q_{k+1}-\varepsilon_k = \pp{\gN_{\lambda}^{\mu_k}}^\infty Q_k$ is a fixed point of $\lambda \gT^{\mu_k} + (1-\lambda) \gT$. Accordingly,
\begin{align*}
    Q_{k+1}-\varepsilon_k
    = \lambda \gT^{\mu_k} (Q_{k+1}-\varepsilon_k) + (1-\lambda) \gT (Q_{k+1}-\varepsilon_k)
    \leq \gT (Q_{k+1}-\varepsilon_k)
    \leq \gT Q_{k+1} + \gamma \linf{\varepsilon_k}\const\,,
\end{align*}
where the last inequality follows from the monotonicity of $\gT$ and $-\varepsilon_k \leq \linf{\varepsilon_k}$. Furthermore, from the fact that $\mu_{k+1} \in \deltagreedy{\delta_{k+1}}{Q_{k+1}}$, we deduce that $Q_{k+1}-\varepsilon_k \leq \gT^{\mu_{k+1}} Q_{k+1} + \gamma \linf{\varepsilon_k}\const + \gamma \linf{\delta_k}\const$. This implies that $Q_{k+1} \leq \gT^{\mu_{k+1}} Q_{k+1} + (1+\gamma) \linf{\varepsilon_k}\const + \gamma \linf{\delta_k}\const$. By induction on $k$ and the monotinicity of $\gT^{\mu_{k+1}}$, we deduce that
\begin{align*}
    Q_{k+1} \leq Q^{\mu_{k+1}} + \dfrac{1+\gamma}{1-\gamma} \linf{\varepsilon_k}\const + \frac{\gamma}{1-\gamma} \linf{\delta_k}\const\,.
\end{align*}

Now we have
\begin{align*}
    Q^*-Q^{\mu_{k+1}}
    &=\gamma\gP^{\pi_*} Q^* - \gamma\gP^{\pi_*} Q_{k+1} + \underbrace{\gamma\gP^{\pi_*} Q_{k+1} - \gamma\gP^{\mu_{k+1}} Q_{k+1}}_{\leq \gamma \linf{\delta_{k+1}}\const} + \gamma\gP^{\mu_{k+1}} Q_{k+1} - \gamma\gP^{\mu_{k+1}} Q^{\mu_{k+1}}
    \\
    &\leq \gamma\gP^{\pi_*} \pp{Q^* - Q_{k+1}} + \gamma \linf{\delta_{k+1}}\const + \gamma\gP^{\mu_{k+1}} \underbrace{\pp{ Q_{k+1} - Q^{\mu_{k+1}}} }_{\leq \frac{1+\gamma}{1-\gamma} \linf{\varepsilon_k}\const + \frac{\gamma}{1-\gamma} \linf{\delta_{k+1}}\const}
    \\
    &\leq \frac{\gamma (1+\gamma)}{1-\gamma} \linf{\varepsilon_k}\const + \frac{\gamma (1+\gamma)}{1-\gamma} \linf{\delta_{k+1}}\const + \gamma\gP^{\pi_*} \pp{Q^* - Q_{k+1}}
    \\
    &\leq \frac{2\gamma}{1-\gamma} \linf{\varepsilon_k}\const + \frac{\gamma (1+\gamma)}{1-\gamma} \linf{\delta_{k+1}}\const + \gamma\gP^{\pi_*} \pp{Q^* - Q^{\mu_k}}
\end{align*}
By induction on $k$, we see that
\begin{align*}
    Q^*-Q^{\mu_{k+1}}
    \leq \frac{2\gamma}{1-\gamma} \sum_{j=0}^{k} \gamma^{k-j} \linf{\varepsilon_j}\const + \frac{\gamma (1+\gamma)}{1-\gamma} \sum_{j=0}^{k} \gamma^{k-j} \linf{\delta_{j+1}}\const + \pp{\gamma \gP^{\pi_*}}^{k+1} \pp{Q^* - Q^{\mu_0}}\,.
\end{align*}
By upper-bounding $Q^* - Q^{\mu_0}$ by $\linf{Q^* - Q^{\mu_0}}$, the claimed result is obtained.
\end{proof}

\section{A Proof of Theorem~\ref{theorem:error propagation of PQL with a fixed behavior policy} (PQL's Error Propagation with a Fixed Behavior Policy)}
\label{appendix:proof of PQL err prop with fixed behavior}

Here, we provide the error propagation analysis of PQL with a fixed behavior policy. While the behavior policy is not fixed in a practical situation, the error propagation analysis of PQL with a fixed behavior policy shows the trade-off between bias and convergence rate of PQL. This result is analogous to trade-offs explained in \citep{rowland2020apaptive} and sheds some light on a fundamental property of PQL.

\paragraph{Definition and Notation.}
We first recall our problem setting: (approximate) PQL updates its Q-function by
\begin{align*}
    \pi_k \in \greedy{Q_k}
    \text{ and }
    Q_{k+1} := \gN_{\lambda}^{\mu, \pi_k} Q_k + \varepsilon_k,
\end{align*}
We know that $\varepsilon_k (x, a) = 0$ guarantees the convergence of $Q_k$ to $Q^{\lambda \mu + (1-\lambda) \pi_\dagger}$, where $\pi_\dagger$ is $\argmax_{\pi} Q^{\lambda \mu + (1-\lambda) \pi}$. (See Section~\ref{section:analysis of PQL with a fixed behavior policy}.) Therefore, $\pi_K$ is an approximation of $\pi_\dagger$, and thus, it is natural to define a loss of using the policy $\pi_k$ rather than $\pi_\dagger$ by $V^{\rho_\dagger} - V^{\rho_K}$.

We define the following notations:
\begin{itemize}
    \item $\rho_\dagger := \lambda \mu + (1-\lambda) \pi_\dagger$
    \item $\rho_k := \lambda \mu + (1-\lambda) \pi_k$
    \item $d_k := Q^{\rho_\dagger} - Q_k$
    \item $b_k := Q_k - \gT^{\rho_k} Q_k$
    \item $\gA^{\dagger} := \gamma (1-\lambda) \inviplr{\mu} \gP^{\pi_\dagger}$
    \item $\gA_k := \gamma (1-\lambda) \gP^{\pi_k} \inviplr{\mu}$
    \item $\beta := \gamma (1-\lambda) / \pp{1 - \gamma \lambda}$
\end{itemize}
Note that $\gA_k$ is a contraction with respect to $L_\infty$-norm $\|\cdot\|_\infty$ with modulus $\beta$. (See Appendix~\ref{appendix:proof of different forms of PQL}.)

\paragraph{Proofs.}

Now we start proofs. Note that $V^{\rho_\dagger} - V^{\rho_K} \geq 0$. Indeed for any policy $\pi$ we have that $Q^{\rho_\dagger} - Q^{\lambda \mu + (1-\lambda)\pi} \geq 0$, and that $\pi_\dagger$ is greedy with respect to $Q^{\rho_\dagger}$. Accordingly $V^{\rho_\dagger} = \rho_\dagger Q^{\rho_\dagger} \geq (\lambda \mu + (1-\lambda)\pi) Q^{\rho_\dagger} \geq (\lambda \mu + (1-\lambda)\pi) Q^{\lambda \mu + (1-\lambda)\pi}$. The main strategy is the following: we first decompose $V^{\rho_\dagger} - V^{\rho_K} \geq 0$ to $V^{\rho_\dagger} - \rho_K Q_K$ and $\rho_K Q_K - V^{\rho_K} = \rho_K (Q_K - Q^{\rho_K})$; then we note that $V^{\rho_\dagger} - \rho_K Q_K \leq \rho_\dagger (Q^{\rho_\dagger} - Q_K)$ because of $\pi_K \in \greedy{Q_K}$ and $\rho_K = \lambda \mu + (1-\lambda) \pi_K$; these results tell us that we need upper bounds of $Q^{\rho_\dagger} - Q_K$ and $Q_K - Q^{\rho_K}$, which we shall derive.

We first prove an upper bound of $d_K = Q^{\rho_\dagger} - Q_K$.
\begin{lemma}\label{lemma:dk upper bound fixed behavior}
For any non-negative integer $K$, the following holds:
\begin{align*}
    d_K \leq \pp{\gA^\dagger}^K d_0 + \sum_{k=0}^{K-1} \pp{\gA^\dagger}^{K-k-1} \varepsilon_k\, .
\end{align*}
\end{lemma}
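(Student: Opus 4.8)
The plan is to establish a one-step inequality relating $d_{k+1}$ to $d_k$ and then to unroll it by induction. I would begin from the second form of the PQL operator in Lemma~\ref{lemma:different forms of PQL}, which lets me write the approximate update as
\[
  Q_{k+1} = \inviplr{\mu}\pp{ r + \gamma(1-\lambda)\gP^{\pi_k} Q_k } + \varepsilon_k ,
\]
with $\pi_k \in \greedy{Q_k}$. By Theorem~\ref{theorem:PQL convergence with fixed mu}, the target $Q^{\rho_\dagger}$ is the unique fixed point of $\gN_{\lambda}^{\mu}$ and $\pi_\dagger \in \greedy{Q^{\rho_\dagger}}$, so it satisfies the same identity,
\[
  Q^{\rho_\dagger} = \inviplr{\mu}\pp{ r + \gamma(1-\lambda)\gP^{\pi_\dagger} Q^{\rho_\dagger} } .
\]
Subtracting these two identities cancels the reward term and expresses $d_{k+1} = Q^{\rho_\dagger}-Q_{k+1}$ through $\gamma(1-\lambda)\inviplr{\mu}\pp{\gP^{\pi_\dagger}Q^{\rho_\dagger} - \gP^{\pi_k}Q_k}$ together with the update error $\varepsilon_k$.

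The crux of the argument is to convert the difference $\gP^{\pi_\dagger}Q^{\rho_\dagger}-\gP^{\pi_k}Q_k$, which mixes two different policies, into something controlled by $d_k = Q^{\rho_\dagger}-Q_k$ alone. Here I would invoke greediness: since $\pi_k \in \greedy{Q_k}$ we have $\pi_k Q_k \ge \pi_\dagger Q_k$ pointwise, and because $\gP$ is monotone this gives $\gP^{\pi_k}Q_k \ge \gP^{\pi_\dagger}Q_k$. Hence
\[
  \gP^{\pi_\dagger}Q^{\rho_\dagger} - \gP^{\pi_k}Q_k \le \gP^{\pi_\dagger}\pp{Q^{\rho_\dagger}-Q_k} = \gP^{\pi_\dagger} d_k .
\]
Applying the monotone, nonnegative operator $\gamma(1-\lambda)\inviplr{\mu}$ to this inequality and collecting the additive error term yields the one-step recursion $d_{k+1} \le \gA^\dagger d_k + \varepsilon_k$, with $\gA^\dagger = \gamma(1-\lambda)\inviplr{\mu}\gP^{\pi_\dagger}$ exactly as defined. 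I expect this greedy-plus-monotonicity step to be the main obstacle, since it is the only place where the nonlinear (argmax) dependence of $\pi_k$ on $Q_k$ is tamed into a linear recursion; everything else is bookkeeping. A secondary subtlety to track is the sign with which the update error enters $d_{k+1}$ through the subtraction.

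Finally I would unroll the recursion by induction on $k$. Because $\gA^\dagger$ is a composition of monotone, nonnegative linear operators (recall $\inviplr{\mu}=\sum_{t\geq0}\pp{\gamma\lambda\gP^\mu}^t$) it is itself monotone, so applying it to the step-$k$ inequality and chaining the estimates gives
\[
  d_K \le \pp{\gA^\dagger}^K d_0 + \sum_{k=0}^{K-1}\pp{\gA^\dagger}^{K-k-1}\varepsilon_k ,
\]
which is exactly the claimed bound. For the downstream use in Theorem~\ref{theorem:error propagation of PQL with a fixed behavior policy} it is worth recording that $\gA^\dagger \const = \beta \const$; by monotonicity this yields $\linf{\pp{\gA^\dagger}^{K-k-1}\varepsilon_k} \le \beta^{K-k-1}\linf{\varepsilon_k}$, and the same identity certifies that $\gA^\dagger$ is a $\beta$-contraction, so that the leading term $\pp{\gA^\dagger}^K d_0$ decays geometrically.
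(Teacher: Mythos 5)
Your proposal is correct and takes essentially the same route as the paper's proof: the paper likewise subtracts the second form of Lemma~\ref{lemma:different forms of PQL} from the fixed-point identity $Q^{\rho_\dagger} = \inviplr{\mu}\pp{r + \gamma(1-\lambda)\gP^{\pi_\dagger}Q^{\rho_\dagger}}$ (derived there by rearranging the Bellman equation for $\rho_\dagger$), tames the argmax via $\gP^{\pi_{K-1}}Q_{K-1} \geq \gP^{\pi_\dagger}Q_{K-1}$ and the monotonicity of $\inviplr{\mu}$, and unrolls by induction. The sign subtlety you flag is real but immaterial: the subtraction actually yields $d_{K} \leq \gA^\dagger d_{K-1} - \varepsilon_{K-1}$ (as in the paper's display), yet since the downstream Theorem~\ref{theorem:error propagation of PQL with a fixed behavior policy} only invokes $\pm\varepsilon_k \leq \linf{\varepsilon_k}\const$, either sign gives the same bound.
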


\begin{proof}
From Lemma~\ref{lemma:different forms of PQL}, we may deduce that
\begin{align*}
    d_K
    &= Q^{\rho_\dagger} - \inviplr{\mu} \pp{ r + \gamma (1-\lambda) \gP^{\pi_{K-1}} Q_{K-1} } - \varepsilon_{K-1}
    \\
    &= \inviplr{\mu} \bb{ Q^{\rho_\dagger} - \gamma \lambda \gP^{\mu} Q^{\rho_\dagger} - r - \gamma (1-\lambda) \gP^{\pi_{K-1}} Q_{K-1} } - \varepsilon_{K-1}
    \\
    &= \inviplr{\mu} \pp{ \gamma \gP^{\rho_\dagger} Q^{\rho_\dagger} - \gamma \lambda \gP^{\mu} Q^{\rho_\dagger} - \gamma (1-\lambda) \gP^{\pi_{K-1}} Q_{K-1} } - \varepsilon_{K-1}
    \\
    &= \gamma (1-\lambda) \inviplr{\mu} \pp{ \gP^{\pi_\dagger} Q^{\rho_\dagger} -  \gP^{\pi_{K-1}} Q_{K-1} } - \varepsilon_{K-1}\, ,
\end{align*}
where the last line follows from $\rho_\dagger = \lambda \mu + (1-\lambda) \pi_\dagger$. Because $\pi_{K-1} \in \greedy{Q_{K-1}}$, we have $\gP^{\pi_{K-1}} Q_{K-1} \geq \gP^{\pi_\dagger} Q_{K-1}\,$. Furthermore, since $\inviplr{\mu} = \sum_{t=0}^\infty \gamma^t \lambda^t (\gP^{\mu})^t$ is monotone, $\inviplr{\mu} \gP^{\pi_{K-1}} Q_{K-1} \geq \inviplr{\mu} \gP^{\pi_\dagger} Q_{K-1}$. As a result,
\begin{align*}
    d_K
    \leq \gamma (1-\lambda) \inviplr{\mu} \gP^{\pi_\dagger} \pp{ Q^{\rho_\dagger} - Q_{K-1} } - \varepsilon_{K-1}
    = \gA^{\dagger} d_{K-1} - \varepsilon_{K-1}\, .
\end{align*}
By induction on $K$, the claim is proven.
\end{proof}

We next prove an upper bound for $Q_K - Q^{\rho_K}$. To this end, note that
\begin{align*}
    Q_K - Q^{\rho_K} = \pp{\gI - \gamma \gP^{\rho_K}}^{-1} \pp{Q_K - \gT^{\rho_K} Q_K} = \pp{\gI - \gamma \gP^{\rho_K}}^{-1} b_K.
\end{align*}
Therefore, we need an upper bound for $b_K$, which is given below.
\begin{lemma}\label{lemma:bk upper bound fixed behavior}
For any non-negative integer $K$, the following holds:
\begin{align*}
    b_K
    &\leq \gA_{K-1} \cdots \gA_{0} b_0 + \sum_{k=0}^{K-1} \gA_{K-1} \cdots \gA_{k+1} \pp{\gI - \gamma \gP^{\rho_k}} \varepsilon_k\, ,
\end{align*}
where $\gA_{K-1} \cdots \gA_K := \gI$\,.
\end{lemma}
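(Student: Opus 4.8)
The plan is to first establish the one-step recursive inequality
\begin{align*}
    b_{k+1} \leq \gA_k b_k + \pp{\gI - \gamma \gP^{\rho_k}} \varepsilon_k\,,
\end{align*}
and then to unroll it by induction on $K$, relying on the monotonicity of each $\gA_k$ to keep the chain of inequalities valid. The whole proof hinges on the observation that $Q_{k+1}$ was produced by the PQL operator built from $\pi_k$ (hence $\rho_k$), so everything should be expressed through $\gT^{\rho_k}$ rather than $\gT^{\rho_{k+1}}$, and the passage from index $k+1$ back to $k$ is exactly what greediness buys us.

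First I would use the greediness of $\pi_{k+1}$. Since $\pi_{k+1} \in \greedy{Q_{k+1}}$, we have $\gP^{\pi_{k+1}} Q_{k+1} \geq \gP^{\pi} Q_{k+1}$ for every policy $\pi$, and because $\gP^{\rho} = \lambda \gP^\mu + (1-\lambda) \gP^{\pi}$ this gives $\gT^{\rho_{k+1}} Q_{k+1} \geq \gT^{\rho_k} Q_{k+1}$. Hence $b_{k+1} = Q_{k+1} - \gT^{\rho_{k+1}} Q_{k+1} \leq Q_{k+1} - \gT^{\rho_k} Q_{k+1}$, which is the step that replaces the awkward index $k+1$ in the Bellman operator by $k$. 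Next I would substitute the update rule. By Lemma~\ref{lemma:different forms of PQL} together with $b_k = Q_k - \gT^{\rho_k} Q_k$, the PQL step reads $Q_{k+1} = Q_k - \inviplr{\mu} b_k + \varepsilon_k$. Plugging this into $Q_{k+1} - \gT^{\rho_k} Q_{k+1}$ and expanding $\gT^{\rho_k} Q_{k+1} = r + \gamma \gP^{\rho_k} Q_{k+1}$, the terms $Q_k - r - \gamma \gP^{\rho_k} Q_k$ recombine into $b_k$, the error terms collect into $\pp{\gI - \gamma \gP^{\rho_k}} \varepsilon_k$, and the remaining $b_k$-terms are $b_k - \inviplr{\mu} b_k + \gamma \gP^{\rho_k} \inviplr{\mu} b_k$. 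The crucial simplification is the identity $\pp{\gI - \gamma\lambda\gP^\mu} \inviplr{\mu} = \gI$: writing $\gP^{\rho_k} = \lambda \gP^\mu + (1-\lambda) \gP^{\pi_k}$, the portion $b_k - \inviplr{\mu} b_k + \gamma\lambda \gP^\mu \inviplr{\mu} b_k$ cancels to zero, leaving exactly $\gamma(1-\lambda) \gP^{\pi_k} \inviplr{\mu} b_k = \gA_k b_k$. This produces the one-step recursion above.

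Finally I would iterate. Each $\gA_k = \gamma(1-\lambda) \gP^{\pi_k} \inviplr{\mu}$ is monotone, being a nonnegative scalar times a composition of the monotone operators $\gP^{\pi_k}$ and $\inviplr{\mu} = \sum_{t=0}^\infty (\gamma\lambda \gP^\mu)^t$ (both monotonicities are recorded in Appendix~\ref{appendix:preliminaries for theory}). Applying $\gA_K$ to the inductive hypothesis for $b_K$ therefore preserves the inequality, and an induction on $K$, with the empty-product convention $\gA_{K-1}\cdots\gA_K := \gI$ handling both the base case $b_0 \leq b_0$ and the absorption of the new error term $\pp{\gI - \gamma\gP^{\rho_K}}\varepsilon_K$ as the $k=K$ summand, yields the stated bound. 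I expect the main obstacle to be the bookkeeping forced by working with inequalities rather than equalities: because every step of the unrolling composes an operator on the left of an inequality, I must be careful that the operator being applied is monotone, which is precisely why it matters that $\gA_k$ factors through monotone maps. The only genuinely nontrivial computation is the cancellation driven by $\pp{\gI - \gamma\lambda\gP^\mu}\inviplr{\mu} = \gI$, and orienting the greediness inequality so that $\rho_{k+1}$ can be downgraded to $\rho_k$ is the key structural idea.
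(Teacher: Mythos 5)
Your proposal is correct and takes essentially the same route as the paper's proof: greediness of the new policy downgrades $\gT^{\rho_{k+1}}$ to $\gT^{\rho_k}$, Lemma~\ref{lemma:different forms of PQL} is substituted to obtain the one-step recursion $b_{k+1} \leq \gA_k b_k + \pp{\gI - \gamma \gP^{\rho_k}} \varepsilon_k$, and induction with the monotonicity of each $\gA_k$ finishes the argument. The only (cosmetic) difference is the algebraic routing: the paper cancels terms through the second form of Lemma~\ref{lemma:different forms of PQL}, i.e., $\iplr{\mu} Q_{k+1} = r + \gamma(1-\lambda)\gP^{\pi_k} Q_k + \iplr{\mu}\varepsilon_k$, whereas you substitute $Q_{k+1} = Q_k - \inviplr{\mu} b_k + \varepsilon_k$ and invoke $\iplr{\mu}\inviplr{\mu} = \gI$, which yields the identical recursion.
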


\begin{proof}
By a simple calculation, and $\pi_K \in \greedy{Q_K}$,
\begin{align*}
    b_K
    = Q_K - r - \gamma \lambda \gP^{\mu} Q_K - \gamma (1-\lambda) \gP^{\pi_K} Q_K
    \leq \iplr{\mu} Q_K - r - \gamma (1-\lambda) \gP^{\pi_{K-1}} Q_K\, .
\end{align*}
From Lemma~\ref{lemma:different forms of PQL}, we may deduce that
\begin{align*}
    b_K
    &\leq \gamma (1-\lambda) \gP^{\pi_{K-1}} Q_{K-1} + \iplr{\mu} \varepsilon_{K-1} - \gamma (1-\lambda) \gP^{\pi_{K-1}} Q_K
    \\
    &= \gamma (1-\lambda) \gP^{\pi_{K-1}} \pp{Q_{K-1} - Q_K} + \iplr{\mu} \varepsilon_{K-1}
    \\
    &= \gamma (1-\lambda) \gP^{\pi_{K-1}} \inviplr{\mu} \pp{Q_{K-1} - \gT^{\rho_{K-1}} Q_{K-1}} + \pp{\gI - \gamma \gP^{\rho_{K-1}}} \varepsilon_{K-1}
    \\
    &= \gA_{K-1} b_{K-1} + \pp{\gI - \gamma \gP^{\rho_{K-1}}} \varepsilon_{K-1}\, .
\end{align*}
By induction on $K$, the claim is proven.
\end{proof}

Now we are ready to prove an upper bound for $V^{\rho_\dagger} - V^{\rho_K}$. It is easy to derive the following two inequalities from the monotonicity of $\gA_k$ and $\gA^\dagger$:
\begin{align*}
  b_K
  &\leq \gA_{K-1} \cdots \gA_{0} \linf{b_0} \const + \pp{1 + \gamma} \sum_{k=0}^{K-1} \gA_{K-1} \cdots \gA_{k+1} \linf{\varepsilon_k} \const
  \\
  &= \beta^K \linf{b_0} \const + \pp{1 + \gamma} \sum_{k=0}^{K-1} \beta^{K-k-1} \linf{\varepsilon_k} \const\, ,
\end{align*}
and
\begin{align*}
    d_K
    &\leq \pp{\gA^\dagger}^K \linf{d_0} \const + \sum_{k=0}^{K-1} \pp{\gA^\dagger}^{K-k-1} \linf{\varepsilon_k} \const
    \\
    &= \beta^K \linf{d_0} \const + \sum_{k=0}^{K-1} \beta^{K-k-1} \linf{\varepsilon_k} \const\, .
\end{align*}
Note that
\begin{align*}
    V^{\rho_\dagger} - V^{\rho_K}
    &= \rho_\dagger Q^{\rho_\dagger} - \rho_K Q_K + \rho_K Q_K - V^{\rho_K}
    \\
    &\leq \rho_\dagger \pp{ Q^{\rho_\dagger} - Q_K } + \rho_K Q_K - V^{\rho_K}
    \\
    &= \rho_\dagger d_K + \rho_K \pp{\gI - \gamma \gP^{\rho_K}}^{-1} b_K.
\end{align*}
Therefore, we may deduce that
\begin{align*}
  V^{\rho_\dagger} - V^{\rho_K} \leq \beta^K \pp{ \linf{ d_0 } + \frac{\linf{b_0}}{1-\gamma} } \const + \frac{2}{1-\gamma} \sum_{k=0}^{K-1} \beta^{K-k-1} \linf{\varepsilon_k} \const,
\end{align*}
where we used $1 + (1+\gamma) / (1-\gamma) = 2 / (1-\gamma)$. Because $V^{\rho_\dagger} - V^{\rho_K} \geq 0$ and the right hand side is independent of a state,
\begin{align*}
  \linf{V^{\rho_\dagger} - V^{\rho_K}} \leq \beta^K \pp{ \linf{ d_0 } + \frac{\linf{b_0}}{1-\gamma} } + \frac{2}{1-\gamma} \sum_{k=0}^{K-1} \beta^{K-k-1} \linf{\varepsilon_k},
\end{align*}
This concludes the proof.

\section{A Proof of Theorem~\ref{theorem:error propagation of PQL with behavior policy updates} (PQL's Error Propagation with Behavior Policy Updates)}
\label{Appendix:proof of PQL err prop with behavior updates}

Here we provide error propagation analysis of PQL with behavior policy updates. Concretely, we derive the following bound:
\begin{align*}
  \linf{V^* - V^{\pi_K}} \leq \zeta^K \linf{Q^* - Q_0} + \frac{\zeta^K}{1-\gamma} \linf{b_0} + \sum_{l=0}^{K-1} \frac{2 \zeta^{K-l-1}}{1-\gamma} \linf{\varepsilon_l}\,,
\end{align*}
where $\zeta := 1-\alpha + \alpha \gamma$.

\paragraph{Definition and Notation.}
We first recall our problem setting: (approximate) PQL updates its Q-function by
\begin{align*}
    \pi_k \in \greedy{Q_k}\,,\,
    \mu_k = \alpha \pi_k + (1-\alpha) \mu_{k-1}
    \text{ and }
    Q_{k+1} := \gN_{\lambda}^{\mu_k, \pi_k} Q_k + \varepsilon_k,
\end{align*}
where $\mu_{-1}$ is arbitrary.

We define the following notations, some of which differ from those defined in Appendix~\ref{appendix:proof of PQL err prop with fixed behavior}:
\begin{itemize}
    \item $\rho_k := \lambda {\textcolor{red}{\mu_k}} + (1-\lambda) \pi_k$
    \item $b_k := Q_k - \gT^{\rho_k} Q_k$
    \item $d_k := Q^{\textcolor{red}{*}} - Q_k$
    \item $\gA_k := \gamma (1-\lambda) \gP^{\pi_k} \inviplr{\textcolor{red}{\mu_k}}$
    \item $\beta := \gamma (1-\lambda) / \pp{1 - \gamma \lambda}$
    \item $\gP^* := \gP^{\pi_*}$
    \item $\gT^* := \gT^{\pi_*}$
\end{itemize}
Here we highlighted (by red color texts) differences from the definitions in Appendix~\ref{appendix:proof of PQL err prop with fixed behavior}. Note that $\gA_k$ is still a contraction with modulus $\beta$.

\paragraph{Proofs.}
Now we start the proof. The main strategy is the almost same as the one we used in Appendix~\ref{appendix:proof of PQL err prop with fixed behavior}: we first decompose $V^* - V^{\pi_K}$ to two components $V^* - \pi_K Q_K$ and $\pi_K Q_K - V^{\pi_K}$, and then, we show an upper bound to each of them.

We first prove an upper bound for $b_k$, which turns out to be useful later.
\begin{lemma}\label{lemma:bk upper bound with behavior updates}
For any non-negative integer $k$, the following holds:
\begin{align*}
  b_k
  &\leq \gA_{k-1} \cdots \gA_{0} b_0 + \sum_{l=0}^{k-1} \gA_{k-1} \cdots \gA_{l+1} \pp{\gI - \gamma \gP^{\rho_l} } \varepsilon_l\,,
\end{align*}
where $\gA_{-1} \cdots \gA_{0} = \gI$\,.
\end{lemma}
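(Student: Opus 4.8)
The plan is to reduce the claim to the one-step recursion
\begin{align*}
  b_k \leq \gA_{k-1} b_{k-1} + \pp{\gI - \gamma \gP^{\rho_{k-1}}} \varepsilon_{k-1},
\end{align*}
after which the stated closed form follows by an immediate induction on $k$, the base case $k=0$ holding with equality under the empty-product convention $\gA_{-1}\cdots\gA_0 = \gI$. This mirrors the structure of Lemma~\ref{lemma:bk upper bound fixed behavior}, so the only genuinely new work is establishing the displayed recursion in the setting where the behavior policy changes with $k$.

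To derive it, I would begin from $b_k = Q_k - \gT^{\rho_k} Q_k = \iplr{\mu_k} Q_k - r - \gamma (1-\lambda) \gP^{\pi_k} Q_k$ and use $\pi_k \in \greedy{Q_k}$ together with the monotonicity of $\gP$ to obtain $\gP^{\pi_k} Q_k \geq \gP^{\pi_{k-1}} Q_k$; since this term enters with a negative sign, replacing $\pi_k$ by $\pi_{k-1}$ yields an upper bound on $b_k$. The step absent from the fixed-behavior proof is reconciling the $\mu_k$ hidden inside $\iplr{\mu_k}$ with the $\mu_{k-1}$ used by the operator $\gN_{\lambda}^{\mu_{k-1}, \pi_{k-1}}$ that produced $Q_k$. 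Here I would exploit the mixture update $\mu_k = \alpha \pi_k + (1-\alpha)\mu_{k-1}$, which gives $\gP^{\mu_{k-1}} - \gP^{\mu_k} = \alpha (\gP^{\mu_{k-1}} - \gP^{\pi_k})$; greediness again yields $\gP^{\pi_k} Q_k \geq \gP^{\mu_{k-1}} Q_k$, so that $\iplr{\mu_k} Q_k \leq \iplr{\mu_{k-1}} Q_k$. I expect this bridging inequality to be the main obstacle, though it is mild: it uses only $\alpha \geq 0$, so the hypothesis $\alpha \geq 1-\lambda$ is not needed for this lemma (that constraint enters later, when the products $\gA_{k-1}\cdots\gA_{l+1}$ are controlled by $\zeta$ rather than by $\beta$).

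With both replacements in place we have $b_k \leq \iplr{\mu_{k-1}} Q_k - r - \gamma(1-\lambda)\gP^{\pi_{k-1}} Q_k$, and I would substitute the second form of Lemma~\ref{lemma:different forms of PQL} applied to $Q_k = \gN_{\lambda}^{\mu_{k-1},\pi_{k-1}} Q_{k-1} + \varepsilon_{k-1}$, namely $\iplr{\mu_{k-1}} Q_k = r + \gamma(1-\lambda)\gP^{\pi_{k-1}} Q_{k-1} + \iplr{\mu_{k-1}} \varepsilon_{k-1}$. After the $r$ terms cancel this leaves $\gamma(1-\lambda)\gP^{\pi_{k-1}}(Q_{k-1} - Q_k) + \iplr{\mu_{k-1}}\varepsilon_{k-1}$. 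Finally, the first form of Lemma~\ref{lemma:different forms of PQL} gives $Q_{k-1} - Q_k = \inviplr{\mu_{k-1}} b_{k-1} - \varepsilon_{k-1}$; collecting terms turns the $b_{k-1}$ contribution into exactly $\gA_{k-1} b_{k-1} = \gamma(1-\lambda)\gP^{\pi_{k-1}}\inviplr{\mu_{k-1}} b_{k-1}$, while the error coefficient simplifies, via $\gP^{\rho_{k-1}} = \lambda \gP^{\mu_{k-1}} + (1-\lambda)\gP^{\pi_{k-1}}$, to $\iplr{\mu_{k-1}} - \gamma(1-\lambda)\gP^{\pi_{k-1}} = \gI - \gamma\gP^{\rho_{k-1}}$. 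This is precisely the desired recursion, and unrolling it over $k$ completes the proof.
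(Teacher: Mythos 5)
Your proof is correct, and although it shares the paper's overall skeleton (derive a one-step recursion, then unroll by induction using the monotonicity of $\gA_{k-1}$), it reaches the crucial intermediate bound $b_k \leq \iplr{\mu_{k-1}} Q_k - r - \gamma(1-\lambda)\gP^{\pi_{k-1}} Q_k$ by a genuinely different and in fact stronger route. The paper first discards the mixture via $\gT^{\rho_k}Q_k \geq \gT^{\mu_k}Q_k$, then expands $\gT^{\mu_k}Q_k = r + \gamma\alpha\gP^{\pi_k}Q_k + \gamma(1-\alpha)\gP^{\mu_{k-1}}Q_k$ and must reconstruct the coefficient $\gamma(1-\lambda)$ in front of $\gP^{\pi_k}Q_k$; this is exactly where the hypothesis $\alpha \geq 1-\lambda$ is invoked. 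You instead keep the identity $b_k = \iplr{\mu_k}Q_k - r - \gamma(1-\lambda)\gP^{\pi_k}Q_k$ exact and make two greediness-based replacements: $\gP^{\pi_k}Q_k \geq \gP^{\pi_{k-1}}Q_k$, and the bridging inequality $\gP^{\mu_k}Q_k = \alpha\gP^{\pi_k}Q_k + (1-\alpha)\gP^{\mu_{k-1}}Q_k \geq \gP^{\mu_{k-1}}Q_k$, which needs only $\alpha \in [0,1]$. From there your substitutions of the two forms of Lemma~\ref{lemma:different forms of PQL} and the simplification of the error coefficient to $\gI - \gamma\gP^{\rho_{k-1}}$ coincide with the paper's computation. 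The payoff of your route is that the lemma holds for every mixture coefficient $\alpha \in [0,1]$ rather than only $\alpha \geq 1-\lambda$, which bears directly on the paper's own conjecture (stated in the conclusion) that this assumption can be lifted from Theorem~\ref{theorem:error propagation of PQL with behavior policy updates}. One small correction to your parenthetical: in the paper, $\alpha \geq 1-\lambda$ is used precisely in this lemma's proof (and re-used in Lemma~\ref{lemma:dk upper bound with behavior updates}, which recycles this lemma's intermediate step), not in controlling the products $\gA_{k-1}\cdots\gA_{l+1}$ --- those are bounded by $\beta^{k-l-1}$ in Corollary~\ref{corollary:bk upper bound with behavior updates}, and the rate $\zeta$ instead emerges from the operator $(1-\alpha)\gI + \alpha\gamma\gP^*$ appearing in Lemma~\ref{lemma:dk upper bound with behavior updates}.
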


\begin{proof}
Because $\pi_k \in \greedy{Q_k}$,
\begin{align*}
    \gT^{\rho_k} Q_k = \lambda \gT^{\mu_k} Q_k + (1-\lambda) \gT^{\pi_k} Q_k \geq \gT^{\mu_k} Q_k.
\end{align*}
Therefore, $b_k \leq Q_k - \gT^{\mu_k} Q_k$. By the assumption on $\mu_k$,
\begin{align*}
    \gT^{\mu_k} Q_k
    &= r + \gamma (1-\alpha) \gP^{\mu_{k-1}} Q_k + \gamma \alpha \gP^{\pi_k} Q_k
    \\
    &= r + \gamma \lambda \gP^{\mu_{k-1}} Q_k + \gamma (1-\lambda) \gP^{\pi_k} Q_k + \gamma \pp{\alpha - (1 - \lambda)} \pp{\gP^{\pi_k} Q_k - \gP^{\mu_{k-1}} Q_k}
    \\
    &\geq r + \gamma \lambda \gP^{\mu_{k-1}} Q_k + \gamma (1-\lambda) \gP^{\pi_k} Q_k
    \\
    &\geq r + \gamma \lambda \gP^{\mu_{k-1}} Q_k + \gamma (1-\lambda) \gP^{\pi_{k-1}} Q_k\,,
\end{align*}
where the third line follows since $\alpha \geq 1-\lambda$. Consequently
\begin{align*}
    b_k
    \leq Q_k - r - \gamma \lambda \gP^{\mu_{k-1}} Q_k - \gamma (1-\lambda) \gP^{\pi_{k-1}} Q_k
    = \iplr{\mu_{k-1}} Q_k - r - \gamma (1-\lambda) \gP^{\pi_{k-1}} Q_k\,.
\end{align*}
From Lemma~\ref{lemma:different forms of PQL}, we may deduce that
\begin{align*}
    b_k
    &\leq r + \gamma (1-\lambda) \gP^{\pi_{k-1}} Q_{k-1} + \pp{\gI - \gamma \lambda \gP^{\mu_{k-1}} } \varepsilon_{k-1} - r - \gamma (1-\lambda) \gP^{\pi_{k-1}} Q_k
    \\
    &= \gamma (1-\lambda) \gP^{\pi_{k-1}} \pp{ Q_{k-1} - Q_k } + \iplr{\mu_{k-1}} \varepsilon_{k-1}
    \\
    &= \gA_{k-1} \pp{ Q_{k-1} - \gT^{\rho_{k-1}} Q_{k-1} } - \gamma (1-\lambda) \gP^{\pi_{k-1}} \varepsilon_{k-1} + \iplr{\mu_{k-1}} \varepsilon_{k-1}
    \\
    &= \gA_{k-1} \underbrace{\pp{ Q_{k-1} - \gT^{\rho_{k-1}} Q_{k-1} }}_{=b_{k-1}} + \pp{\gI - \gamma \gP^{\rho_{k-1}} } \varepsilon_{k-1},
\end{align*}
where the last line follows from the definition of $\rho_{K-1} = \lambda \mu_{K-1} + (1-\lambda) \pi_{K-1}$. Therefore, by induction on $k$, we may deduce that
\begin{align*}
  b_k
  &\leq \gA_{k-1} \cdots \gA_{0} b_0 + \sum_{l=0}^{k-1} \gA_{k-1} \cdots \gA_{l+1} \pp{\gI - \gamma \gP^{\rho_l} } \varepsilon_l\, .
\end{align*}
This concludes the proof.
\end{proof}

We use a simple corollary of this lemma, derived based on the monotonicity of $(\gA_k)_{k\geq0}$ and $(\gP^{\rho_k})_{k\geq0}$.
\begin{corollary}\label{corollary:bk upper bound with behavior updates}
For any non-negative integer $k$, the following holds:
\begin{align*}
  b_k
  &\leq \beta^k \linf{b_0} \const + (1+\gamma)\sum_{l=0}^{k-1} \beta^{k-l-1} \linf{\varepsilon_l} \const := \bar{b}_k\, .
\end{align*}
\end{corollary}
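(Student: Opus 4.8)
The plan is to pass from the exact operator bound of Lemma~\ref{lemma:bk upper bound with behavior updates} to the scalar bound of the corollary by evaluating every operator product on the constant function $\const$, exploiting that each $\gA_j$ is a monotone \emph{linear} operator satisfying $\gA_j \const = \beta \const$. First I would establish this last identity: since $\inviplr{\mu_j} \const = \sum_{t \geq 0} (\gamma\lambda)^t \const = \const / (1-\gamma\lambda)$ and $\gP^{\pi_j} \const = \const$, the definition $\gA_j = \gamma(1-\lambda)\gP^{\pi_j}\inviplr{\mu_j}$ gives $\gA_j \const = \frac{\gamma(1-\lambda)}{1-\gamma\lambda}\const = \beta\const$. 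By linearity this telescopes across products, so that $\gA_{k-1}\cdots\gA_{l+1}\const = \beta^{k-l-1}\const$ for all $l$, and in particular $\gA_{k-1}\cdots\gA_0\const = \beta^k\const$ (with the empty-product convention $\gA_{k-1}\cdots\gA_k = \gI$ handling the boundary term).

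With this in hand, I would bound the two contributions of Lemma~\ref{lemma:bk upper bound with behavior updates} separately. For the homogeneous term, monotonicity together with $b_0 \leq \linf{b_0}\const$ gives $\gA_{k-1}\cdots\gA_0 b_0 \leq \linf{b_0}\,\gA_{k-1}\cdots\gA_0 \const = \beta^k \linf{b_0}\const$. For each summand of the error term, the one point requiring care is the bound on $(\gI - \gamma\gP^{\rho_l})\varepsilon_l$: using $\varepsilon_l \leq \linf{\varepsilon_l}\const$ for the first piece and $-\gamma\gP^{\rho_l}\varepsilon_l \leq \gamma\linf{\varepsilon_l}\const$ for the second (the latter because $\gP^{\rho_l}$ is monotone linear with $\gP^{\rho_l}\const = \const$, so $\gP^{\rho_l}\varepsilon_l \geq -\linf{\varepsilon_l}\const$), I obtain $(\gI - \gamma\gP^{\rho_l})\varepsilon_l \leq (1+\gamma)\linf{\varepsilon_l}\const$. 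Applying the monotone operator $\gA_{k-1}\cdots\gA_{l+1}$ and using the telescoped identity turns this into $(1+\gamma)\beta^{k-l-1}\linf{\varepsilon_l}\const$; summing over $l$ yields exactly the stated upper bound $\bar{b}_k$.

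The main obstacle here is not genuine difficulty but bookkeeping: one must check that the chain of inequalities is preserved under composition, i.e.\ that an upper bound of the form $f \leq c\const$ is mapped by a monotone linear $\gA_j$ to $\gA_j f \leq c\,\gA_j\const = c\beta\const$, and that iterating this produces the correct powers of $\beta$ rather than an off-by-one exponent. Being careful with the index ranges in the product $\gA_{k-1}\cdots\gA_{l+1}$ (in particular the $l = k-1$ term, where the product is empty and the coefficient is $\beta^0 = 1$) is what guarantees the summation matches the claimed form.
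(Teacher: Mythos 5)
Your proof is correct and takes essentially the same route as the paper's: the paper also derives the corollary from Lemma~\ref{lemma:bk upper bound with behavior updates} by invoking monotonicity (and linearity) of the operators $\gA_j$ and $\gP^{\rho_l}$, replacing $b_0$ by $\linf{b_0}\const$ and $(\gI-\gamma\gP^{\rho_l})\varepsilon_l$ by $(1+\gamma)\linf{\varepsilon_l}\const$, and then collapsing the operator products via $\gA_j\const = \beta\const$. The paper states this step in one line, while you fill in the same bookkeeping (the $\beta$-identity, the $(1+\gamma)$ factor, and the empty-product boundary case) explicitly and correctly.
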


We next prove an upper bound for $Q^* - Q_K$.
\begin{lemma}\label{lemma:dk upper bound with behavior updates}
For any non-negative integer $K$, the following holds:
\begin{align*}
  d_K
  &\leq \zeta^K \linf{Q^* - Q_0} + \sum_{l=0}^{K-1} \zeta^{K-1-l} \pp{ \frac{1-\alpha(1-\gamma\lambda)}{1-\gamma\lambda} \bar{b}_l + \linf{\varepsilon_l} \const }\, ,
\end{align*}
where $\zeta := 1-\alpha + \gamma \alpha$, and $\bar{b}_l$ is defined in Corollary~\ref{corollary:bk upper bound with behavior updates}.
\end{lemma}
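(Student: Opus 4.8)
The plan is to prove the stated bound by establishing a one-step recursion of the form $d_K \leq \zeta\, d_{K-1} + \tfrac{1-\alpha(1-\gamma\lambda)}{1-\gamma\lambda}\,\bar{b}_{K-1} + \linf{\varepsilon_{K-1}}\const$ and then unrolling it, using $d_0 = Q^* - Q_0 \leq \linf{Q^*-Q_0}\const$ to produce the $\zeta^K$ leading term; the claimed inequality then follows by induction on $K$.

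To set up the recursion I would start, as in Appendix~\ref{appendix:proof of PQL err prop with fixed behavior}, from the second form of the PQL operator in Lemma~\ref{lemma:different forms of PQL}, writing $Q_K - \varepsilon_{K-1} = \inviplr{\mu_{K-1}}\pp{r + \gamma(1-\lambda)\gP^{\pi_{K-1}} Q_{K-1}}$ and factoring $Q^* = \inviplr{\mu_{K-1}}\iplr{\mu_{K-1}} Q^*$, so that $d_K + \varepsilon_{K-1} = \inviplr{\mu_{K-1}}\bb{\iplr{\mu_{K-1}} Q^* - r - \gamma(1-\lambda)\gP^{\pi_{K-1}} Q_{K-1}}$. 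Substituting the optimality identity $r = Q^* - \gamma\gP^* Q^*$ collapses the bracket to $\gamma\gP^* Q^* - \gamma\lambda\gP^{\mu_{K-1}} Q^* - \gamma(1-\lambda)\gP^{\pi_{K-1}} Q_{K-1}$. The key difference from the fixed-behavior case is that here $d_k$ is measured against $Q^*$ rather than $Q^{\rho_\dagger}$, so the $\gP^{\mu_{K-1}} Q^*$ term does not cancel cleanly and a behavior-mismatch remainder survives.

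The contraction is then extracted in two moves. First, greediness of $\pi_{K-1}$ with respect to $Q_{K-1}$ (i.e.\ $\gP^{\pi_{K-1}} Q_{K-1} \geq \gP^{\pi} Q_{K-1}$ for every $\pi$), together with monotonicity of $\inviplr{\mu_{K-1}}$, converts the last term into a $\gP^*$-contraction acting on $d_{K-1}$ plus a nonnegative remainder involving $\pp{\gP^* - \gP^{\mu_{K-1}}}Q^*$. Second --- and this is the crux --- I would expand the behavior policy through its update rule $\gP^{\mu_{K-1}} = \alpha\gP^{\pi_{K-1}} + (1-\alpha)\gP^{\mu_{K-2}}$. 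The weight $(1-\alpha)$ carried onto the previous behavior policy is exactly what produces the rate-one ``memory'' in $\zeta = (1-\alpha) + \alpha\gamma$, while the freshly injected greedy mass contributes the factor $\gamma$; the hypothesis $\alpha \geq 1-\lambda$ is precisely what lets the $\alpha\gP^{\pi_{K-1}}$ mass dominate the $(1-\lambda)\gP^{\pi_{K-1}}$ already present in $\rho_{K-1}$, so that the leftover can be folded onto the Bellman residual $b_{K-1} = Q_{K-1} - \gT^{\rho_{K-1}} Q_{K-1}$. I then replace $b_{K-1}$ by its constant upper bound $\bar{b}_{K-1}$ from Corollary~\ref{corollary:bk upper bound with behavior updates}; the coefficient $\tfrac{1-\alpha(1-\gamma\lambda)}{1-\gamma\lambda} = \tfrac{1}{1-\gamma\lambda} - \alpha$ should emerge as the gain of $\inviplr{\mu_{K-1}}$ on constants (namely $1/(1-\gamma\lambda)$) net of the $\alpha$ mass absorbed into the contraction. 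Finally I would pass from the operator inequalities to the scalar $L_\infty$ recursion, using that $\gP^*$ and $\gP^{\pi_k}$ fix $\const$ and that $(1-\gamma\lambda)\inviplr{\mu_{K-1}}$ is a non-expansion, and close by induction on $K$ against $\bar{b}_l$ and $\linf{\varepsilon_l}$.

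I expect the main obstacle to be exactly this crux step. The subtlety is that the effective operator produced after the greedy substitution is \emph{not} monotone (it contains a subtracted $\iplr{\mu_{K-1}}$-type piece), so one cannot simply read off its rate from its action on the constant function $\const$ --- doing so would spuriously suggest the inner rate $\beta = \gamma(1-\lambda)/(1-\gamma\lambda)$ rather than the correct outer rate $\zeta$. The negative part must instead be used to cancel against the behavior-policy expansion, and this cancellation is coupled to the recursion satisfied by the mismatch $\pp{\gP^* - \gP^{\mu_{K-1}}}Q^*$, whose $(1-\alpha)$-weighted memory is what upgrades a naive per-step estimate to the clean $\zeta$-contraction. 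Verifying that these pieces combine to give precisely $\zeta$ and the stated coefficient on $\bar{b}_{K-1}$, rather than some looser mixture, is where the bookkeeping is delicate and where the role of the condition $\alpha \geq 1-\lambda$ becomes essential.
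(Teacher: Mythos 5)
Your high-level plan (establish $d_K \leq \zeta\, d_{K-1} + \tfrac{1-\alpha(1-\gamma\lambda)}{1-\gamma\lambda}\bar{b}_{K-1} + \linf{\varepsilon_{K-1}}\const$ and unroll) targets exactly the right recursion, and your opening moves are valid: the second form of Lemma~\ref{lemma:different forms of PQL}, the substitution $r = Q^* - \gamma\gP^{\pi_*}Q^*$, and the greedy step $\gP^{\pi_{K-1}}Q_{K-1}\geq\gP^{\pi_*}Q_{K-1}$ do yield
\begin{align*}
d_K \leq \gamma(1-\lambda)\inviplr{\mu_{K-1}}\gP^{\pi_*}d_{K-1} + \gamma\lambda\,\inviplr{\mu_{K-1}}\gP\pp{\pi_* - \mu_{K-1}}Q^* + \linf{\varepsilon_{K-1}}\const\,.
\end{align*}
But the proof stops at precisely the step you call the crux, and that step is a genuine gap, not just delicate bookkeeping. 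Every natural way to control the mismatch term along your route fails quantitatively. Writing $\gamma\gP(\pi_*-\mu_{K-1})Q^* = Q^* - \gT^{\mu_{K-1}}Q^* = \pp{\gI - \gamma\gP^{\mu_{K-1}}}d_{K-1} + \pp{Q_{K-1}-\gT^{\mu_{K-1}}Q_{K-1}}$ produces, as you yourself observe, a non-monotone operator acting on $d_{K-1}$ (namely $\lambda\gI - \gamma\lambda(1-\lambda)\inviplr{\mu_{K-1}}\gP^{\mu_{K-1}}$ after combining with the resolvent), so its rate cannot be read off from its action on $\const$, and a triangle-inequality bound gives a coefficient of order $\beta + \lambda(1+\beta)$ on $\linf{d_{K-1}}$, which exceeds $1$ (hence exceeds $\zeta$) as $\gamma\to 1$. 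The alternative crude bound $(\pi_*-\mu_{K-1})Q^* \leq 2\linf{d_{K-1}}\const + \dots$ is no better. Your proposed rescue --- recursing on the $(1-\alpha)$-weighted memory of the mismatch --- couples that recursion back to $d$, and in the test case $\lambda=1$ it produces per-step factors of order $\alpha\gamma/(1-\gamma)$, which blow up rather than contract. The root cause is structural: in your decomposition the entire comparison is wrapped in $\inviplr{\mu_{K-1}}$, so the $(1-\alpha)$ ``memory'' mass is forced through $\gP$-operators and the resolvent, and there is no mechanism left to produce the identity component of the $\zeta$-contraction.

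The paper's proof avoids this by never wrapping the main term in the resolvent. It uses the \emph{first} form of Lemma~\ref{lemma:different forms of PQL} to write $Q_K = \gT^{\rho_{K-1}}Q_{K-1} - \gamma\lambda\gP^{\mu_{K-1}}\inviplr{\mu_{K-1}}b_{K-1} + \varepsilon_{K-1}$, and then the algebraic identity (using $\mu_{K-1} = \alpha\pi_{K-1}+(1-\alpha)\mu_{K-2}$)
\begin{align*}
\gT^{\rho_{K-1}}Q_{K-1} = \alpha\gT^{\pi_{K-1}}Q_{K-1} + (1-\alpha)Q_{K-1} - (1-\alpha)\bb{Q_{K-1} - \lambda\gT^{\mu_{K-2}}Q_{K-1} - (1-\lambda)\gT^{\pi_{K-1}}Q_{K-1}}\,.
\end{align*}
The key point is that the $(1-\alpha)$ mass multiplies \emph{bare} $Q_{K-1}$ --- it passes through the identity, not through $\gP$ or $\inviplr{\mu_{K-1}}$ --- so comparing against $Q^* = \alpha\gT^{\pi_*}Q^* + (1-\alpha)Q^*$ and using greediness yields the monotone operator $(1-\alpha)\gI + \alpha\gamma\gP^{\pi_*}$ acting on $d_{K-1}$, whose gain on constants is exactly $\zeta = 1-\alpha+\alpha\gamma$. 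The bracket is precisely the intermediate quantity $Q_k - \lambda\gT^{\mu_{k-1}}Q_k - (1-\lambda)\gT^{\pi_k}Q_k$ bounded in the proof of Lemma~\ref{lemma:bk upper bound with behavior updates} (this is where $\alpha\geq 1-\lambda$ actually enters), hence it is at most $\bar{b}_{K-1}$; together with $\gamma\lambda\gP^{\mu_{K-1}}\inviplr{\mu_{K-1}}b_{K-1} \leq \tfrac{\gamma\lambda}{1-\gamma\lambda}\bar{b}_{K-1}$, the residual coefficients sum to $(1-\alpha) + \tfrac{\gamma\lambda}{1-\gamma\lambda} = \tfrac{1-\alpha(1-\gamma\lambda)}{1-\gamma\lambda}$, not to the looser constant your route suggests. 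If you replace your crux step with this decomposition, the rest of your argument (induction against $\bar{b}_l$ and $\linf{\varepsilon_l}$) goes through unchanged.
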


\begin{proof}
We note that
\begin{align*}
    Q_K
    &= Q_{K-1} + \inviplr{\mu_{K-1}} \pp{ \gT^{\rho_{K-1}} Q_{K-1} - Q_{K-1} } + \varepsilon_{K-1}
    \\
    &= \gT^{\rho_{K-1}} Q_{K-1} - \gamma \lambda \gP^{\mu_{K-1}} \inviplr{\mu_{K-1}} b_{K-1} + \varepsilon_{K-1}\, .
\end{align*}
Let us focus on deriving a lower bound of $\gT^{\rho_{K-1}} Q_{K-1}$. From the definition of $\rho_{K-1}$ and $\mu_{K-1}$,
\begin{align*}
    \gT^{\rho_{K-1}} Q_{K-1}
    &= (1-\lambda) \gT^{\pi_{K-1}} Q_{K-1} + \lambda \gT^{\mu_{K-1}} Q_{K-1}
    \\
    &= (1-\lambda+\alpha\lambda) \gT^{\pi_{K-1}} Q_{K-1} + (1-\alpha)\lambda \gT^{\mu_{K-2}} Q_{K-1}
    \\
    &= (1-\lambda+\alpha\lambda) \gT^{\pi_{K-1}} Q_{K-1} - (1-\alpha) (1-\lambda) \gT^{\pi_{K-1}} Q_{K-1} + (1-\alpha) Q_{K-1} 
    \\
    &\hspace{9em}+ (1-\alpha)\bb{ \lambda \gT^{\mu_{K-2}} Q_{K-1} + (1-\lambda) \gT^{\pi_{K-1}} Q_{K-1} - Q_{K-1}}
    \\
    &= \alpha \gT^{\pi_{K-1}} Q_{K-1} + (1-\alpha) Q_{K-1} - (1-\alpha) \bb{ Q_{K-1} - \lambda \gT^{\mu_{K-2}} Q_{K-1} - (1-\lambda) \gT^{\pi_{K-1}} Q_{K-1}}\,.
\end{align*}
Recall that the first step of proving Lemma~\ref{lemma:bk upper bound with behavior updates} is showing that $b_k \leq Q_k - \lambda \gT^{\mu_{k-1}} Q_k - (1-\lambda) \gT^{\pi_k} Q_k$. Therefore the upper bound of $b_{K-1}$ in the lemma can serve as an upper bound of $Q_{K-1} - \lambda \gT^{\mu_{K-2}} Q_{K-1} - (1-\lambda) \gT^{\pi_{K-1}} Q_{K-1}$ too. Accordingly,
\begin{align*}
    Q^* - Q_K
    &\leq Q^* - \alpha \gT^{\pi_{K-1}} Q_{K-1} - (1-\alpha) Q_{K-1} + \frac{1-\alpha(1-\gamma\lambda)}{1-\gamma\lambda} \bar{b}_{K-1} + \linf{\varepsilon_{K-1}} \const
    \\
    &\leq \bb{(1-\alpha) \gI + \alpha \gamma \gP^*} \pp{Q^* - Q_{K-1}} + \frac{1-\alpha(1-\gamma\lambda)}{1-\gamma\lambda} \bar{b}_{K-1} + \linf{\varepsilon_{K-1}} \const\,.
\end{align*}
By induction on $K$, we deduce that
\begin{align*}
    Q^* - Q_K
    &\leq \bb{(1-\alpha) \gI + \alpha \gamma \gP^*}^K \pp{Q^* - Q_0} + \sum_{l=0}^{K-1} \zeta^{K-1-l} \pp{ \frac{1-\alpha(1-\gamma\lambda)}{1-\gamma\lambda} \bar{b}_l + \linf{\varepsilon_l} \const }
    \\
    &\leq \zeta^K \linf{Q^* - Q_0} \const + \sum_{l=0}^{K-1} \zeta^{K-1-l} \pp{ \frac{1-\alpha(1-\gamma\lambda)}{1-\gamma\lambda} \bar{b}_l + \linf{\varepsilon_l} \const }\,.
\end{align*}
This concludes the proof.
\end{proof}

Now we are ready to prove an upper bound for $V^* - V^{\pi_K}$. Note that from Corollary~\ref{corollary:bk upper bound with behavior updates} and Lemma~\ref{lemma:dk upper bound with behavior updates}
\begin{align*}
    V^* - V^{\pi_K}
    &= \pi_* Q^* - \pi_K Q_K + \pi_K Q_K - V^{\pi_K}
    \\
    &\leq \pi_* \pp{ Q^* - Q_K } + \pi_K Q_K - \pi_K Q^{\pi_K}
    \\
    &= \pi_* d_K + \pi_K \pp{\gI - \gamma \gP^{\pi_K}}^{-1} b_K
    \\
    &\leq \zeta^K \linf{Q^* - Q_0} \const + \sum_{l=0}^{K-1} \zeta^{K-1-l} \pp{ \frac{1-\alpha(1-\gamma\lambda)}{1-\gamma\lambda} \bar{b}_l + \linf{\varepsilon_l} \const } + \frac{1}{1-\gamma} \bar{b}_K\,.
\end{align*}
We simplify $\sum_{l=0}^{K-1} \zeta^{K-1-l} \bar{b}_l$ as follows:
\begin{align*}
    \sum_{l=0}^{K-1} \zeta^{K-1-l} \bar{b}_l
    &= \sum_{l=0}^{K-1} \zeta^{K-1-l} \beta^l \linf{b_0} \const + \sum_{l=0}^{K-1} \zeta^{K-l-1} (1+\gamma) \sum_{m=0}^{l-1} \beta^{l-m-1} \linf{\varepsilon_m} \const
    \\
    &= \frac{\zeta^K - \beta^K}{\zeta - \beta} \linf{b_0} \const + (1+\gamma) \sum_{m=0}^{K-2} \sum_{l=m+1}^{K-1} \zeta^{K-l-1} \beta^{l-m-1} \linf{\varepsilon_m} \const
    \\
    &= \frac{\zeta^K - \beta^K}{\zeta - \beta} \linf{b_0} \const + (1+\gamma) \sum_{m=0}^{K-2} \sum_{l=0}^{K-m-2} \zeta^{K-m-l-2} \beta^l \linf{\varepsilon_m} \const
    \\
    &= \frac{\zeta^K - \beta^K}{\zeta - \beta} \linf{b_0} \const + (1+\gamma) \sum_{m=0}^{K-2} \frac{\zeta^{K-m-1} - \beta^{K-m-1}}{\zeta - \beta} \linf{\varepsilon_m} \const\,,
\end{align*}
where the last line follows from
\begin{align*}
    \sum_{l=0}^{K-m-2} \zeta^{K-m-l-2} \beta^l
    = \zeta^{K-m-2} \sum_{l=0}^{K-m-2} \pp{\frac{\beta}{\zeta}}^l
    = \zeta^{K-m-2} \frac{1-\pp{\frac{\beta}{\zeta}}^{K-m-1}}{1-\frac{\beta}{\zeta}}
    = \frac{\zeta^{K-m-1} - \beta^{K-m-1}}{\zeta - \beta}\,.
\end{align*}
Using this result and
\begin{align*}
    \zeta - \beta
    = 1 - \alpha + \alpha \gamma - \frac{\gamma (1-\lambda)}{1-\gamma\lambda}
    = 1 - \frac{\gamma (1-\lambda)}{1-\gamma\lambda} - \alpha (1-\gamma)
    = \frac{1-\gamma}{1-\gamma\lambda} - \alpha (1-\gamma)
    = \frac{(1-\gamma) (1-\alpha(1-\gamma\lambda))}{1-\gamma\lambda}\,,
\end{align*}
we deduce that 
\begin{align*}
    V^* - V^{\pi_K}
    &\leq \zeta^K \linf{Q^* - Q_0} \const + \frac{\zeta^K - \beta^K}{1-\gamma} \linf{b_0} \const
    \\
    &\hspace{5em}+ (1+\gamma) \sum_{l=0}^{K-2} \frac{\zeta^{K-l-1} - \beta^{K-l-1}}{1-\gamma} \linf{\varepsilon_l} \const + \sum_{l=0}^{K-1} \zeta^{K-1-l} \linf{\varepsilon_l} \const + \frac{1}{1-\gamma} \bar{b}_K
    \\
    &= \zeta^K \linf{Q^* - Q_0}\const + \frac{\zeta^K}{1-\gamma} \linf{b_0} \const + \sum_{l=0}^{K-1} \frac{2 \zeta^{K-l-1}}{1-\gamma} \linf{\varepsilon_l}\const\,,
\end{align*}
where we used $1 + (1+\gamma) / (1-\gamma) = 2 / (1-\gamma)$. Because $V^* - V^{\pi_K} \geq 0$ and the right hand side is independent of a state,
\begin{align*}
  \linf{V^* - V^{\pi_K}} \leq \zeta^K \linf{Q^* - Q_0} + \frac{\zeta^K}{1-\gamma} \linf{b_0} + \sum_{l=0}^{K-1} \frac{2 \zeta^{K-l-1}}{1-\gamma} \linf{\varepsilon_l}.
\end{align*}
This concludes the proof.

\section{Details on Maximum-entropy RL}
\label{appendix:maxentrl}

The maximum-entropy RL \citep{ziebart2008maximum,fox2015taming,asadi2017alternative,haarnoja2017reinforcement,haarnoja2018soft} formulates that the agent maximizes both cumulative rewards and entropy at the same time. In particular, for a fixed $\alpha>0$, let $G_{\text{ent}}(x,a)$ be $\sum_{t=0}^\infty \gamma^t (R_t + \alpha H_t)$ conditional on $X_0=x,A_0=a$ where $H_t$ is the entropy of policy $\pi(\cdot|X_t)$. Define the maximum-entropy Q-function $Q_\text{ent}^\pi(x,a):= \E \bb{r(X_0,A_0) + \gamma G_\text{ent}(X_1,A_1) \mb X_0=x,A_0=a}$. It is then possible to define Bellman operators as well as their multi-step variants as in Section~\ref{section:operators}. Due to space limit, we postpone their details in Appendix~\ref{appendix:maxentrl}.

It is straightforward to extend off-policy Q($\lambda$) actor-critic algorithm to the formulation of maximum-entropy RL \citep{fox2015taming,haarnoja2017reinforcement}. Maximum-entropy actor-critic algorithms also maintain a Q-function $Q_\phi(x,a)$ along with a stochastic policy $\pi_\theta(a|x)$. With off-policy data $(x_t,a_t,r_t)_{t=0}^\infty$, one could modify Equation~\ref{eq:recursive} to recursively compute the Q-function targets as 
\begin{align}
    \hat{Q}_i = r_i + \gamma \hat{V}_\text{ent}(x_{i+1}) +  + \gamma\lambda\left(\hat{Q}_{i+1} - \hat{V}_\text{ent}(x_{i+1}) \right),
    \label{eq:maxent-recursive}
\end{align}
where the value target $\hat{V}_\text{ent}(x_{i+1})=Q_{\phi^-}(x_{i+1},\pi_{\theta^-}(x_{i+1})) + \alpha_{\text{td}} H(\pi_{\theta^-}(\cdot|x_{i+1}))$. Contrasting Equation~\ref{eq:maxent-recursive} and Equation~\ref{eq:recursive}, the major difference is that the Q-function target is augmented with an entropy bonus $\alpha_{\text{td}} H(\pi_{\theta^-}(\cdot|x_{i+1}))$. Given a batch of data $(x_0^{(j)},a_0^{(j))})_{j=1}^B$, The policy is updated via gradient ascent $\theta\leftarrow\theta + \nabla_\theta \frac{1}{B}\sum_{j=1}^V Q_\phi(x_0^{(j)},\pi_\theta(x_0^{(j)}) + \alpha_\text{pol} H(\pi_\theta(\cdot|x_0^{(j)}))$. See Appendix~\ref{appendix:maxentrl} for the pseudocode of the full algorithm.

In theory, here, one should set $\alpha_\text{pol}= \alpha_{\text{td}}=\alpha$ to ensure that the fixed point is unbiased when the collected data are on-policy $\mu=\pi$. However, in practice, we find that large $\alpha_\text{td}$ tends to destabilize the update. In particular, when setting $\alpha_\text{pol}=\alpha_\text{td}=0.1$ chosen as the default hyper-parameter, multi-step SAC does not learn stably. We hypothesize that this is because when $\alpha_\text{td}>0$, an entropy bonus term is added to the target Q-function at each step (over $n\geq 1$ steps), whose numerical scale makes it much more difficult to learn a proper Q-function. 

Instead, we find that a stable alternative is to set $\alpha_\text{td}=0$ except at the last time step, where $\alpha_\text{td}=\alpha=0.2$. This greatly stablizes the update as the intermediate entropy bonus is effectively removed. It is of interest to study how such bonus term affects the performance of multi-step algorithms and how to align the practice more consistently with theory.

\section{Experiments}
\label{appendix:experiment}
 
\subsection{Further details on implementations of Peng's Q($\lambda$)}

\paragraph{Generic off-policy actor-critic deep RL algorithms.} We provide pseudocode for generic off-policy actor-critic deep RL algorithms in Algorithm 1. These algorithms maintain a Q-function critic $Q_\phi(x,a)$ and a policy $\pi_\theta(x)$. In general, The algorithm collects data with an exploratory behavior policy $\mu$ and saves tuples $(x_t,a_t,r_t)$ into a replay buffer $\mathcal{D}$. At each training iteration, the critic $Q_\phi(x,a)$ is updated by minimizing squared errors against a Q-function target $\mathbb{E}_D\left\lbrack (Q_\phi(x,a)-Q_\text{target}(x,a))^2 \right\rbrack$. The policy is  updated via the deterministic policy gradient $\theta\leftarrow\theta + \alpha \mathbb{E}_\mu \left\lbrack \nabla_\theta  Q_\phi(x,\pi_\theta(x)) \right\rbrack$ \citep{silver2014deterministic}. 

Now, we focus on the definition of targets $Q_\text{target}(x,a)$. Given the transitions $(x,a,r,x')$, one popular choice (see, e.g., \citep{lillicrap2015continuous,fujimoto2018addressing}) is to compute the target as $Q_\text{target}(x,a) = r + \gamma Q_{\phi^-}(x',\pi_{\theta^-}(x'))$ where $\theta^-,\phi^-$ are delayed copies of $\theta,\phi$ respectively \citep{mnih2015human}. An interpretation is that since the policy follows the deterministic gradient through $Q_\phi(x,a)$, it serves as an approximate greedy operator $\pi_\theta(x)\approx \arg\max_a Q_\phi(x,a)$. Note that when $\A$ is continuous, the exact greedy operation $\max_a Q_\phi(x,a)$ is not tractable. In this sense, the above update is an approximate stochastic estimate of the Bellman operator $\mathcal{T}Q(x,a)$.

\begin{algorithm}[h]
\begin{algorithmic}
\REQUIRE policy $\pi_\theta(x)$, critic $Q_\phi(x,a)$, target parameters $\theta^-,\phi^-$ and learning rate $\alpha$ \\
\WHILE{not converged}
\STATE 1. Collect partial trajectories $(x_t,a_t,r_t)_{t=1}^T$ under behavior policy $\mu$.
\STATE 2. Samples $B$ partial trajectories each of length $n$ from the replay buffer $\mathcal{D}$. 
\STATE 3. Construct Q($\lambda$) targets $Q_\text{targ}^{(j)}$. Gradient descent update on critic $\phi\leftarrow\phi-\alpha \frac{1}{B}\nabla_\phi \sum_{j=1}^B (Q_\phi(x_0^{(j)},a_0^{(j)})-Q_\text{targ}^{(j)})^2$.
\STATE 4. Gradient ascent on policy $\theta\leftarrow\theta + \nabla_\theta \frac{1}{B}\sum_{j=1}^B Q_\phi(x_0^{(j)},\pi_\theta(x_0^{(j)}))$.
\STATE 5. Update the target parameters $\theta^-\leftarrow\theta,\phi^-\leftarrow\phi$.
\ENDWHILE
\caption{Off-policy Q($\lambda$) actor-critic algorithm}
\end{algorithmic}
\end{algorithm}

\paragraph{Recursive computations of Q-function targets.}
The target value defined by the Q($\lambda$) operator could be computed recursively. In particular, given an infinite trajectory $(x_0,a_0,r_0,x_1,a_1,r_1,...)$. Assume that we have a Q-function critic $Q_\phi(x,a)$. Let $\hat{Q}_i$ be the target value estimate at time step $i$, then
\begin{align*}
    \hat{Q}_i = r_i + \gamma \max_a Q_\phi(x_i,a) + \gamma \lambda\left(\hat{Q}_{i+1} - \max_a Q_\phi(x_i,a)\right).
\end{align*}
For continuous action space where computing $\max_a Q(x_i,a)$ is difficult, we propose to replace $\max_a Q_\phi(x,a)\approx Q_\phi(x,\pi_\theta(x))$. In addition, in practice, it is not feasible to generate trajectories of an infinite length. For a partial trajectory $(x_0,a_0,r_0,x_1,a_1,r_1,...x_n)$ of length $n$, we bootstrap the Q-function value at the end of the trajectory as $\hat{Q}_n=Q_{\phi^-}(x_i,\pi_{\theta(x_i)^-})$. Then the target at $(x_0,a_0)$ can be recursively computed as
\begin{align}
    \hat{Q}_i = r_i + \gamma Q_{\phi^-}(x_{i+1},\pi_{\theta^-}(x_{i+1})) + \gamma\lambda \left(\hat{Q}_{i+1} - Q_{\phi^-}(x_{i+1},\pi_{\theta^-}(x_{i+1}))\right).
    \label{eq:recursive}
\end{align}

\subsection{Implementations and algorithms for continuous control in deep RL}

\paragraph{Implementation code base.} We adapt the base implementations in OpenAI SpinningUp \citep{SpinningUp2018}. All algorithmic variants adopt default hyper-parameters from the code base. These include learning rates, batch size, replay buffer size,  target network update rules, as well as other missing hyper-parameters.

\paragraph{Deep deterministic policy gradient (DDPG).} DDPG \citep{lillicrap2015continuous} maintains a deterministic policy network $\pi_\theta(a|x)\equiv \pi_\theta(x)$ and a Q-function critic $Q_\phi(x,a)$. The algorithm explores by executing a perturbed policy $a = \epsilon + \pi_\theta(x)$ where $\epsilon\sim\mathcal{N}(0,\sigma^2)$ for $\sigma=0.1$, and then saves the data $(x,a,r,x^\prime)$ into a replay buffer $\mathcal{D}$. At training time, the behavior data is sampled uniformly from the replay buffer $(x_i,a_i,r_i,x_i^\prime)_{i=0}^{B-1}\sim\mathcal{U}(\mathcal{D})$ with $B=100$. The critic is updated via TD($0$), by minimizing: $\frac{1}{B}\sum_{i=0}^{B-1}(Q_\phi(x_i,a_i)-Q_{\text{target}}(x_i,a_i))^2$ where $Q_\text{target}(x_i,a_i) = r_i + \gamma Q_{\phi^\prime}(x_i^\prime,\pi_{\theta^\prime}(x_i^\prime))$, where $\theta^\prime,\phi^\prime$ are delayed versions of $\theta,\phi$ respectively \citep{mnih2015human}. The policy is updated by maximizing $\frac{1}{B}\sum_{i=0}^{B-1} Q_\phi(x_i,\pi_\theta(x_i))$ with respect to $\theta$. Both parameters $\theta,\phi$ are trained with the Adam optimizer \citep{kingma2014adam} with learning rate $\alpha=10^{-4}$. We adopt other default hyper-parameters in \citep{SpinningUp2018}, for details, please refer to the code base.

\paragraph{Twin-delayed deep deterministic policy gradient (TD3).} TD3 \citep{fujimoto2018addressing} adopts the same training pipeline and architectures as DDPG. TD3 also adopts two critic networks $Q_{\phi_1}(x,a),Q_{\phi_2}(x,a)$ with parameters $\phi_1,\phi_2$, in order to minimize the over-estimation bias \citep{hasselt2010double}.

\paragraph{Soft actor-critic (SAC).} SAC \citep{haarnoja2018soft} adopts the same training pipeline and architecture as DDPG and TD3. However, the critical difference is that SAC augments the reward functions with state-wise entropy to discourage the policy from collapsing to a deterministic distribution. It also maintains two networks to counter the over-estimation bias as TD3. Please see Appendix~\ref{appendix:maxentrl} for further backgrounds regarding maximum-entropy RL.

\subsection{Further details on baseline operators (algorithms)}

\paragraph{Uncorrected $n$-step.} We implement uncorrected $n$-step as one of the baseline algorithms \citep{hessel2017rainbow}. This implements the target Q-functions as $\hat{Q}_i=\sum_{j=i}^{i+n-1} \gamma^{j-i}r_j + \gamma^n \max_a Q_\phi(x_{i+n}, a)$ where $Q_\phi$ is the Q-function network. It is \emph{uncorrected} because there is no importance sampling ratios that adjust the discrepancy between the $\pi$ and $\mu$. In continuous control, the maximization operation is replaced by the output of the policy network, i.e. $ Q_{\phi^-}(x_{i+n}, \pi_{\theta^-}(x_{i+n})$. When $n=1$, we recover the one-step baseline of a vanilla baseline algorithm.

\paragraph{Peng's Q($\lambda$).} As briefly discussed in the main paper, we implement a version Peng's Q($\lambda$) with finite horizon $n$. This means that the recursive computation of target defined in Eqn~\ref{eq:recursive} holds until the $n$-th step, where $\hat{Q}_{i+n}=Q_{\phi^-}(x_{i+n},\pi_\theta(x_{i+n}))$. This is because in practice, trajectories are always truncated and of finite lengths, which implies that at the end of trajectories we need to bootstrap directly from the learned Q-functions. 

\paragraph{Retrace.} We implement Retrace \citep{munos2016safe} as a baseline algorithm for comparison. Retrace computes the Q-function target recursively as 
\begin{align}
    \hat{Q}_i = r_i + \gamma Q_{\phi^-}(x_{i+1},\pi_{\theta^-}(x_{i+1})) +  \gamma c_i \left( \hat{Q}_{i+1} -  Q_{\phi^-}(x_{i+1},a_{i+1})  \right).
    \label{eq:recursive-retrace}
\end{align}
Here, the trace coefficient $c_i=\lambda \min(\frac{\pi_\theta(a_i|x_i)}{\mu(a_i|x_i)},\bar{c})$ where $\bar{c}$ is the truncation level. By default, $\lambda=\bar{c}=1$. The motivation is that the variance is controlled by truncating the importance sampling ratio. As a result of the update, TD3 is not directly compatible with the update because it requires $\pi,\mu$ to be both stochastic. We implement a version of TD3 with a stochastic actor: $\pi_\theta(a|x)=\text{tanh}\left(\mu_\theta(x)+\sigma_\theta(x)\cdot\epsilon\right)$, where $\epsilon\sim\mathcal{N}(0,\mathbb{I})$ and $\text{tanh}(x)=(\exp(x)-\exp(-x))/(\exp(x)+\exp(-x)) \in (-1,1)$. The log probability $\log \pi(a|x)$ is still tractable and can be analytically computed (see, e.g., similar computations in \citep{haarnoja2018soft}). The behavior policy $\mu$ is implemented as $\mu(a|x) =\text{tanh}\left(\mu_\theta(x)+\sigma\cdot\epsilon\right)$ with a fixed standard deviation parameter $\sigma=0.1$. These hyper-parameters are chosen such that they match the scale of action perturbation in the original TD3 implementation.

\paragraph{Ctrace.} Ctrace \citep{rowland2020apaptive} is an adaptive off-policy learning algorithm based on Retrace. Its main idea is to adjust the target policy at evaluation time. Instead of evaluating $Q^\pi$, the target Q-function is changed to $Q^{\alpha\pi + (1-\alpha)\mu}$ where $\alpha\in[0,1]$ is a trainable coefficient that interpolates target policy and behavior policy. By changing $\alpha$, Ctrace achieves a trade-off between fixed point bias (against $Q^\pi$) and contraction rate. We always adapt $\alpha$ such that the contraction rate of the overall operator matches a particular value $\Gamma$. Since we implement a version of Ctrace with finite horizon $n$, we use the following modified definition of the contraction rate so that the contraction rate ranges from $0$ to $1$ regardless of $n$: $1 - \frac{1-\gamma}{1-\gamma^n} \E [ \sum_{t=0}^{n-1} \gamma^t \prod_{s=1}^t ( (1-\alpha)+\alpha \rho_s ) ]$, where $\rho_s := \pi_\theta (a_s | x_s) / \mu (a_s | x_s)$. Throughout experiments, we set $\Gamma=0.7$. See \citep{rowland2020apaptive} for more comprehensive description of the algorithm.

\paragraph{Tree-backup.} Similar to Retrace, algorithms such as tree-backup \citep{precup2000eligibility} also preserve the unbiased fixed point of the operator as $Q^\pi$. Tree-backup adopts the same recursive computation as Retrace in Eqn~\ref{eq:recursive-retrace} except that the trace coefficient is $c_i=\pi_\theta(a_i|x_i)$. However, the tree-backup algorithm was developed for discrete action space alone, where the probability $\pi_\theta(a_i|x_i)\in[0,1]$. For continuous control tasks, this is not true because $\pi_\theta(a|x)$ is a density. We observe that naive implementations of tree-backup algorithm leads to very unstable update because of the numerical scale of $\log \pi_\theta(a|x)$. Empirically, we find that the performance of tree-backup to be very poor on continuous control 
tasks and we do not include the results.

\subsection{Further details on the toy example}
At each iteration $t$ of the algorithm, we maintain a Q-function table $Q^{(t)}(x,a)$. Given a sampled trajectory $(x_t,a_t,r_t)_{t=0}^{D-1}$, the operator (e.g. Retrace or Peng's Q($\lambda$)) constructs targets $Q_\text{target}(x,a)$. The Q-functions are updated as $Q^{(t+1)}(x,a)\leftarrow(1-\alpha)Q^{(t)}(x,a)+\alpha Q_\text{target}(x,a)$. Then the policy is updated as $\pi^{(t)}\leftarrow (1-\alpha)\pi + \alpha \pi_g(Q^{(t)}(x,a))$ where $\pi_g(Q^{(t)}(x,a))$ is the greedy policy with respect to $Q^{(t)}(x,a)$. Throughout experiments, the learning rate is fixed $\alpha=0.1$.

When computing the target Q-functions $Q_\text{target}(x,a)$, we apply the recursive computations introduced in previous sections. This is applied to all state-action pairs along sampled trajectories. At each iteration, the algorithm collects $N=1$ trajectory from the MDP.

\subsection{Additional evaluations on standard benchmarks} 

\paragraph{Detailed hyper-parameters.} In the main paper, we use $n=5$ for all multi-step algorithms to cap the length of the partial trajectories. For Peng's Q($\lambda$), we set $\lambda=0.9$ throughout the experiments.

\paragraph{Further results.} See Figure~\ref{fig:standard-td3-appendix} for additional experiments on evaluations over standard benchmarks. We further evaluate TD3 variants over tasks from Bullet physics (B) and OpenAI gym (G). Throughout the experiments, we use $n=5$ for all multi-step algorithms to cap the length of the partial trajectories. For Peng's Q($\lambda$), we set $\lambda=0.7$. Overall, Peng's Q($\lambda$) performs fairly stably, though it does not perferm the best per task. Interestingly, Retrace performs fairly well on Ant(G), which is in sharp contrast to its relatively poor performance across other tasks. We no longer include DDPG as a baseline as it is generally considered a slightly less competitive baseline compared to TD3.

\begin{figure}[h]
    \centering
    \subfigure[Ant(G)]{\includegraphics[keepaspectratio,width=.23\textwidth]{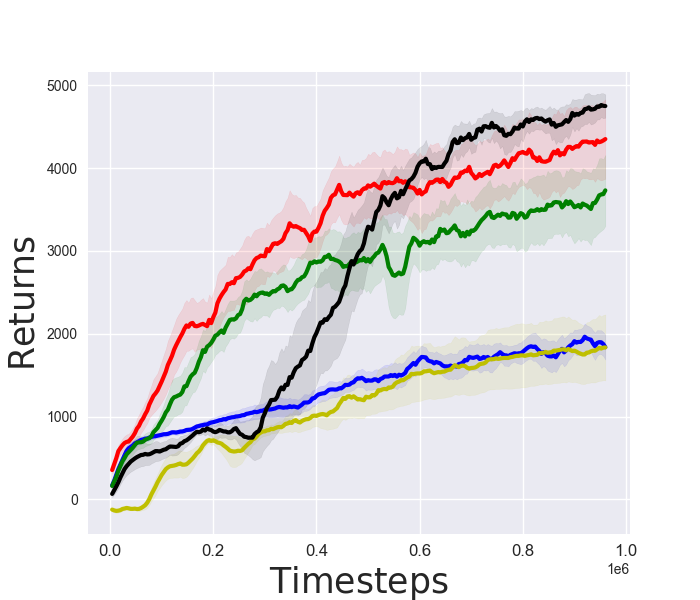}}
    \subfigure[Walker(G)]{\includegraphics[keepaspectratio,width=.23\textwidth]{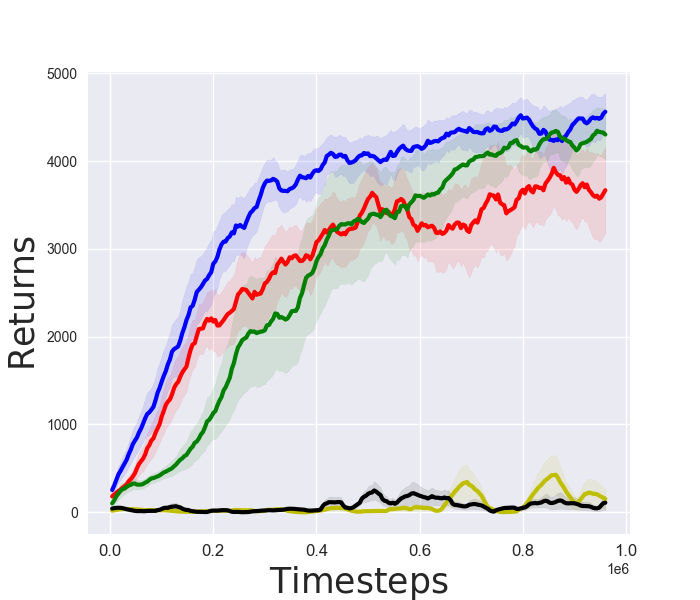}}
    \subfigure[Ant(B)]{\includegraphics[keepaspectratio,width=.23\textwidth]{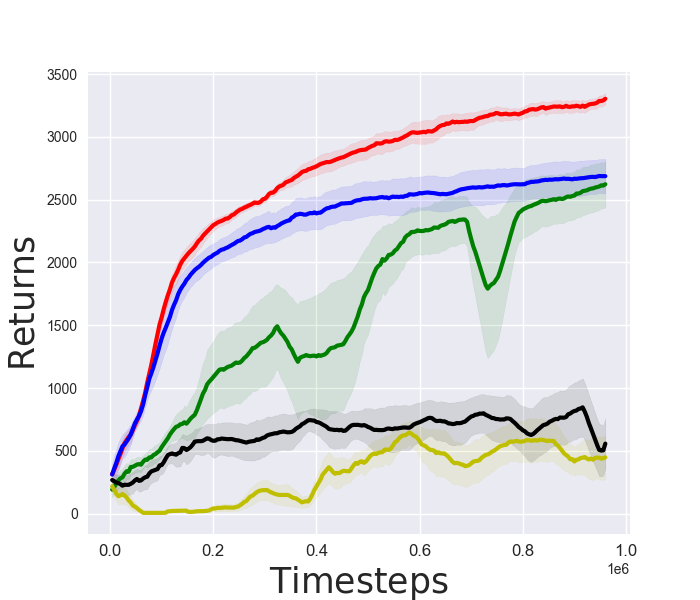}}
    \subfigure[HalfCheetah(B)]{\includegraphics[keepaspectratio,width=.23\textwidth]{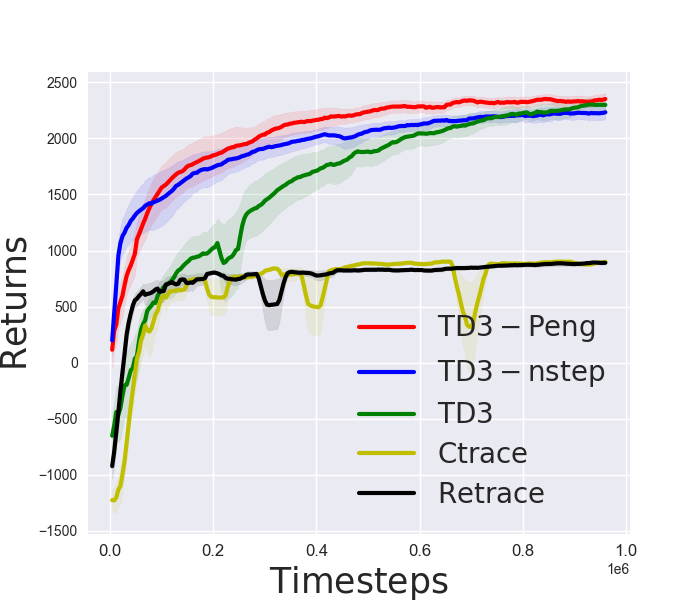}}
    \caption{Evaluation of TD3 baselines over continuous control domains. Each curve corresponds to a baseline algorithm averaged over 5 random seeds. (B) denotes tasks from Bullet physics and (G) denotes tasks from OpenAI gym.}
    \label{fig:standard-td3-appendix}
\end{figure}

\iffalse{
\begin{table*}[]
    \vskip 0.1in
    \begin{center}
    \footnotesize
    \begin{sc}
    \begin{tabular}{c|c|c|c|c}\toprule[1.5pt]
        \bf Tasks & \bf TD3+Peng's Q($\lambda$) & \bf TD3+$n$-step & \bf TD3 & \bf DDPG  \\\midrule
CheetahRun(D) & $\mathbf{814 \pm 23}$ & $\mathbf{797 \pm 19}$ & $\mathbf{821\pm 25}$  & $0.8\pm 0.2$ \\
WalkerWalk(D) & $\mathbf{949 \pm 4}$ & $789 \pm 97$ & $674 \pm 201$ & $929 \pm 11$ \\
WalkerRun(D) & $548 \pm 62$ & $304 \pm 51$ & $\mathbf{666 \pm 15}$ & $519 \pm 143$ \\
WalkerStand(D) & $\mathbf{976\pm 3}$ & $963\pm 5$ & $809 \pm 71$ & $452 \pm 188$ \\
Ant(B) & $\mathbf{2770 \pm 59}$ & $2690 \pm 136$ & $2654 \pm 174$ & $1950 \pm 28$\\
HalfCheetah(B) & $\mathbf{2530 \pm 111}$ & $2258 \pm 58$ & $2299 \pm 56$ & $\mathbf{2445 \pm 73}$ \\
Ant(R) & $\mathbf{2967 \pm 21}$ & $2880\pm 82$ & $2809\pm 81$ & $1823 \pm 78$ \\
Walker2d(R) & $1145\pm 163$ & $868\pm 64$ & $\mathbf{1299 \pm 269}$ & $377 \pm 68$  \\   \bottomrule[1.46pt]
    \end{tabular} \par
    \end{sc}
    \end{center}
    \caption{\yunhaocomment{TODO}}
    \label{table:offpolicy}
\end{table*}
}\fi

\subsection{Additional evaluations on sparse rewards benchmarks} 

\paragraph{Sparse rewards.} We implement  delayed rewards as a form of sparse rewards. Delayed reward environment tests algorithms' capability to tackle delayed feedback in the form of sparse rewards \citep{oh2018self}. In particular, a standard benchmark environment returns dense reward $r_t$ at each step $t$. Consider accumulating the reward over $d$ consecutive steps and return the sum at the end $k$ steps, i.e. $r_t^\prime=0$ if $ t \ \text{mod}\ k \neq 0$ and $r_t^\prime = \sum_{\tau=t-d+1}^t r_\tau$ if $t\ \text{mod}\ d =0$. Throughout the experiments, we set $d=3$.

\paragraph{Detailed hyper-parameters.}We use $n=5$ for all multi-step algorithms to cap the length of the partial trajectories. For Peng's Q($\lambda$), we set $\lambda=0.7$ throughout the experiments.

\paragraph{Further results.} See Figure~\ref{fig:sparse-td3-appendix} for additional experiments on evaluations over standard benchmarks. We further evaluate TD3 variants over tasks from Bullet physics (B) and OpenAI gym (G). Throughout the experiments, we use $n=5$ for all multi-step algorithms to cap the length of the partial trajectories. For Peng's Q($\lambda$), we set $\lambda=0.7$. Overall, Peng's Q($\lambda$) performs fairly stably, though it does not perferm the best per task. Interestingly, consistent with results in Figure~\ref{fig:standard-td3-appendix}, Retrace performs well in Ant(G) with sparse rewards.

\begin{figure}[h]
    \centering
    \subfigure[Ant(G)]{\includegraphics[keepaspectratio,width=.23\textwidth]{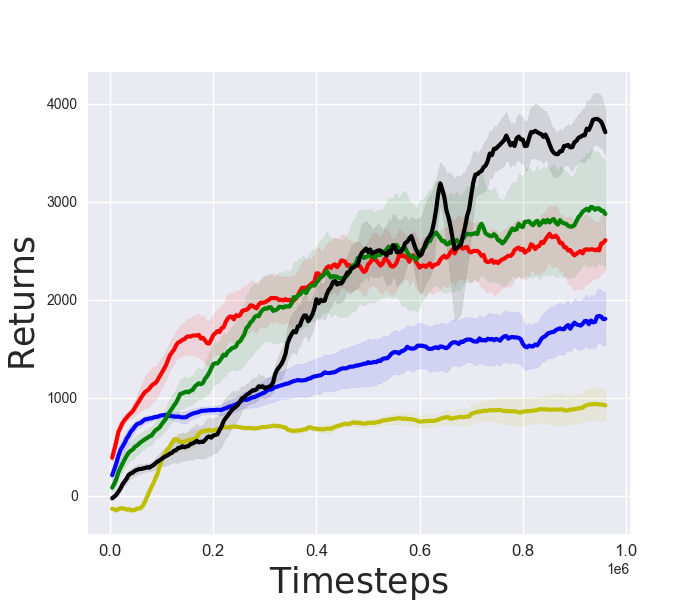}}
    \subfigure[Walker(G)]{\includegraphics[keepaspectratio,width=.23\textwidth]{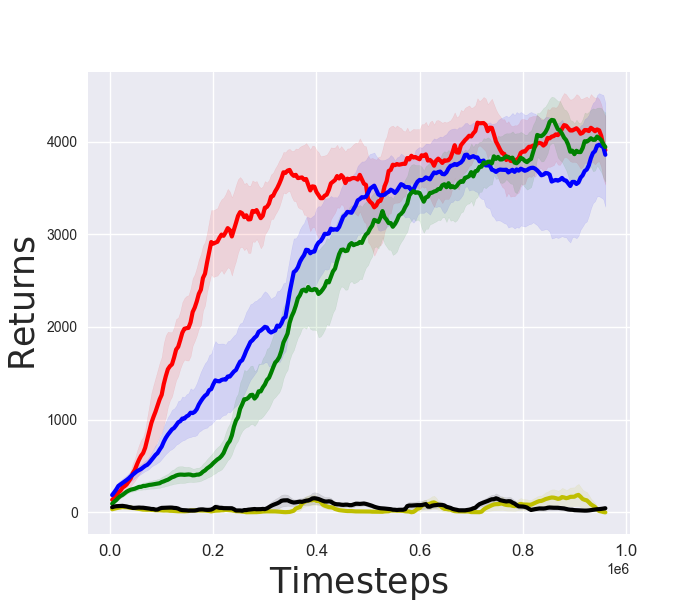}}
    \subfigure[Ant(B)]{\includegraphics[keepaspectratio,width=.23\textwidth]{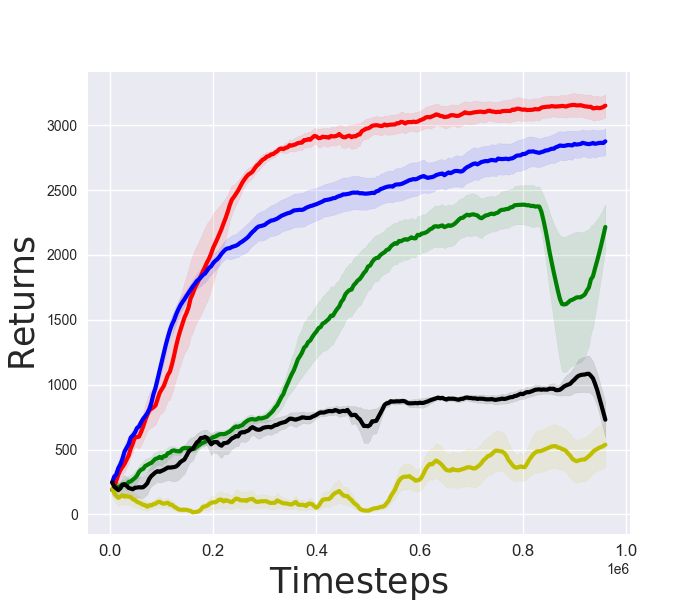}}
    \subfigure[HalfCheetah(B)]{\includegraphics[keepaspectratio,width=.23\textwidth]{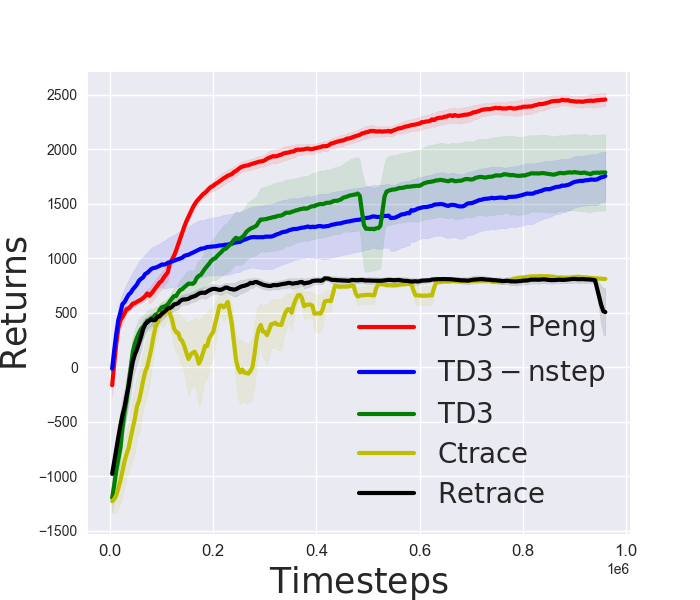}}
    \caption{Evaluation of TD3 baselines over continuous control domains with sparse rewards. Each curve corresponds to a baseline algorithm averaged over 5 random seeds. (B) denotes tasks from Bullet physics and (G) denotes tasks from OpenAI gym.}
    \label{fig:sparse-td3-appendix}
\end{figure}

\subsection{Experiment results on maximum-entropy RL}

\begin{figure}[h]
    \centering
    \subfigure[Ant(G)]{\includegraphics[keepaspectratio,width=.22\textwidth]{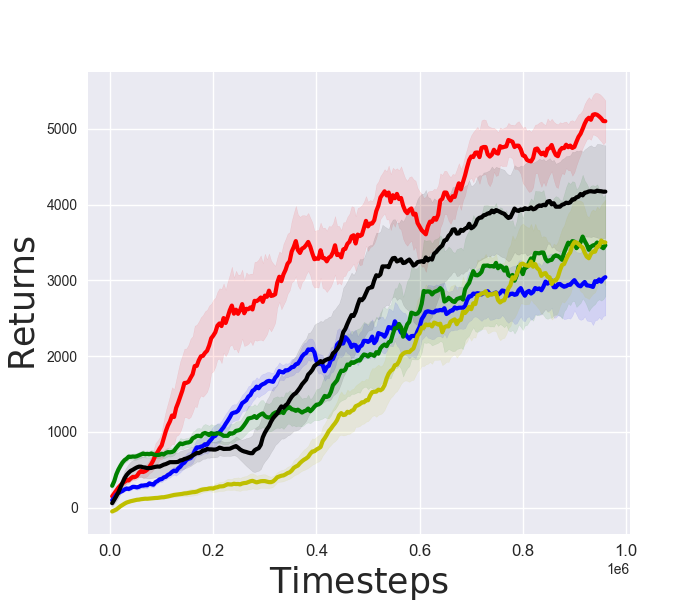}}
    \subfigure[Walker2d(G)]{\includegraphics[keepaspectratio,width=.22\textwidth]{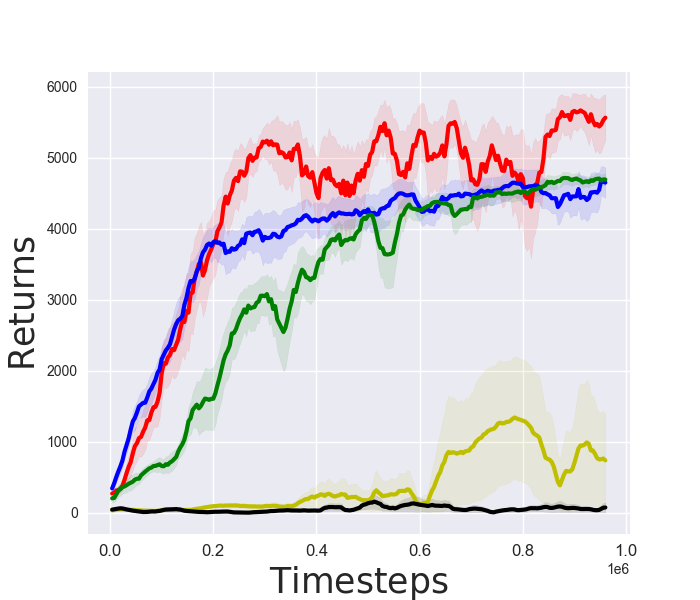}}
    \subfigure[Ant(B)]{\includegraphics[keepaspectratio,width=.22\textwidth]{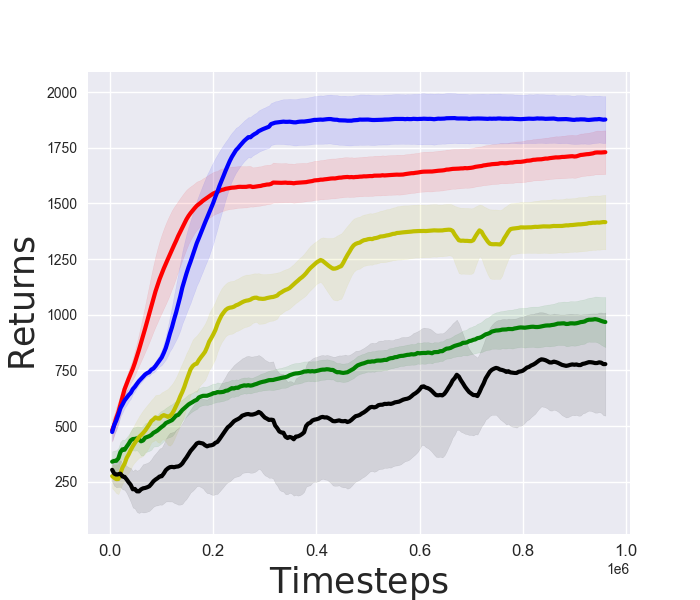}}
    \subfigure[HalfCheetah(B)]{\includegraphics[keepaspectratio,width=.22\textwidth]{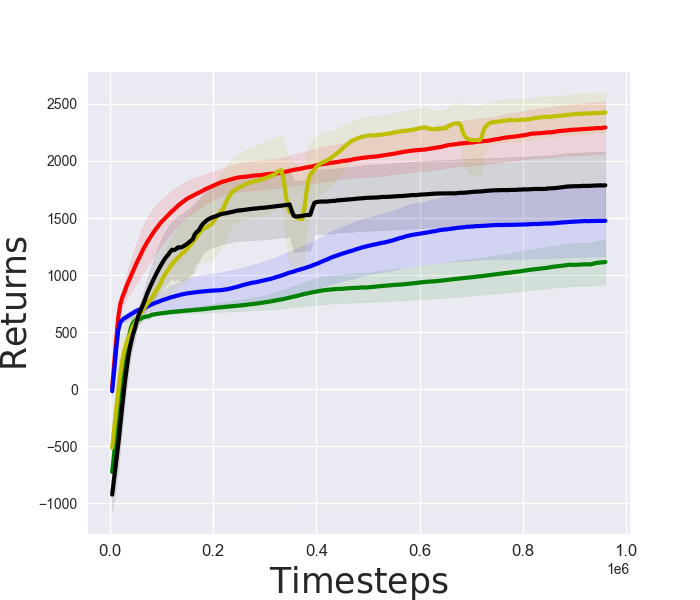}}
    \subfigure[CheetahRun(D)]{\includegraphics[keepaspectratio,width=.22\textwidth]{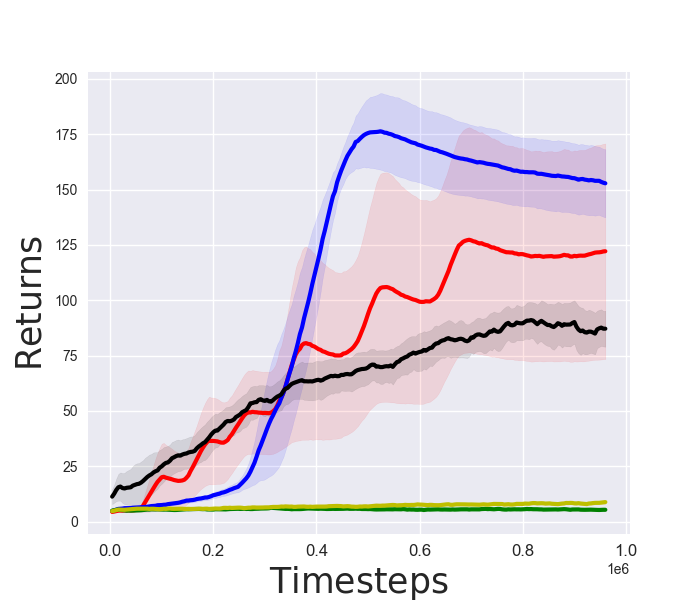}}
    \subfigure[WalkerStand(D)]{\includegraphics[keepaspectratio,width=.22\textwidth]{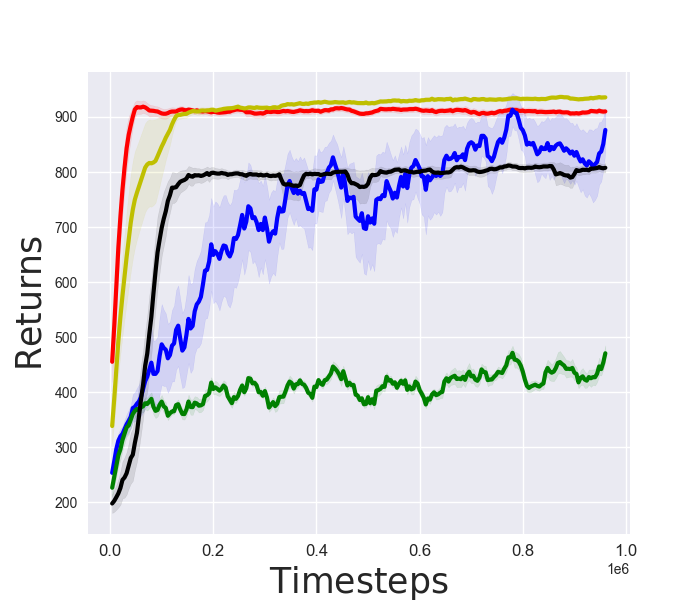}}
    \subfigure[WalkerRun(D)]{\includegraphics[keepaspectratio,width=.22\textwidth]{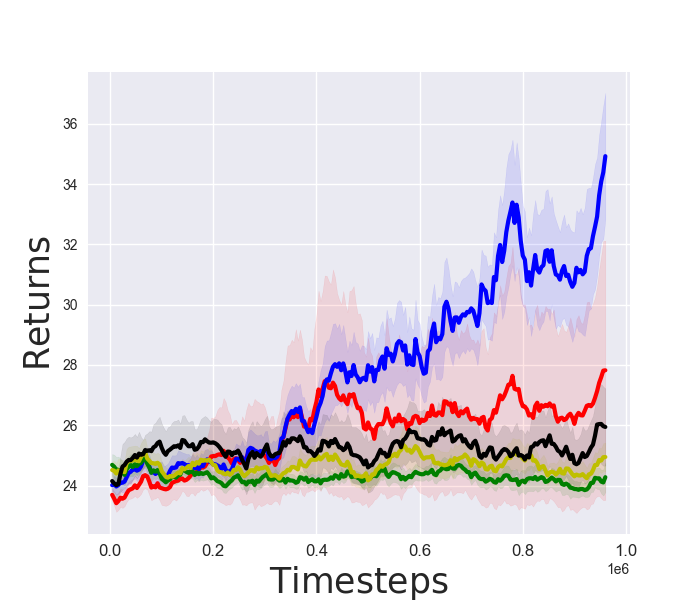}}
    \subfigure[WalkerWalk(D)]{\includegraphics[keepaspectratio,width=.22\textwidth]{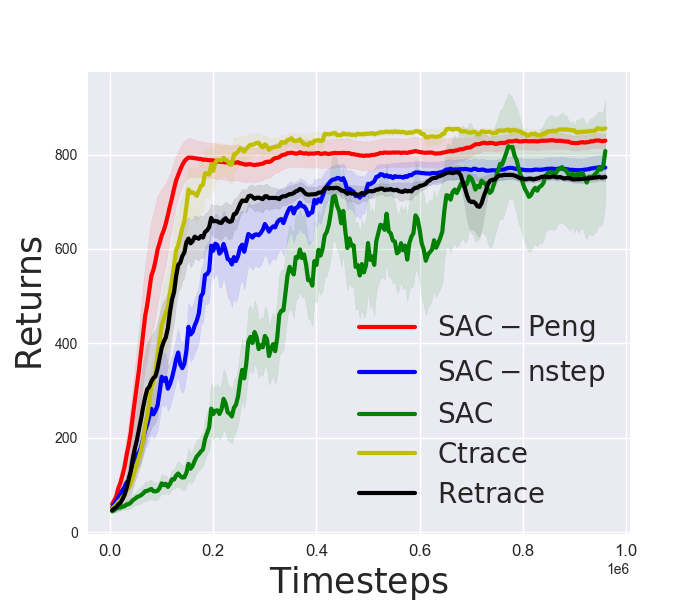}}
    \caption{Evaluation of soft actor-critic (SAC) variants over standard continuous control domains. Each curve corresponds to a baseline algorithm averaged over 5 random seeds. We consider tasks from gym (G), bullet physics (B) and DM control suite (D).}
    \label{fig:sac}
\end{figure}

We build on soft actor-critic (SAC) \citep{haarnoja2018soft} and evaluate algorithmic variants over standard benchmark tasks. For Peng's Q($\lambda$), we use $\lambda=0.7$. In Figure~\ref{fig:sac} we show the results across all selected benchmark tasks. Peng's Q($\lambda$) generally performs more stably than other baselien variants. This is highlighted by the fact that Peng's Q($\lambda$) always ranks as the top two baselines per each task. As an additional empirical observation, we find that SAC generally performs not as well as TD3 on DM control suites. We speculate that this might be because throughout the experiments we use $\alpha=0.2$. An adaptive entropy coefficient might further improve the performance.

\subsection{Ablation on $\lambda$}

In Figure~\ref{fig:lambda}, we show the ablation study on the sensitivity of Peng's Q($\lambda$) to its only hyper-parameter $\lambda$. We choose $\lambda \in \{0.3,0.5,0.7,0.9\}$ and examine the performance of the resulting algorithms over DM control suite (sparse rewards). Overall, we see that the best hyper-parameter is achieved $\lambda\approx 0.7$. When $\lambda$ deviates from this value, its performance is still relatively robust. When $\lambda$ decreases, we see its performance degrades more drastically than when it increases. Finally, it is worth noting that across all our previous evaluations, we always select $\lambda \in \{0.7,0.9\}$ and adopt a single $\lambda$ for benchmark tasks with the same simulation backend. 
This shows the robustness of Peng's Q($\lambda$) in practical applications.

\begin{figure}[h]
    \centering
    \subfigure[CheetahRun(D)]{\includegraphics[keepaspectratio,width=.22\textwidth]{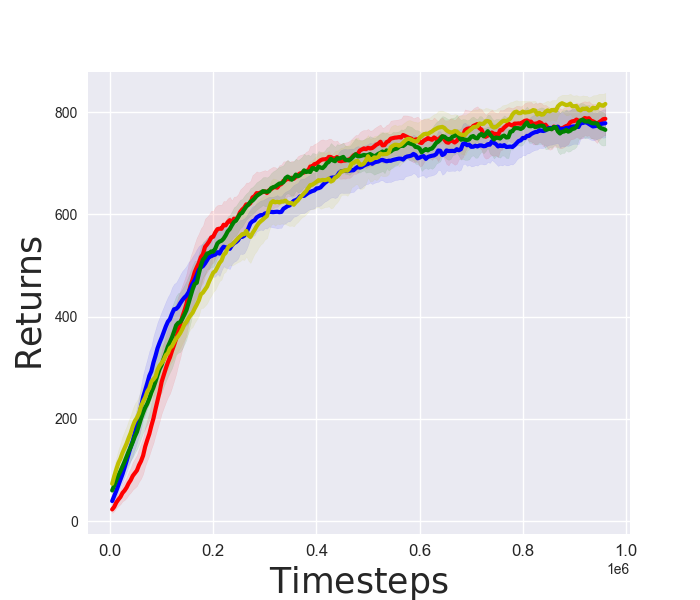}}
    \subfigure[WalkerStand(D)]{\includegraphics[keepaspectratio,width=.22\textwidth]{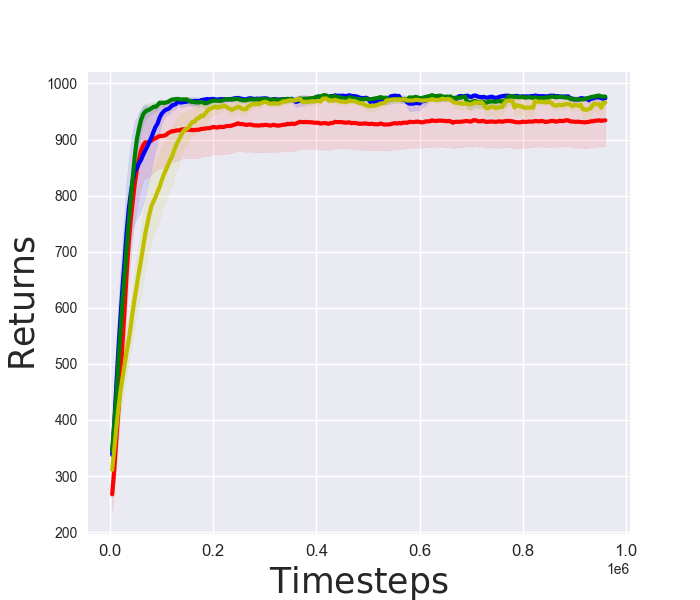}}
    \subfigure[WalkerRun(D)]{\includegraphics[keepaspectratio,width=.22\textwidth]{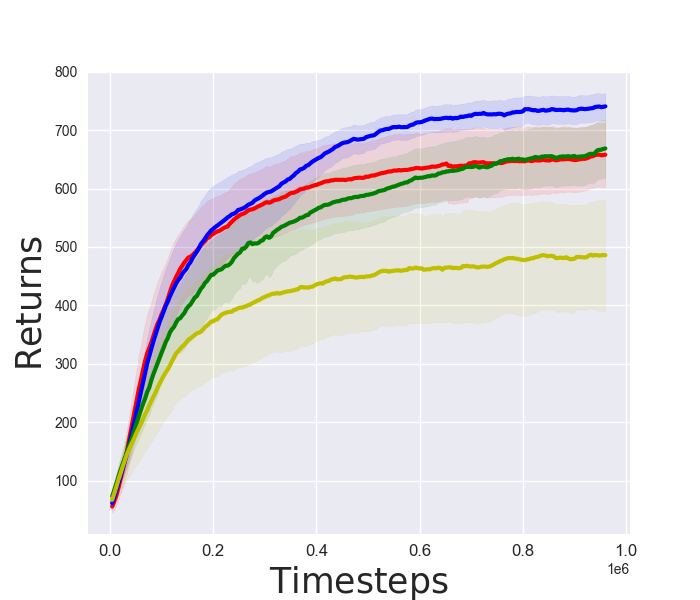}}
    \subfigure[WalkerStand(D)]{\includegraphics[keepaspectratio,width=.22\textwidth]{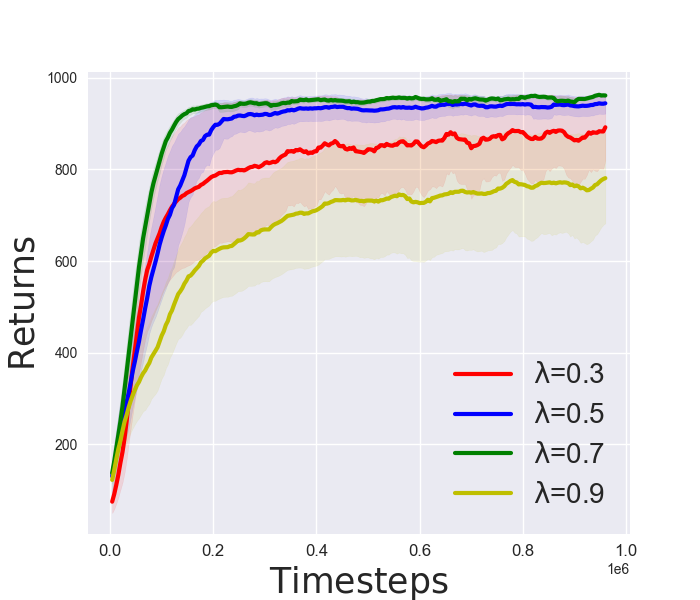}}
    \caption{Ablation study on the sensitivity of Peng's Q($\lambda$) to the hyper-parameter $\lambda$. Each curve corresponds to a choice of $\lambda$ averaged over $5$ random seeds. }
    \label{fig:lambda}
\end{figure}

\end{document}